%% file: iclr2027_conference.tex
\documentclass{article} 
\usepackage{iclr2027_conference,times}

\input{math_commands.tex}

\usepackage[utf8]{inputenc} 
\usepackage[T1]{fontenc}    
\usepackage{hyperref}       
\usepackage{url}            
\usepackage{booktabs}       
\usepackage{amsfonts}       
\usepackage{nicefrac}       
\usepackage{microtype}      
\usepackage{xcolor}         
\usepackage{pdfpages}
\usepackage{graphicx}
\usepackage{placeins}
\usepackage{float}
\usepackage{standalone}
\usepackage{tikz}
\usepackage{subcaption}
\usepackage{wrapfig}
\usepackage{titletoc}
\usepackage{colortbl}
\definecolor{ourscolor}{gray}{0.93}
\newcommand{\ms}[2]{#1{\scriptsize\,$\pm$#2}}    
\newcommand{\best}[1]{\textbf{#1}}
\newcommand{\second}[1]{\underline{#1}}          
\newlength{\panelw}
\newcommand{\rowlab}[1]{\rotatebox[origin=c]{90}{\scriptsize #1}}
\newcommand{\tsnecell}[2]{\includegraphics[width=\panelw]{figures/plots/tsne_#1_#2.png}}
\newcommand{\denscell}[2]{\includegraphics[width=\panelw]{figures/plots/dens_#1_#2.png}}
\usetikzlibrary{arrows.meta,positioning,fit,backgrounds,calc}
\newcommand{\snrfigure}[2][\linewidth]{\begingroup
  \def\snrdataset{#2}\includestandalone[width=#1]{figures/SNR_figure}\endgroup}

\definecolor{cS}{HTML}{534AB7}
\definecolor{cC}{HTML}{0F6E56}
\definecolor{cN}{HTML}{5F5E5A}
\usepackage{pgfplots}\pgfplotsset{compat=1.18}
\usepackage{multirow}
\definecolor{colOurs}{HTML}{D62728}   
\definecolor{colWave}{HTML}{1F77B4}   
\definecolor{colFlow}{HTML}{2CA02C}   
\definecolor{colFour}{HTML}{FF7F0E}   
\definecolor{colDTS} {HTML}{9467BD}   
\definecolor{colSig} {HTML}{7F7F7F}   

\definecolor{cNeutral}{HTML}{5F5E5A}
\definecolor{cCoef}{HTML}{0F6E56}
\definecolor{cModel}{HTML}{534AB7}
\definecolor{cA}{HTML}{BA7517}

\newcommand{\snrnoy}{\def\snrylabel{}}
\newcommand{\snrnolegend}{\def\snrlegend{0}}
\newcommand{\snrtightlegend}{\def\snrlegendsep{0.12cm}\def\snrlegendsample{0.3cm}}
  
\newcommand{\pmstd}[1]{\,{\scriptsize$\pm$#1}}

\tikzset{
  stage/.style={draw, rounded corners=3pt, line width=0.4pt,
                minimum height=11mm, align=center, font=\small, inner sep=3pt},
  neutral/.style={stage, draw=cNeutral!70, fill=cNeutral!8},
  arr/.style={-{Stealth[length=2mm]}, line width=0.4pt, draw=cNeutral},
  attn/.style={-{Stealth[length=2mm]}, line width=0.6pt, draw=cModel!85},
  zoom/.style={draw=cNeutral!45, dashed, rounded corners=4pt, line width=0.4pt},
  lbl/.style={font=\footnotesize, inner sep=1.5pt},
}

\usepgfplotslibrary{groupplots}
\pgfplotsset{
  plotstyle/.style={
    line width=0.7pt,
    mark size=1.6pt,
    every mark/.append style={solid},
  },
  errbars/.style={
    error bars/y dir=both,
    error bars/y explicit,
    error bars/error bar style={black, line width=0.4pt},
    error bars/error mark options={rotate=90, black, mark size=1pt, line width=0.4pt},
  },
}

\makeatletter
\providecommand{\@trackname}{}
\makeatother

\usepackage{hyperref}
\usepackage{url}
\usepackage{cleveref}
\usepackage{amsmath, amssymb, amsthm, mathtools, bm}
\usepackage{algorithm}
\usepackage{algpseudocode}
\usepackage{cleveref}
\usepackage{enumitem}

\theoremstyle{plain}
\newtheorem{theorem}{Theorem}[section]
\newtheorem{proposition}[theorem]{Proposition}

\theoremstyle{definition}
\newtheorem{definition}[theorem]{Definition}

\theoremstyle{remark}
\newtheorem{remark}[theorem]{Remark}

\newcommand{\N}{\mathcal{N}}
\newcommand{\Z}{\mathbb{Z}}
\newcommand{\pwav}{p_{\mathrm{wav}}}  
\newcommand{\W}{\mathbf{W}}           
         
\newcommand{\vth}{v_\theta}
\newcommand{\dd}{\mathrm{d}}
\newcommand{\diag}{\operatorname{diag}}
\newcommand{\Id}{\mathbf{I}}
\newcommand{\DWT}{\mathrm{DWT}}
\newcommand{\IDWT}{\mathrm{IDWT}}

\title{Wavelet Flow Matching for Time Series}

\author{\normalfont
\begin{tabular*}{\dimexpr\textwidth-2\tabcolsep\relax}[t]{@{\extracolsep{\fill}}lll@{}}
\textbf{Lucas Poinsignon}\thanks{Equal contribution. Correspondence to \texttt{\{lpoinsignon,jogoncalves,sruiperez\}@ethz.ch}.}
  & \textbf{Jorge da Silva Gonçalves}\footnotemark[1] & \textbf{Samuel Ruip\'erez-Campillo}\footnotemark[1] \\
Dept. of Computer Science & Dept. of Computer Science & Dept. of Computer Science \\
ETH Zurich, Switzerland & ETH Zurich, Switzerland & ETH Zurich, Switzerland \\[17pt]
& \textbf{Julia E. Vogt} & \\
& Dept. of Computer Science & \\
& ETH Zurich, Switzerland &
\end{tabular*}}

\newlength{\qpanelsep}
\newlength{\qlabelw}
\newlength{\qpanelw}
\newsavebox{\qdenslab}
\newlength{\qdensh}
\newcommand{\densylabel}[1]{%
  \settoheight{\qdensh}{\includegraphics[width=\qpanelw,clip,trim=20bp 0bp 5bp 0bp]{#1}}%
  \sbox{\qdenslab}{\resizebox{!}{0.40\qdensh}{%
      \rotatebox[origin=c]{90}{Probability Density}}}%
  \makebox[\qlabelw][c]{%
    \raisebox{-\dimexpr 3pt+0.5\qdensh+0.5\ht\qdenslab-0.5\dp\qdenslab\relax}%
      [0pt][0pt]{\usebox{\qdenslab}}}}
\newcommand{\qpanel}[2]{%
  \begin{subfigure}[t]{\qpanelw}%
    \centering
    \includegraphics[width=\linewidth]{figures/plots/tsne_#1.png}\\[2pt]%
    \includegraphics[width=\linewidth,clip,trim=20bp 0bp 5bp 0bp]{figures/plots/dens_#1.png}%
    \caption{#2}%
  \end{subfigure}%
}

\newlength{\rowlabw}
\newlength{\pdlabw}
\newlength{\gridh}
\newsavebox{\gridlabbox}
\newcommand{\setgridh}[1]{%
  \settoheight{\gridh}{\includegraphics[width=\panelw,clip,trim=20bp 0bp 5bp 0bp]{#1}}}
\newcommand{\vcenterlab}[2]{%
  \sbox{\gridlabbox}{#2}%
  \makebox[#1][c]{%
    \raisebox{\dimexpr0.5\gridh-0.5\ht\gridlabbox+0.5\dp\gridlabbox\relax}%
      [0pt][0pt]{\usebox{\gridlabbox}}}}
\renewcommand{\rowlab}[1]{%
  \vcenterlab{\rowlabw}{\rotatebox[origin=c]{90}{\scriptsize #1}}}
\newcommand{\pdlab}{%
  \vcenterlab{\pdlabw}{\resizebox*{!}{0.80\gridh}{%
      \rotatebox[origin=c]{90}{Probability Density}}}}
\renewcommand{\denscell}[2]{%
  \includegraphics[width=\panelw,clip,trim=20bp 0bp 5bp 0bp]{figures/plots/dens_#1_#2.png}}
      
\newcommand{\settsneh}[1]{\settoheight{\gridh}{\includegraphics[width=\panelw]{#1}}}

\iclrfinalcopy 
\begin{document}

\maketitle
\fancyhead{} 

\begin{abstract}

Synthetic time series are increasingly used for data augmentation, privacy-preserving data sharing, and downstream model development, yet faithfully reproducing both multi-scale temporal structure and cross-channel dependencies remains challenging. We study multivariate time-series generation through flow matching in the wavelet domain. By operating on multilevel discrete wavelet coefficients rather than directly in the time domain, the model represents coarse structure and progressively finer details at separate scales. Their naturally different variances further induce an implicit coarse-to-fine generative process without requiring an explicit multi-scale schedule. Since the transform acts independently on each channel, we pair it with a channel-token transformer whose attention directly models cross-channel dependencies. Across seven benchmark datasets and four sequence lengths, our method is best or tied on a majority of dataset–metric combinations, with the largest and most consistent improvements in Context-FID and discriminative score.

\end{abstract}

\section{Introduction}
\label{sec:introduction}

Time series are central to domains including energy, finance, neuroscience, healthcare, and robotics, yet real data can be scarce, expensive to collect, privacy-sensitive, or imbalanced. Generative models address this by learning the data distribution and producing synthetic samples
\citep{yoon2019timegan,desai2021timevae,yuan2024diffusionts}. Multivariate time-series generation is challenging for two structural reasons: (a) signals contain structure at multiple temporal scales, from slow trends to high-frequency transients, and (b) their channels can exhibit strong and heterogeneous dependencies. A successful generative model must capture both.

\paragraph{Representation and generative dynamics.}
Most diffusion models for time series operate directly in the sampled time domain, while recent work has shown benefits from generating in transformed representations such as Fourier coefficients, log-signature embeddings, and wavelet coefficients \citep{crabbe2024fourierdiffusion,barancikova2025sigdiffusion,wang2025waveletdiff,ruiperezcampillo2026cyclostationaryphaseconditioningmedical}. The discrete wavelet transform (DWT) is particularly appealing because it is invertible, localized jointly in time and frequency, and provides an explicit multi-resolution representation of the signal
\citep{mallat1989theory,daubechies1992ten}. In parallel, flow matching \citep{lipman2023flow,liu2023rectified,albergo2023building} provides a simple continuous transport between noise and data and supports deterministic ODE sampling. Existing time-series flow-matching models largely operate in the time domain~\citep{hu2024flowts}, leaving open how a multi-resolution representation changes the structure of the generative flow.

\paragraph{This work.}
We study flow matching in the wavelet domain for unconditional generation of multivariate time series. Each channel is mapped to multilevel DWT coefficients, and a single flow transports isotropic Gaussian noise to their joint distribution. Crucially, although all coefficients follow the same linear interpolation schedule, natural differences in wavelet-level variance induce different signal-to-noise trajectories: coarse, high-variance levels become data-dominated earlier than finer levels. This yields an implicit coarse-to-fine generative process without an explicit scale-dependent schedule. The wavelet representation therefore does more than reorganize the signal: it changes the effective generative dynamics under an otherwise unchanged probability path. Equalizing the level-wise variances collapses these differences in SNR crossing times and substantially degrades generation quality, supporting this mechanism.

The representation also suggests a simple division of labor for the velocity network. The DWT organizes within-channel temporal structure across scales but leaves cross-channel dependencies untouched. We therefore represent each channel by its complete wavelet coefficient stack and apply self-attention across channels. Across seven datasets and four sequence lengths, the resulting model achieves the strongest overall performance among five recent baselines, with particularly consistent gains in Context-FID and discriminative score.
Our contributions are summarized as:
\begin{itemize}[nosep, leftmargin=*]
\item We introduce wavelet flow matching, a multiresolution framework for multivariate time-series generation with a channel-token velocity network tailored to wavelet coefficients.
\item We show that natural differences in wavelet-level variance induce distinct SNR crossing times under a shared linear probability path, which we probe through a targeted variance-equalization intervention.
\item Across seven datasets and four sequence lengths, our model achieves the strongest overall performance among five recent baselines.
\end{itemize}

\begin{figure}[t]
  \centering
  \includestandalone[width=\textwidth]{figures/figure1_new}
  \caption{
Wavelet-domain flow matching. Each channel is mapped to multilevel DWT coefficients, and a single learned flow $\vth$ transports Gaussian noise $z_0$ jointly across all levels and channels. The inverse DWT maps the final coefficients $z_1$ back to the time domain.
}
  \label{fig:pipeline}
\end{figure}


\section{Related work} \label{sec:related_work}
\paragraph{Time-series diffusion and flow matching.}
Denoising diffusion~\citep{sohl2015deep,ho2020ddpm,song2021sde} has become a prominent framework for time-series modeling, with applications spanning forecasting, imputation, generation, and denoising~\citep{tashiro2021csdi,alcaraz2023sssd,kollovieh2023tsdiff,naiman2024imagentime,ruiperez2025physics,ruiperez2026antithetic}. For unconditional generation, Diffusion-TS~\citep{yuan2024diffusionts} combines a transformer denoiser with explicit trend and seasonality decomposition and a Fourier reconstruction loss, but requires iterative reverse-time sampling. Flow matching~\citep{lipman2023flow,liu2023rectified,albergo2023building} instead learns a continuous transport between noise and data without requiring a diffusion noise schedule and supports deterministic ODE sampling~\citep{esser2024sd3}. Recent time-series applications include FlowTS~\citep{hu2024flowts}, which trains a rectified flow directly on the sampled signal, and TSFlow~\citep{kollovieh2025tsflow}, which combines flow matching with Gaussian-process priors for forecasting. Both operate in the time domain. 

\paragraph{Generation in transformed domains.}
Generative models have increasingly operated in transformed representations, including Fourier coefficients~\citep{alaa2021fourierflows,crabbe2024fourierdiffusion}, log-signature embeddings~\citep{barancikova2025sigdiffusion}, and wavelet coefficients. Unlike Fourier coefficients, wavelets are localized jointly in time and scale, making them well suited to transient multi-resolution structure; they have been used in image generation for scale-factorized sampling and accelerated diffusion~\citep{guth2022wsgm,phung2023waveletdiffusion}. For time series, WaveletDiff~\citep{wang2025waveletdiff} applies denoising diffusion with dedicated transformers for each wavelet level and cross-level attention. In contrast, we use a single shared flow over all levels and a channel-token velocity network with self-attention across the original channels. DSFM~\citep{tew2026dsfm} also combines wavelets with flow matching, but further applies a blockwise DCT and processes the coefficients as an image with a vision backbone for fMRI. Our framework instead operates directly on multilevel DWT coefficients and is evaluated across heterogeneous multivariate time-series domains.

\paragraph{Coarse-to-fine generation.}
Diffusion models have long been observed to recover global structure before local detail~\citep{choi2022perception}, and several approaches impose such hierarchies explicitly through frequency-dependent generation \citep{lee2022progressive}, hierarchical representations \citep{da2025treediffusion}, or multi-scale wavelet constructions \citep{guth2022wsgm}. More recently, spectral analyses have connected this behavior to frequency-dependent signal-to-noise trajectories arising from unequal signal energy~\citep{falck2025fourier}. Our setting differs in that the hierarchy emerges across discrete wavelet resolution levels under a single shared linear probability path: their natural variance differences induce distinct SNR crossing times, which we directly probe by equalizing the level-wise variances.

We provide a broader discussion of related work, including additional time-series generative models, transformer tokenization strategies, and learnable wavelet transforms, in Appendix~\ref{app:ext_related_work}.

\section{Method}
\label{sec:method}

We consider unconditional generation of multivariate time series $x\in\R^{T\times F}$ drawn from an unknown distribution $\pdata$, where $T$ is the sequence length and $F$ the number of channels.
Our design follows a division of labor: a channel-wise wavelet transform exposes temporal structure at multiple scales. The learned velocity field, in turn, models the dependencies between channels that the wavelet transform leaves untouched.

\subsection{Flow Matching in the Wavelet Domain}
\label{sec:wavelet_flowmatching}

\paragraph{Wavelet representation.}
We apply a $J$-level discrete wavelet transform (DWT) \citep{mallat1989theory} independently to each channel. For channel $f$, the transform yields one coarse approximation $a_J$ and detail coefficients $d_j \in \R^{L_j}$ for $j=1,\ldots,J$, whose length halves at each stage, $L_j=T/2^j$. We relabel the final approximation as $d_{J+1}:=a_J$ and set $L_{J+1}:=L_J$, then concatenate all levels as
\begin{equation}
\label{eq:concat_coefs}
c_{\cdot,f}=\operatorname{concat}\!\left(d_{J+1},d_J,\ldots,d_1\right)\in\R^T.
\end{equation}
Because the DWT is invertible and produces exactly $T$ coefficients per channel, it preserves the dimensionality of the input. Let $\W\in\R^{T\times T}$ denote the corresponding linear transform, applied independently to each channel. For the full multivariate sequence, we then write
\begin{equation}
\label{eq:wav_transform}
c=\W x\in\R^{T\times F},
\qquad
x=\W^{-1}c.
\end{equation}
Further details on the DWT, including the filter-bank definitions, boundary handling, and reconstruction, are given in Appendix \ref{sec:background:wavelets}. We additionally investigate learning the wavelet transform jointly with the generative model. To this end, we develop a differentiable parametrization of compactly supported orthonormal wavelet filter banks that preserves perfect reconstruction throughout training. The construction is described in \Cref{sec:method:learnable}, with supporting derivations in Appendix \ref{app:lattice}.

\paragraph{Flow matching in coefficient space.}
Rather than model $\pdata$ directly, we model its pushforward through the wavelet transform, i.e., its induced distribution of coefficients,
\begin{equation}
\label{eq:target}
    \pwav = \W_{\#}\pdata .
\end{equation}
Generation proceeds by transporting Gaussian noise to the wavelet-coefficient distribution through an ordinary differential equation. A velocity field $v:\R^{T\times F}\times[0,1]\to\R^{T\times F}$ defines this transport as
\begin{equation}
\label{eq:ode}
\frac{\dd}{\dd t}z_t=v(z_t,t),
\qquad
z_0\sim\N(0,\Id).
\end{equation}
Integrating \Cref{eq:ode} from $t=0$ to $t=1$ yields coefficients $z_1$ approximately distributed as $\pwav$, which are mapped back to the time domain as $\hat{x}=\W^{-1}z_1$ (see \Cref{fig:pipeline}).

Flow matching~\citep{lipman2023flow,liu2023rectified,albergo2023building} learns the velocity field without solving the ODE in \Cref{eq:ode} during training: one fixes a path from noise to data and trains the network to predict the velocity along it. Given wavelet coefficients $c\sim\pwav$ and independent Gaussian noise $z_0\sim p_0:= \mathcal{N}(0,I)$, we use the linear path
\begin{equation}
\label{eq:linear-path}
z_t=t\,c+(1-t)z_0,
\qquad
\frac{\dd}{\dd t}z_t=c-z_0,
\end{equation}
so each noise-data pair is connected by a straight line with constant conditional velocity $c-z_0$. Thus, $z_t$ is a coefficient array in which every level is partially noised. We therefore train $\vth$ by minimizing
\begin{equation}
\label{eq:rf-loss}
\E_{t\sim\pi,\;c\sim\pwav,\;z_0\sim p_0}
\left[
\left\|
\vth(z_t,t)-(c-z_0)
\right\|_2^2
\right],
\end{equation}
where $\pi$ denotes the training-time distribution on $[0,1]$. Since the target $c-z_0$ is not uniquely determined by $(z_t,t)$, the population minimizer of the squared loss is the conditional expectation $\E[c-z_0\mid z_t,t]$, whose marginal velocity field generates the prescribed probability path~\citep{lipman2023flow}.


\subsection{Coarse-to-Fine Flow}
\label{sec:method:coarse_to_fine}

The linear path of \Cref{eq:linear-path} treats all coefficients with the same interpolation schedule, but their signal-to-noise ratios evolve differently across wavelet levels. Let $I_j$ denote the coefficient indices corresponding to level $j$, such that $c_{\cdot,f}[I_j]=d_j$, and let $\sigma_j^2$ denote the variance of the data coefficients at that level. Restricted to level $j$, the interpolant reads
\begin{equation}
\label{eq:level-path}
z_t[I_j]=t\,c[I_j]+(1-t)\,z_0[I_j],
\end{equation}
where the entries of $z_0[I_j]$ have unit variance while those of $c[I_j]$ have variance $\sigma_j^2$. The signal-to-noise ratio of level $j$ along the path is therefore
\begin{equation}
\label{eq:snr}
\mathrm{SNR}_j(t)=\frac{t^2\sigma_j^2}{(1-t)^2},
\qquad j=1,\ldots,J+1,
\end{equation}
which increases monotonically from $0$ to $\infty$ and crosses one at $t_j=1/(1+\sigma_j)$. Before $t_j$, the level is dominated by noise; after $t_j$, it is dominated by data. \Cref{fig:snr-levels} illustrates how differences in $\sigma_j$ translate into different crossing times across wavelet levels.

\begin{wrapfigure}{r}{0.50\columnwidth}
    \vspace{-0.5cm}
    \centering
    \begingroup
  \def\snrdataset{ETTh1}\includestandalone[width=0.97\linewidth]{figures/SNR_figure}\endgroup
    \vspace{-0.2cm}
    \caption{Level-wise SNR along the linear probability path on ETTh1. Coarse levels cross $\mathrm{SNR}=1$ earlier than fine levels.}
    \label{fig:snr-levels}
    \vspace{-0.7cm}
\end{wrapfigure}
Across our datasets, the wavelet coefficients have systematically larger variance at coarser levels. Consequently, coarse levels have larger $\sigma_j$ and become data-dominated earlier along the probability path, whereas fine-scale coefficients remain noise-dominated until later. We observe this ordering across all seven datasets; \Cref{app:coefstats} reports full level-wise coefficient variances, crossing times, and SNR trajectories. Thus, although every coefficient follows the same linear probability path, the wavelet representation induces an implicit coarse-to-fine progression.\looseness-1

Importantly, this progression is not imposed through a scale-dependent noise schedule or separate generative stages. It arises solely from the natural scale-dependent statistics of the wavelet representation under a shared probability path. Standardizing each level to unit variance removes this mechanism precisely: $\sigma_j=1$ for all $j$, so all crossing times collapse to $t_j=1/2$. We test this intervention directly in \Cref{sec:results:ablations}.


\begin{figure}
  \centering
  \includestandalone[width=\textwidth]{figures/figure_architecture}
  \caption{
    Velocity-network architecture.
    (a) The network maps wavelet coefficients to a velocity field of the same shape.
    (b) Each channel forms one token by combining level-specific embeddings of its
    wavelet coefficients with a learned channel identity.
    (c) Transformer blocks apply self-attention across channels with AdaLN-Zero
    time conditioning; level-specific heads map the resulting tokens back to
    wavelet coefficients.
    }
  \label{fig:architecture}
\end{figure}

\subsection{Velocity network: channel-token transformer}
\label{sec:method:architecture}

The wavelet transform reorganizes temporal structure within each channel but does not mix information across channels. Motivated by architectures such as iTransformer~\citep{liu2024itransformer}, we therefore introduce a wavelet-aware channel-token transformer whose self-attention operates exclusively across the $F$ channels. Our tokenization is tailored to the wavelet representation: each channel token aggregates level-specific embeddings of its complete multilevel decomposition, and level-specific heads map the resulting representation back to velocity coefficients at each scale. \Cref{fig:architecture} summarizes the architecture, with further architectural details provided in Appendix \ref{app:architecture}.

\paragraph{Token construction.}
The wavelet levels have unequal lengths $L_j$, whereas the transformer operates on tokens of a common width $d_{\mathrm{m}}$. We therefore embed each level separately with a learned linear map $\mathbf{W}^{\mathrm{emb}}_j:\R^{L_j}\to\R^{d_{\mathrm{m}}}$. Writing $z_{I_j,f}\in\R^{L_j}$ for level $j$ of channel $f$, the initial token is
\begin{equation}
\label{eq:token}
u^{(0)}_f=\sum_{j=1}^{J+1}\mathbf{W}^{\mathrm{emb}}_j z_{I_j,f}+e_f\in\R^{d_{\mathrm{m}}},
\qquad f=1,\dots,F,
\end{equation}
where $e_f\in\R^{d_{\mathrm{m}}}$ is a learned channel identity. All remaining parameters are shared across tokens. Thus, without $e_f$, the velocity field is equivariant under permutations of the channel axis and the generated distribution is exchangeable across channels. We ablate this component in \Cref{sec:results:ablations}.

\paragraph{Transformer and time conditioning.}
The $F$ tokens are processed by $B$ pre-norm transformer blocks with self-attention over channels. The flow time $t$ is encoded with a sinusoidal embedding and injected into each block through AdaLN-Zero conditioning following~\citet{peebles2023dit}. Thus, the wavelet representation handles the within-channel multi-scale structure, while the transformer is dedicated to modeling cross-channel dependencies. 

\paragraph{Adaptive read-out.}
After the final transformer block, an adaptive normalization produces channel representations $h=(h_1,\ldots,h_F)^\top$, which are passed through level-specific linear heads $\mathbf{H}_j:\R^{d_{\mathrm{m}}}\to\R^{L_j}$ to reconstruct the velocity coefficients at each scale:
\begin{equation}
\label{eq:readout}
\bigl[\vth(z,t)\bigr]_{I_j,f}=\mathbf{H}_j h_f\in\R^{L_j},
\qquad j=1,\dots,J+1,\quad f=1,\dots,F.
\end{equation}
Concatenating these outputs over the coefficient index sets $I_j$ recovers a velocity array $\vth(z,t)\in\R^{T\times F}$ with the same layout as the input $z$. 


\subsection{Training and Inference}
\label{sec:method:time sampling}

\paragraph{Training-time distribution.}
The distribution $\pi$ of training times controls where along the path the regression objective in \Cref{eq:rf-loss} is evaluated. Following~\citet{esser2024sd3}, adopted for time series by~\citet{hu2024flowts}, we draw
\begin{equation}
\label{eq:logitnorm}
t=\operatorname{sigmoid}\bigl(m+s\varepsilon\bigr),
\qquad
\varepsilon\sim\N(0,1),
\end{equation}
with $\operatorname{sigmoid}$ being the logistic function and $(m,s)=(0,1)$. This yields a symmetric density that concentrates mass on intermediate times and vanishes at the endpoints. The regression is easy near the endpoints, since at $t\approx0$ the input is almost pure noise and at $t\approx1$ it essentially reveals $c$. Meanwhile, intermediate times require resolving $\E[c\mid z_t]$ and dominate the difficulty of the problem. Uniform time sampling is included as an ablation in \Cref{sec:results:ablations}.

\paragraph{Inference.}
At inference, we integrate the learned ODE with the explicit Euler scheme, starting from $z_0$ with i.i.d.\ standard normal entries and stepping along a grid $0=t_0<t_1<\dots<t_N=1$,
\begin{equation}
\label{eq:euler}
  z_{t_{k+1}} \;=\; z_{t_k} \;+\; (t_{k+1}-t_k)\;
  \vth\bigl(z_{t_k},\, t_k\bigr),
  \qquad k = 0,\dots,N-1,
\end{equation}
after which $\hat x = \W^{-1} z_{t_N}$. Since the conditional trajectories in \Cref{eq:linear-path} are straight by construction, a first-order solver with a moderate number of steps is sufficient in practice~\citep{liu2023rectified,esser2024sd3}.
All main experiments use $N=100$ uniformly spaced Euler steps.
We study the effect of the number and placement of integration steps in \Cref{app:sampler}.

\section{Experimental setup}
\label{sec:setup}

\paragraph{Datasets.}
We evaluate on seven standard benchmarks for unconditional time-series generation \citep{yoon2019timegan,yuan2024diffusionts,hu2024flowts,wang2025waveletdiff}: ETTh1 and ETTh2 \citep{zhou2021informer}, hourly load and oil temperature measurements from two electricity transformers; Stocks \citep{yoon2019timegan}, daily Google share prices and trading volume from 2004 to 2019; Exchange \citep{lai2018lstnet}, daily exchange rates of eight currencies against the US dollar; EEG \citep{roesler2013eeg}, a continuous 14-electrode electroencephalogram recording; Energy \citep{candanedo2017energy}, appliance energy consumption together with indoor and outdoor climate readings from a low-energy house; and MuJoCo \citep{todorov2012mujoco}, simulated physics trajectories of a Hopper robot. The datasets span 6--28 channels. Each time-domain channel is standardized, and windows are extracted with stride one at four lengths, $T \in \{24,32,64,128\}$. MuJoCo is the exception, consisting of $10{,}000$ independent rollouts rather than one continuous recording.

\paragraph{Baselines.}
We compare against five recent models. The diffusion-based baselines span different representations: Diffusion-TS \citep{yuan2024diffusionts} operates in the time domain, SigDiffusion \citep{barancikova2025sigdiffusion} in the signature domain, FourierDiffusion \citep{crabbe2024fourierdiffusion} in the frequency domain, and WaveletDiff \citep{wang2025waveletdiff} in the wavelet domain. We additionally compare against the time-domain flow-matching model FlowTS \citep{hu2024flowts}. All baselines are run from their official implementations on the same windows and through the same evaluation pipeline, using the model capacity and sampling budget specified by their released configurations.\looseness-1

\paragraph{Metrics.}
We report four standard benchmark metrics, computed between the real windows and an equally sized set of generated windows, with lower values better throughout. Two metrics assess overall fidelity: the discriminative score \citep{yoon2019timegan} measures how easily real and generated samples can be distinguished by a post-hoc classifier, while Context-FID \citep{jeha2022psagan} measures how closely their distributions match in a learned TS2Vec representation \citep{yue2022ts2vec}. The correlational score \citep{liao2020sigwgan} measures structural fidelity by comparing the cross-channel correlation patterns of real and generated data. The predictive score \citep{yoon2019timegan} measures downstream temporal utility by asking how well a predictor trained on generated data transfers to real data.
For $T=24$, we report mean $\pm$ standard deviation over three independent training runs, evaluating each run five times to account for stochasticity in auxiliary-network training and subsampling. For $T \in \{32, 64, 128\}$, we use one training run with five evaluations.


\paragraph{Implementation and further details.}
We use a three-level periodized DWT ($J=3$), with Daubechies wavelets of order 2 (\texttt{db2}) for $T\leq32$, order 4 (\texttt{db4}) for $T=64$, and order 6 (\texttt{db6}) for $T=128$. The velocity network has token width $d_m=256$, $B=8$ transformer blocks, and 8 attention heads. Further details on the datasets, baseline configurations, evaluation metrics, and implementation are provided in Appendix~\ref{app:setup}.

\begin{table}[t]
\centering
\caption{Unconditional generation on short sequences ($T=24$). Mean$\pm$std over three independent end-to-end training runs with five evaluations each. \textbf{Bold}: best; \underline{underlined}: second best.}
\label{tab:main24}
\small
\setlength{\tabcolsep}{4pt}
\begin{tabular}{@{}l*{7}{c}@{}}
\toprule
 & ETTh1 & ETTh2 & Stocks & Exchange & EEG & Energy & MuJoCo \\
\midrule
\addlinespace[2pt]
\multicolumn{8}{@{}l}{\textit{\hspace{0.2cm}Context-FID} ($\downarrow$)} \\
Diffusion-TS & \ms{0.137}{.012} & \ms{0.056}{.006} & \ms{0.175}{.022} & \ms{0.047}{.005} & \ms{0.018}{.004} & \ms{0.090}{.011} & \ms{0.016}{.002} \\
SigDiffusion & \ms{2.473}{.227} & \ms{1.162}{.051} & \ms{3.285}{.504} & \ms{1.629}{.152} & \ms{0.020}{.003} & \ms{4.241}{.372} & \ms{2.681}{.263} \\
FourierDiffusion & \ms{\second{0.025}}{.003} & \ms{0.026}{.001} & \ms{0.039}{.005} & \ms{0.080}{.021} & \ms{0.015}{.002} & \ms{0.217}{.015} & \ms{0.062}{.006} \\
WaveletDiff & \ms{0.026}{.002} & \ms{0.031}{.002} & \ms{0.020}{.002} & \ms{\second{0.009}}{.000} & \ms{\second{0.008}}{.001} & \ms{0.482}{.042} & \ms{1.180}{.095} \\
FlowTS & \ms{\second{0.025}}{.001} & \ms{\second{0.012}}{.001} & \ms{\second{0.017}}{.005} & \ms{\second{0.009}}{.001} & \ms{\best{0.005}}{.000} & \ms{\second{0.042}}{.004} & \ms{\second{0.012}}{.001} \\
\rowcolor{ourscolor} \textbf{Ours} & \ms{\best{0.005}}{.001} & \ms{\best{0.004}}{.000} & \ms{\best{0.005}}{.002} & \ms{\best{0.004}}{.001} & \ms{0.013}{.002} & \ms{\best{0.010}}{.003} & \ms{\best{0.006}}{.000} \\
\midrule
\multicolumn{8}{@{}l}{\textit{\hspace{0.2cm}Discriminative score} ($\downarrow$)} \\
Diffusion-TS & \ms{0.074}{.005} & \ms{0.040}{.005} & \ms{0.083}{.015} & \ms{0.024}{.002} & \ms{0.312}{.176} & \ms{0.120}{.005} & \ms{0.016}{.001} \\
SigDiffusion & \ms{0.355}{.023} & \ms{0.347}{.093} & \ms{0.364}{.011} & \ms{0.396}{.029} & \ms{0.400}{.197} & \ms{0.500}{.000} & \ms{0.499}{.000} \\
FourierDiffusion & \ms{0.018}{.005} & \ms{0.011}{.007} & \ms{0.018}{.009} & \ms{0.014}{.010} & \ms{0.011}{.008} & \ms{0.118}{.010} & \ms{0.048}{.008} \\
WaveletDiff & \ms{0.014}{.006} & \ms{0.017}{.006} & \ms{\best{0.009}}{.006} & \ms{0.011}{.008} & \ms{\second{0.009}}{.007} & \ms{0.311}{.012} & \ms{0.228}{.010} \\
FlowTS & \ms{\second{0.008}}{.005} & \ms{\second{0.007}}{.005} & \ms{0.014}{.012} & \ms{\second{0.009}}{.002} & \ms{0.106}{.092} & \ms{\best{0.079}}{.015} & \ms{\second{0.011}}{.004} \\
\rowcolor{ourscolor} \textbf{Ours} & \ms{\best{0.005}}{.004} & \ms{\best{0.005}}{.006} & \ms{\second{0.012}}{.009} & \ms{\best{0.005}}{.003} & \ms{\best{0.003}}{.002} & \ms{\second{0.098}}{.018} & \ms{\best{0.007}}{.004} \\
\midrule
\addlinespace[2pt]
\multicolumn{8}{@{}l}{\textit{\hspace{0.2cm}Correlational score} ($\downarrow$)} \\
Diffusion-TS & \ms{0.053}{.012} & \ms{0.110}{.029} & \ms{0.015}{.004} & \ms{0.115}{.042} & \ms{5.164}{.243} & \ms{\second{0.902}}{.044} & \ms{0.283}{.023} \\
SigDiffusion & \ms{0.186}{.006} & \ms{0.413}{.012} & \ms{0.172}{.006} & \ms{0.896}{.039} & \ms{4.777}{.316} & \ms{7.579}{.135} & \ms{1.165}{.025} \\
FourierDiffusion & \ms{0.052}{.006} & \ms{0.088}{.019} & \ms{0.013}{.005} & \ms{0.146}{.097} & \ms{3.585}{.952} & \ms{1.312}{.280} & \ms{0.266}{.026} \\
WaveletDiff & \ms{0.054}{.012} & \ms{0.090}{.021} & \ms{\best{0.005}}{.002} & \ms{0.120}{.013} & \ms{\second{2.270}}{.672} & \ms{1.234}{.168} & \ms{0.284}{.047} \\
FlowTS & \ms{\second{0.044}}{.012} & \ms{\best{0.059}}{.009} & \ms{0.015}{.005} & \ms{\best{0.047}}{.016} & \ms{\best{1.890}}{.636} & \ms{0.957}{.079} & \ms{\best{0.228}}{.032} \\
\rowcolor{ourscolor} \textbf{Ours} & \ms{\best{0.042}}{.015} & \ms{\second{0.066}}{.020} & \ms{\second{0.007}}{.005} & \ms{\second{0.056}}{.017} & \ms{3.870}{.422} & \ms{\best{0.890}}{.176} & \ms{\second{0.246}}{.033} \\
\midrule
\addlinespace[2pt]
\multicolumn{8}{@{}l}{\textit{\hspace{0.2cm}Predictive score} ($\downarrow$)} \\
Diffusion-TS & \ms{0.120}{.003} & \ms{0.108}{.004} & \ms{\best{0.037}}{.000} & \ms{0.045}{.005} & \ms{0.002}{.000} & \ms{\second{0.251}}{.000} & \ms{\best{0.007}}{.000} \\
SigDiffusion & \ms{0.130}{.002} & \ms{0.134}{.004} & \ms{0.041}{.001} & \ms{0.089}{.004} & \ms{\best{0.000}}{.000} & \ms{0.377}{.003} & \ms{0.025}{.001} \\
FourierDiffusion & \ms{\second{0.119}}{.002} & \ms{0.110}{.003} & \ms{\best{0.037}}{.000} & \ms{0.045}{.002} & \ms{\best{0.000}}{.000} & \ms{\second{0.251}}{.000} & \ms{0.010}{.002} \\
WaveletDiff & \ms{\best{0.117}}{.003} & \ms{\best{0.104}}{.003} & \ms{\best{0.037}}{.000} & \ms{0.046}{.003} & \ms{\best{0.000}}{.000} & \ms{\second{0.251}}{.000} & \ms{0.009}{.001} \\
FlowTS & \ms{0.120}{.003} & \ms{\best{0.104}}{.005} & \ms{\best{0.037}}{.000} & \ms{\second{0.043}}{.010} & \ms{0.001}{.000} & \ms{\second{0.251}}{.000} & \ms{0.008}{.001} \\
\rowcolor{ourscolor} \textbf{Ours} & \ms{0.120}{.003} & \ms{0.105}{.004} & \ms{\best{0.037}}{.000} & \ms{\best{0.042}}{.004} & \ms{\best{0.000}}{.000} & \ms{\best{0.250}}{.000} & \ms{\best{0.007}}{.001} \\
\bottomrule
\end{tabular}
\end{table}

\section{Results}
\label{sec:results}
 
\subsection{Time series generation}
\label{sec:results:main}

\paragraph{Overall generation quality.}
\Cref{tab:main24} reports all four metrics on the seven datasets at $T=24$. As our primary benchmark comparison, it reports variability across three independent end-to-end runs, capturing variation from training, generation, and evaluation. Our model is best or tied for best in $18$ of the $28$ dataset--metric combinations, compared with $8$ for FlowTS and $6$ for WaveletDiff.
On both fidelity metrics, the gains are substantial. In terms of Context-FID, our model achieves the lowest score on six of seven datasets; the strongest competing method, FlowTS, has a $2.9\times$ higher score on average. Similarly, for the discriminative score, we are best on five of seven datasets, and the strongest competitor has a $1.7\times$ higher score on average.
Performance on the correlational score, which specifically assesses cross-channel dependencies, is more mixed: we are best on two datasets, while on the remaining five our score is on average $1.4\times$ that of the best-performing method. Finally, the predictive score, our measure of downstream temporal utility, is largely saturated across methods at this length; our model is nevertheless best or tied for best on five of seven datasets. As this metric probes a narrow prediction task rather than general fidelity, we interpret small differences between methods cautiously.

\begin{figure}[t]
  \centering
  \includegraphics[width=\linewidth]{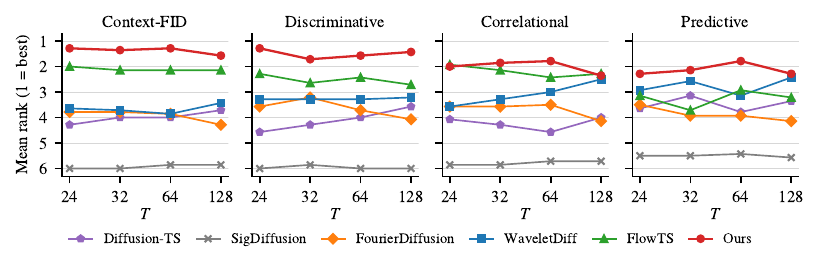}
  \caption{Mean rank of each method over the seven datasets as a function of the window length $T$ \\ (1 = best; tied methods receive the average of their ranks).}
  \label{fig:lengths}
\end{figure}

\begin{figure}
  \centering
\captionsetup[subfigure]{font=scriptsize,skip=1pt}   
  \makebox[\textwidth][c]{%
    \densylabel{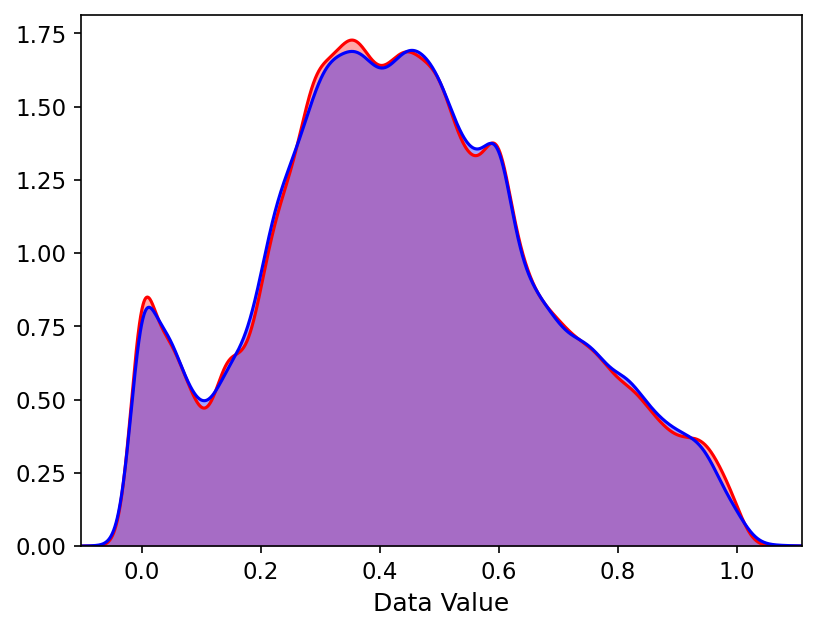}%
    \qpanel{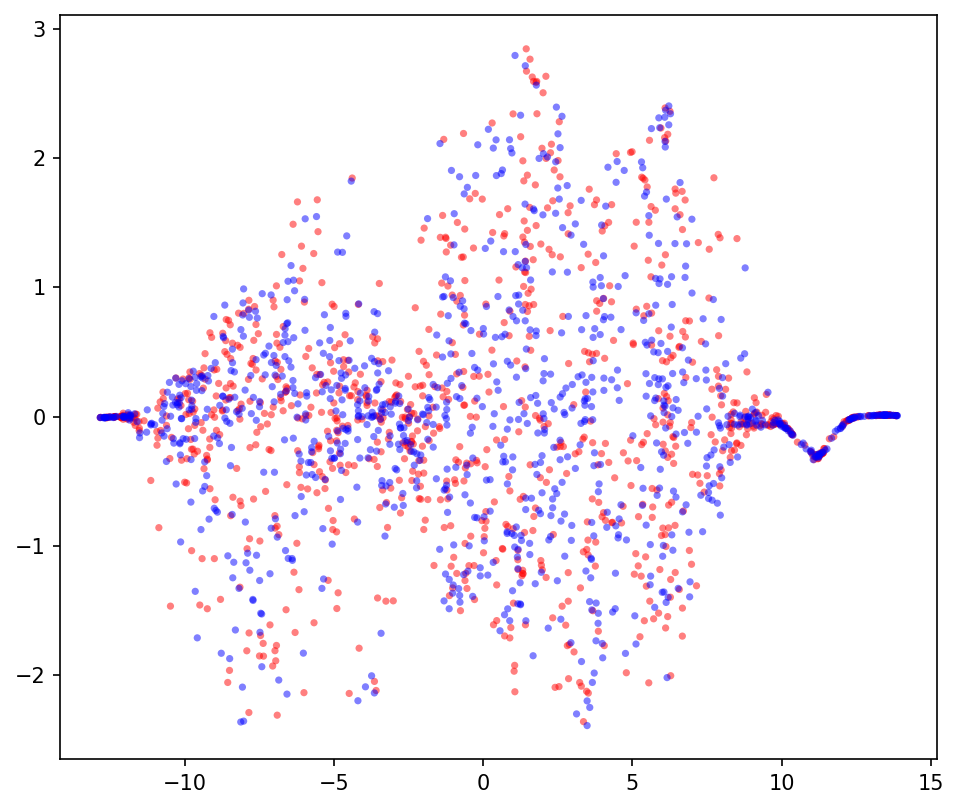}{Ours}\hfill
    \qpanel{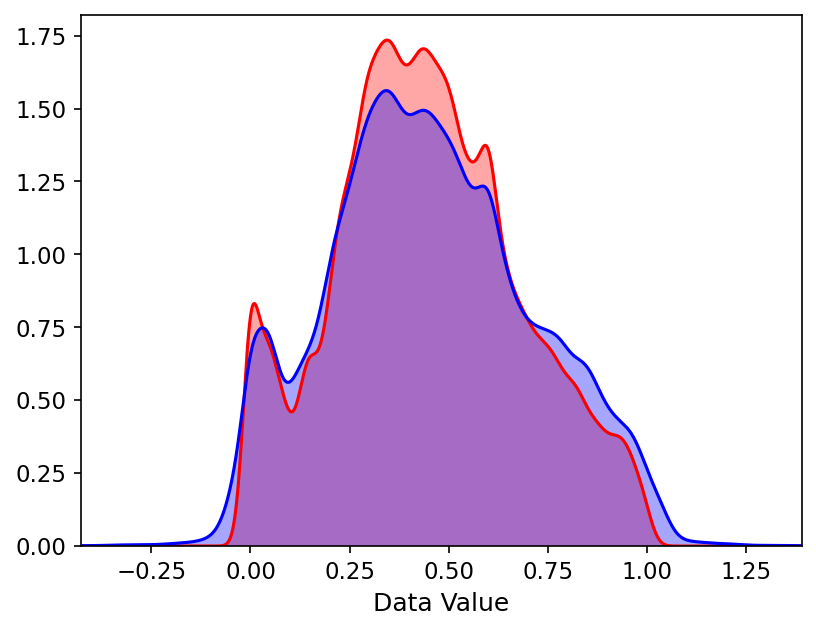}{WaveletDiff}\hfill
    \qpanel{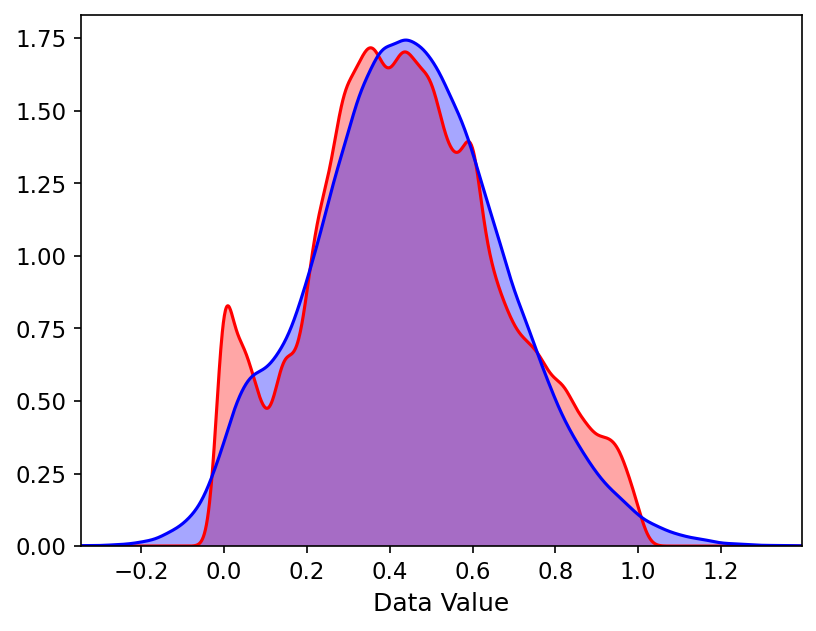}{FlowTS}\hfill
    \qpanel{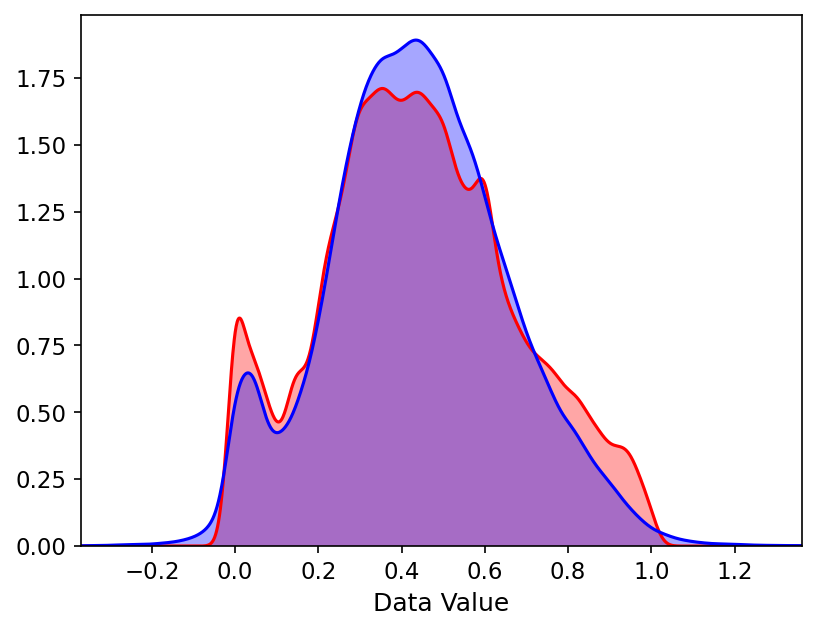}{FourierDiffusion}\hfill
    \qpanel{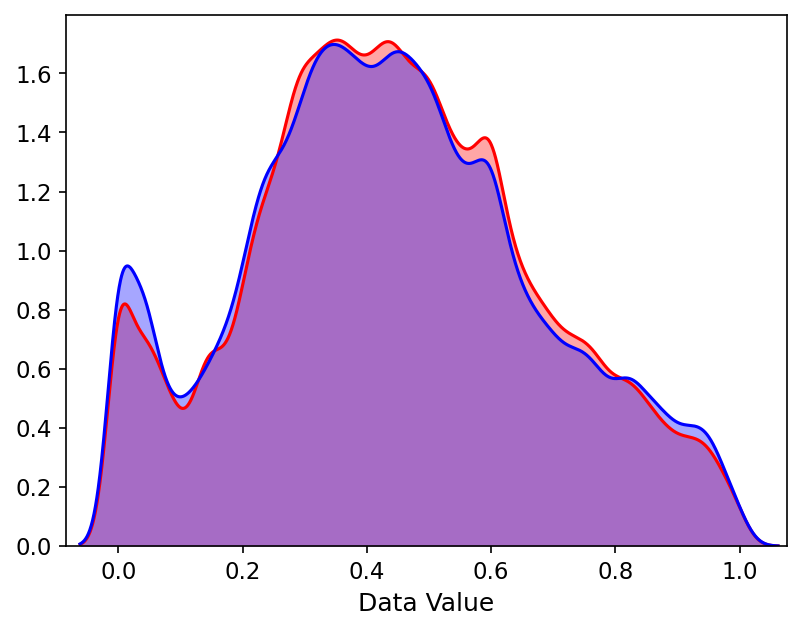}{Diffusion-TS}\hfill
    \qpanel{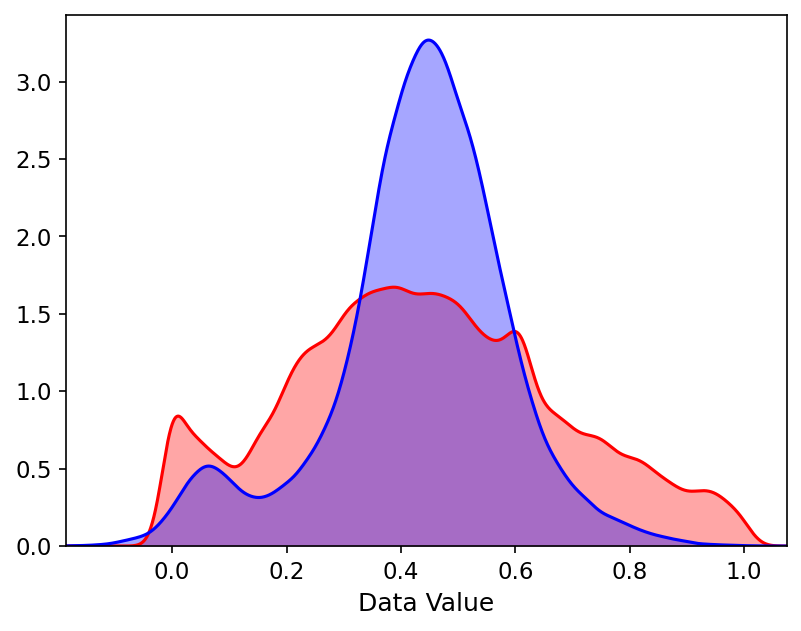}{SigDiffusion}%
  }
  \caption{t-SNE visualization and probability distributions on Energy. Red is for real data, and blue for generated data.}
  \label{fig:qualitative}
\end{figure}

\begin{figure}[t]
  \centering
  \includegraphics[width=\textwidth]{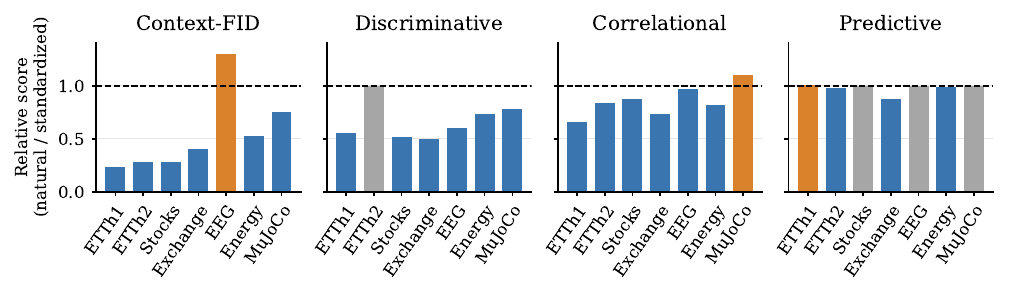}
  \caption{
  Effect of removing the implicit coarse-to-fine progression at $T=24$.
    Bars show the ratio of the score under natural coefficient scaling to that
    under per-level standardization. Values below $1$ favor natural scaling and
    values above $1$ favor standardization; the dashed line denotes equal
    performance.
  }
  \label{fig:ablation-standardize}
\end{figure}

\paragraph{Performance across sequence lengths.}
We next assess whether performance is maintained as the sequence length increases, repeating the full comparison at $T\in\{32,64,128\}$.
\Cref{fig:lengths} summarizes the results by the mean rank of each method across the seven datasets, computed separately for each metric and sequence length. This aggregation avoids directly averaging scores whose scales differ substantially across datasets.
The relative ordering of the methods changes little with sequence length, and our model maintains the best mean rank across all four metrics as $T$ increases.
Full per-dataset results are reported in \Cref{tab:long32,tab:long64,tab:long128}, and \Cref{fig:app-grid} shows the corresponding metric values as a function of sequence length for each dataset.

\paragraph{Qualitative comparison.}
\Cref{fig:qualitative} compares the generated and real distributions on the Energy dataset. Our samples show substantial overlap with the real data in the t-SNE embedding and closely match its empirical density. Corresponding visualizations for all datasets are provided in \Cref{app:qualitative}.



\subsection{Analysis of the Proposed Method}
\label{sec:results:ablations}

We analyze the mechanisms and design choices underlying our method at $T=24$ across all seven datasets. Unless stated otherwise, each experiment changes one component of the default configuration while keeping all other settings fixed.

\paragraph{Coarse-to-fine flow.}
To test whether the implicit coarse-to-fine progression identified in \Cref{sec:method:coarse_to_fine} contributes to generation quality, we standardize each wavelet level to unit variance before flow matching and restore its original scale before the inverse transform (row (b) of \Cref{tab:ablations_main}). This equalizes the level-wise coefficient variances and collapses their SNR crossing times. \Cref{fig:ablation-standardize} shows the ratio between performance under natural scaling and per-level standardization. Natural scaling is better or tied in $25$ of the $28$ dataset--metric combinations. The effect is strongest on the two overall-fidelity metrics: Context-FID improves on six of seven datasets and is $2.5\times$ lower on average, while the discriminative score improves on six datasets and ties on the seventh, with a $1.6\times$ lower score on average. EEG is the main exception: its wavelet levels already have similar variances (see \Cref{tab:level-energy}), so standardization changes their SNR trajectories comparatively little. These results support the hypothesis that the natural scale-dependent coefficient statistics and their induced coarse-to-fine progression improve generation quality.

\paragraph{Method components.}
\Cref{tab:ablations_main} summarizes the remaining one-factor-at-a-time ablations using Context-FID; full results across all four metrics are provided in \Cref{tab:ablations}. Replacing the wavelet transform with the identity (row (c)) generally degrades performance, supporting the benefit of the multiresolution representation itself. 
Removing the channel-identity embedding $e_f$ (row (a)) substantially degrades performance on most datasets, showing that explicit channel identity is important for modeling heterogeneous cross-channel structure. EEG is the main exception, where performance changes little, plausibly because its electrode channels are more homogeneous and thus closer to the exchangeable setting induced by removing $e_f$.
Finally, replacing the logit-normal training-time distribution with uniform sampling (row (d)) worsens Context-FID, while its effect on the other metrics is comparatively small.

\paragraph{Choice of wavelet transform.}
We additionally compare several fixed wavelet families with a learnable orthonormal transform, whose lattice parametrization guarantees an orthonormal filter bank throughout training (\Cref{sec:method:learnable}). No transform consistently dominates across datasets and metrics, and learning the transform provides no systematic improvement over fixed bases. The learned parameters also remain close to their \texttt{db2} initialization. These results suggest that performance depends primarily on the multiresolution representation rather than on the specific wavelet basis. Full results and the evolution of the learned transform are provided in \Cref{tab:ablation-transform} and \Cref{fig:angle-drift}.

Additional sampler sensitivity analyses in \Cref{app:sampler} show that $N=100$ uniformly spaced Euler steps provides a robust default.


\begin{table*}[t]
\centering
\footnotesize
\setlength{\tabcolsep}{4pt}

\resizebox{\textwidth}{!}{%
\begin{tabular}{l *{7}{c}}
\toprule
\textbf{Variant}
& ETTh1 & ETTh2 & Stocks & Exchange & EEG & Energy & MuJoCo \\
\midrule
(a) Without $e_f$
  & 1.488\pmstd{0.099} & 0.977\pmstd{0.168}
  & 1.763\pmstd{0.321} & 0.938\pmstd{0.072}
  & \underline{0.011}\pmstd{0.001} & 2.617\pmstd{0.145}
  & 1.390\pmstd{0.139} \\
(b) Std. levels
  & 0.021\pmstd{0.002} & 0.014\pmstd{0.001}
  & 0.018\pmstd{0.004} & 0.010\pmstd{0.001}
  & \textbf{0.010}\pmstd{0.001} & 0.019\pmstd{0.002}
  & 0.008\pmstd{0.000} \\
(c) Time domain
  & \underline{0.006}\pmstd{0.001} & \underline{0.005}\pmstd{0.001}
  & 0.014\pmstd{0.001} & 0.007\pmstd{0.000}
  & 0.014\pmstd{0.001} & \underline{0.012}\pmstd{0.000}
  & \textbf{0.005}\pmstd{0.000} \\
(d) Uniform $t$
  & 0.007\pmstd{0.000} & 0.008\pmstd{0.001}
  & \underline{0.013}\pmstd{0.001} & \underline{0.005}\pmstd{0.000}
  & 0.014\pmstd{0.002} & 0.016\pmstd{0.001}
  & 0.010\pmstd{0.001} \\
\rowcolor{ourscolor}
Full model
  & \textbf{0.005}\pmstd{0.001} & \textbf{0.004}\pmstd{0.000}
  & \textbf{0.005}\pmstd{0.002} & \textbf{0.004}\pmstd{0.001}
  & 0.013\pmstd{0.002} & \textbf{0.010}\pmstd{0.003}
  & \underline{0.006}\pmstd{0.000} \\
\bottomrule
\end{tabular}%
}

\caption{Context-FID ($\downarrow$) for one-factor-at-a-time ablations at
$T=24$. The full model uses \texttt{db2} coefficients, natural level scaling,
logit-normal time sampling, and channel-identity embeddings $e_f$.
Variants (a)--(d) respectively remove $e_f$, standardize the wavelet levels,
replace the wavelet transform with the identity, and sample $t$ uniformly.
Results are mean $\pm$ standard deviation. Best values are in bold and second-best values are underlined.}
\label{tab:ablations_main}
\end{table*}

\section{Conclusion}
\label{sec:conclusion}

We studied unconditional multivariate time-series generation by flow matching in the wavelet domain. The multilevel wavelet representation separates temporal structure across scales, while the natural differences in level variance induce an implicit coarse-to-fine generation process under a single linear probability path. We pair this representation with a channel-token transformer, so that the transform organizes within-channel temporal structure while attention models the cross-channel dependencies it leaves untouched. Across seven datasets and four sequence lengths, the resulting model matches or improves on the strongest baseline for a majority of dataset-metric pairs, with the largest and most consistent gains on Context-FID. Ablations further show that preserving the natural scale-dependent coefficient statistics is important, supporting the role of the induced coarse-to-fine flow.

\paragraph{Limitations and future work.}
The decomposition depth is currently fixed as a function of window and filter length rather than adapted to the scale structure of each dataset. Moreover, standard time-series generation metrics do not explicitly measure memorization and can saturate on some datasets. Adapting the decomposition to the scale structure of individual datasets and extending the framework to conditional tasks such as forecasting and imputation are natural directions for future work.

\newpage

\subsubsection*{Reproducibility statement}
We provide the information needed to reproduce our results throughout the paper, appendix, and supplementary material. The proposed method, including the training objective, time sampling, and inference procedure, is specified in \Cref{sec:method}, with architectural details and parameter counts in \Cref{app:architecture}. The wavelet construction, boundary handling, and learnable orthonormal transform are described in \Cref{sec:background:wavelets}. Dataset sources and preprocessing are documented in \Cref{sec:setup,app:datasets}, while baseline configurations, evaluation metrics, and implementation details are provided in \Cref{app:baselines,app:metrics,sec:setup:implementation}. Ablation and sampler settings are described in \Cref{sec:results:ablations,app:full_abl,app:sampler}. 

\subsubsection*{Ethics statement}
This work develops a generative model for unconditional multivariate time-series synthesis and evaluates it exclusively on publicly available benchmark datasets. We do not collect new human-subject data, conduct interventions, or deploy the model in real-world decision-making systems.
Synthetic time-series generation can support applications such as data augmentation and data sharing, but generated data should not automatically be assumed to be anonymous, private, or representative of the underlying population. Generative models may memorize training examples or reproduce biases and artifacts present in their training data. Our evaluation focuses on distributional fidelity and downstream utility and does not constitute a formal privacy or memorization analysis. We therefore caution against using the proposed method as a privacy-preserving mechanism without additional safeguards and dedicated privacy evaluation.
Several time-series domains considered in this work, including financial and physiological signals, can arise in high-stakes settings. Synthetic samples produced by the model should not be interpreted as clinically valid measurements, financial advice, or substitutes for domain-specific validation. Any use in such settings should include appropriate expert oversight, validation, and consideration of potential distributional biases and failure modes.

\subsubsection*{AI use statement}

In this work, we used generative AI tools to 
design or provide feedback on research methodology or experiments, 
implement methods,
help develop theoretical models or conceptual frameworks.
We have not used generative AI tools to 
formulate mathematical claims, 
assist in the writing of proofs,
generate synthetic data sets, 
provide critical ingredients for proving mathematical claims, 
propose or refine hypotheses, 
assist with translation, 
clean and reformat datasets,
support qualitative and thematic data analysis, 
interpret results. 
Additionally, we used generative AI tools to 
create or modify scientific figures or images, 
suggest experimental parameters, 
create or edit software code,  
creation of artifacts, 
draft parts of a research paper, 
summarize or analyse existing literature, 
discover research topics or identify gaps, 
brainstorming, sourcing/searching for information, 
edit a research paper to improve readability, 
identify relevant literature, 
suggest a structure for a research paper.
We have reviewed all AI-assisted work. AI-generated code was manually inspected and tested, and AI-generated text and suggested literature were reviewed and verified by the authors before use. We take responsibility for the final content of this work, including text, claims, code, figures, and other artifacts produced with the aid of generative AI.

\subsubsection*{Acknowledgments}
JSG is supported by the StimuLoop grant \#1-007811-002 and the Vontobel Foundation. SRC is supported by the Department of Computer Science at ETH Zurich and reports equity, intellectual property and consulting with Physcade Inc.---no disclosures are conflicting with or related to this work. Computational data analysis was performed at Leonhard Med,\footnote{\footnotesize\url{https://sis.id.ethz.ch/services/sensitiveresearchdata/}} a secure trusted research environment at ETH Zurich.


\clearpage

\bibliographystyle{plainnat}
\bibliography{references}

\clearpage
\appendix

\startcontents[appendices]
\section*{Appendix Contents}
\printcontents[appendices]{}{1}{\setcounter{tocdepth}{2}}


\clearpage

\section{Extended Related Work}
\label{app:ext_related_work}

\paragraph{Adversarial and variational generators.}
The first neural generators of time series operated on the sampled signal and adapted generative adversarial networks~\citep{goodfellow2014gan} to sequential data: C-RNN-GAN~\citep{mogren2016crnngan} and RCGAN~\citep{esteban2017rcgan} pair recurrent generators with recurrent discriminators, while TimeGAN~\citep{yoon2019timegan} adds a learned embedding space and a supervised stepwise loss, and introduced the discriminative and predictive evaluation protocol that remains standard. Later work refined the training signal through progressive growing and self-attention~\citep{jeha2022psagan}, signature-based Wasserstein objectives~\citep{liao2020sigwgan}, or generators built on neural ODEs~\citep{chen2018neuralode} to handle irregular sampling~\citep{jeon2022gtgan}. Variational models~\citep{kingma2014vae} traded sharpness for stability, equipping the decoder with interpretable trend and seasonality blocks~\citep{desai2021timevae}, generating in vector-quantized latent spaces~\citep{lee2023timevqvae}, or imposing Koopman dynamics on the latent process~\citep{naiman2024kovae}. Beyond sample generation, variational autoencoders have also been used in the physiological domain, for instance, to denoise cardiac time series~\citep{ruiperez2024can,ruiperez-campillo_reducing_2026}. 

\paragraph{Choice of tokenization.}
Early transformer forecasters treat each time step as a token and attend over time~\citep{zhou2021informer,wu2021autoformer}. PatchTST~\citep{nie2023patchtst} instead tokenizes contiguous patches and processes channels independently, which improves accuracy and suggests that mixing channels through temporal attention is not where the gains lie. Crossformer~\citep{zhang2023crossformer} attends over time and channels in separate stages, and iTransformer~\citep{liu2024itransformer} inverts the convention entirely, making each channel one token so that attention models only inter-channel dependence while the temporal axis is handled by feed-forward layers. This inversion is a good match for a multi-resolution representation, in which the within-channel structure has already been separated by the transform but the dependence between channels has not been touched. We adopt it here, with each token carrying a channel's full coefficient stack.

\paragraph{Learning the wavelet transform.}
The wavelet transform itself need not be fixed: \citet{recoskie2018sparse} frame the DWT as a modified convolutional network and learn by gradient descent the filters that give a sparse representation of the data, in one dimension and then for images~\citep{recoskie2018filters2d}.  Learned banks have since been placed in pooling layers~\citep{wolter2021adaptive}, distilled from the attributions of a trained network~\citep{ha2021awd}, used for unsupervised monitoring of high-frequency signals~\citep{michau2022despawn} and put inside transformer attention~\citep{zhuang2024wavspa}. For time series, mWDN~\citep{wang2018mwdn} initializes at Daubechies filters and fine-tunes them without constraint; see \citet{ramzi2022waveletsdl} for a survey. These approaches differ in how the admissibility conditions of \Cref{app:wavelets} are imposed. Most add penalty terms to the objective~\citep{recoskie2018sparse,wolter2021adaptive,ha2021awd}, so the conditions are encouraged rather than enforced: \citet{recoskie2018sparse} note that their filters are, in consequence, only approximately wavelet filters, and \citet{zhuang2024wavspa} note that their parametrization preserves the quadrature-mirror relation but not orthogonality. Others build the conditions into the model rather than the objective: \citet{jawali2019wavelet} build orthonormality and vanishing moments into a filter-bank autoencoder and recover the Daubechies filters from Gaussian training data, AdaWaveNet~\citep{yu2024adawavenet} obtains perfect reconstruction structurally from a lifting scheme, though not orthogonality, and \citet{le2024orthlatt} train the rotation angles of a paraunitary lattice~\citep{vaidyanathan1988lattice} inside a ResNet-18, dropping the vanishing moment, which they do not need at a single decomposition level.

\clearpage

\section{Wavelet Transform Background}
\label{sec:background:wavelets}

Here, we provide the wavelet background underlying our method. 
\Cref{app:dwt-background} specifies the periodized DWT, coefficient layout, and reconstruction operator used throughout the paper. 
\Cref{app:wavelets} summarizes the fixed wavelet families considered in our experiments. 
\Cref{sec:method:learnable} then introduces the learnable orthonormal wavelet parametrization used in our ablations, with supporting derivations and proofs deferred to \Cref{app:lattice}.

\subsection{Periodized DWT and coefficient layout} \label{app:dwt-background}
This section provides the discrete wavelet transform details underlying the notation used in \Cref{sec:wavelet_flowmatching}. In the main text, we require only that the transform is linear, channel-wise, critically sampled, and invertible; here we specify the filter-bank construction, boundary handling, coefficient layout, and reconstruction operator.

The discrete wavelet transform (DWT) rewrites a signal as one coarse approximation level, carrying the slow structure, together with a sequence of detail levels, carrying the rapid fluctuations. The transform acts channel by channel, so fix a channel $f$ and set $a_0 = x_{\cdot,f} \in \R^{T}$. A wavelet filter bank consists of a low-pass filter $h$ and a high-pass filter $g$, both supported in $\{0,\dots,q-1\}$. More details about the wavelet families are given in \Cref{app:wavelets}. Mallat's pyramid algorithm~\citep{mallat1989theory} repeats $J$ times a single stage: filter the current approximation $a_j \in \R^{L_j}$ with $h$ and with $g$, then downsample by a factor of two.
Near the end of the window, this filtering reaches up to $q-2$ samples past the last one. We therefore apply the step not to $a_j$ but to its periodic extension
\begin{equation}
\label{eq:per-ext}
  \bar a_j[i] \;=\; a_j\bigl[\,i \bmod L_j\,\bigr], \qquad i \in \mathbb{Z} .
\end{equation}
One step then reads, for $j = 0,\dots,J-1$ and $k = 0,\dots,L_{j+1}-1$,
\begin{equation}
\label{eq:fwt}
  a_{j+1}[k] \;=\; \sum_{m=0}^{q-1} h[m]\, \bar a_j[2k+m],
  \qquad
  d_{j+1}[k] \;=\; \sum_{m=0}^{q-1} g[m]\, \bar a_j[2k+m].
\end{equation}
That is the correlation $\sum_{m} h[m]\,\bar a_j[n+m]$ evaluated at the even shifts $n = 2k$. 
Since the extension is periodic, $k$ runs over a full period and
\begin{equation}
\label{eq:lengths}
  L_{j+1} \;=\; \tfrac{1}{2}\,L_j, \qquad L_0 = T,
  \qquad\text{hence}\qquad L_j \;=\; \frac{T}{2^{\,j}} .
\end{equation}
Exact halving requires $2^{J} \mid T$, and every window length considered here satisfies this.
After $J$ steps, we obtain the full decomposition:
\begin{equation}
\label{eq:dwt}
  \DWT(x_{\cdot,f}) \;=\; (a_J,\, d_J,\, d_{J-1},\, \dots,\, d_1).
\end{equation}
Reconstruction reverses the pyramid: for $j = J,\dots,1$ and $n = 0,\dots,L_{j-1}-1$,
\begin{equation}
\label{eq:iwt}
  a_{j-1}[n] \;=\; \sum_{k=0}^{L_j-1} \Bigl(
    \tilde h\bigl[(n-2k) \bmod L_{j-1}\bigr]\, a_j[k] \;+\;
    \tilde g\bigl[(n-2k) \bmod L_{j-1}\bigr]\, d_j[k] \Bigr),
\end{equation}
with $(\tilde h, \tilde g)$ the synthesis filters of the filter bank (\Cref{app:wavelets}), taken to vanish outside $\{0,\dots,q-1\}$. \Cref{eq:iwt} evaluates the filters at a wrapped index, which coincides with using their periodizations only while the filters are shorter than the stage, $q \le L_{j-1}$; every configuration in this work satisfies this at all stages, and the binding case is quantified in \Cref{app:periodized}. After $J$ steps this returns
\begin{equation}
\label{eq:idwt}
  x_{\cdot,f} \;=\; a_0 \;=\; \IDWT(a_J,\, d_J,\, d_{J-1},\, \dots,\, d_1).
\end{equation}
The final approximation plays the same role in what follows as the details, so we relabel it $d_{J+1} := a_J$ and set $L_{J+1} := L_J$. Concatenating the $J{+}1$ sequences 
\begin{equation}
\label{eq:coeff-layout}
  c_{\cdot,f} \;=\;
  \begin{bmatrix} d_{J+1} \\ d_J \\ \vdots \\ d_1 \end{bmatrix}
  \;\in\; \R^{D},
  \qquad
  D \;=\; \sum_{j=1}^{J+1} L_j .
\end{equation}
Because the levels halve, we have 
\begin{equation}
\label{eq:critical}
  D \;=\; \frac{T}{2^{J}} \;+\; \sum_{j=1}^{J} \frac{T}{2^{j}}
    \;=\; \frac{T}{2^{J}} \;+\; T\bigl(1-2^{-J}\bigr) \;=\; T ,
\end{equation}
so the transform returns as many coefficients as the series has time steps.
We write $I_j$ for the index set of level $d_j$, so that $c_{\cdot,f}[I_j] = d_j$.
Both \Cref{eq:per-ext} and \Cref{eq:fwt} are linear, as is \Cref{eq:iwt}, so the DWT and the IDWT are each a matrix acting on $\R^{T}$. That the second inverts the first is the perfect-reconstruction property: it holds exactly under the admissibility conditions of \Cref{app:wavelets}, as shown in \Cref{app:periodized}, so we write the DWT as $\W \in \R^{T\times T}$ and the IDWT as $\W^{-1}$. The transform acts on each channel independently, so we write $c = \W x \in \R^{T\times F}$ and $x = \W^{-1} c \in \R^{T\times F}$.

\FloatBarrier
\subsection{Wavelet filter families}
\label{app:wavelets}
The DWT of \Cref{sec:background:wavelets} is built from a filter bank: the filters $h,g$ entering \Cref{eq:fwt} and $\tilde h,\tilde g$ entering \Cref{eq:iwt} cannot be arbitrary. This appendix states the two standard admissibility conditions, under which \Cref{eq:iwt} inverts \Cref{eq:fwt} exactly (\Cref{app:periodized}), and lists the four families compared in \Cref{sec:results}; see~\citet{daubechies1992ten,mallat2008wavelet} for complete treatments. Filter names follow PyWavelets~\citep{lee2019pywavelets}.

\paragraph{Orthonormal filter banks.}
A low-pass filter $h$ generates an orthonormal wavelet basis if it satisfies
\begin{subequations}
\label{eq:cqf}
\begin{align}
  \sum_{n} h[n]\,h[n-2k] &\;=\; \delta_{k,0} \quad \forall k\in\Z,
  \label{eq:cqf-shift}\\
  \sum_{n} h[n] &\;=\; \sqrt{2}.
  \label{eq:cqf-norm}
\end{align}
\end{subequations}
The high-pass filter is then $g[n]=(-1)^{n}h[q-1-n]$, which is again supported on $\{0,\dots,q-1\}$, and the synthesis filters of \Cref{eq:iwt} are the analysis ones, $(\tilde h,\tilde g)=(h,g)$.

\paragraph{Biorthogonal filter banks.}
The low-pass filters of a biorthogonal wavelet must satisfy
\begin{equation}
\label{eq:biorth}
  \sum_{n} \tilde h[n]\,h[n-2k] \;=\; \delta_{k,0}\quad\forall k\in\mathbb{Z},
\end{equation}
and the high-pass filters are then given by $g[n]=(-1)^{n}\tilde h[q-1-n]$ and $\tilde g[n]=(-1)^{n}h[q-1-n]$, the same alternating flip as in the orthonormal case. Here the four filters need not have the same length: $q$ denotes the longest and is taken even, and the shorter ones are placed inside $\{0,\dots,q-1\}$ so that \Cref{eq:biorth} holds. The four filters are then biorthogonal, meaning
\begin{equation}
\label{eq:biorth2}
  \sum_{n} \tilde g[n]\,g[n-2k] \;=\; \delta_{k,0},
  \qquad
  \sum_{n} \tilde h[n]\,g[n-2k] \;=\; \sum_{n} \tilde g[n]\,h[n-2k] \;=\; 0 .
\end{equation}

\paragraph{Symmetry.}
A filter $h$ supported on $\{0,\dots,q-1\}$ is symmetric when its taps form a palindrome,
\begin{equation}
\label{eq:sym}
  h[n] \;=\; h[q-1-n], \qquad 0 \le n \le q-1 ,
\end{equation}
so that it delays every frequency equally and does not distort the shape of a feature.

\paragraph{Vanishing moments.}
The filter bank has $p$ vanishing moments when the high-pass filter annihilates polynomials of degree $<p$, that is
\begin{equation}
\label{eq:vm}
  \sum_{n} n^{m}\, g[n] \;=\; 0 , \qquad 0 \le m < p ,
\end{equation}
so detail coefficients are small wherever the signal is locally well approximated by such a polynomial.

\paragraph{Daubechies (\texttt{db}).}
~\Citet{daubechies1988orthonormal} constructed, for every $p$, the orthonormal filter of minimal length with $p$ vanishing moments; its length is $q=2p$. For $p=2$ (\texttt{db2}, $q=4$),
\begin{equation}
\label{eq:db2}
  h = \frac{1}{4\sqrt{2}}\bigl(1+\sqrt{3},\; 3+\sqrt{3},\; 3-\sqrt{3},\; 1-\sqrt{3}\bigr).
\end{equation}

\paragraph{Coiflets (\texttt{coif}).}
Coiflets are orthonormal filters that additionally impose vanishing moments on the low-pass filter: \texttt{coif}$k$ has $2k$ vanishing moments for $g$ and $2k-1$ for $h$, with length $q=6k$. The extra conditions make the filters nearly symmetric and the approximation coefficients close to samples of the signal. For
\texttt{coif1} ($q=6$),
\begin{equation}
\label{eq:coif1}
  h = \frac{1}{16\sqrt{2}}\bigl(1-\sqrt{7},\; 5+\sqrt{7},\; 14+2\sqrt{7},\;
      14-2\sqrt{7},\; 1-\sqrt{7},\; -3+\sqrt{7}\bigr).
\end{equation}

\paragraph{Biorthogonal splines (\texttt{bior}, \texttt{rbio}).}
Apart from the Haar case, no real filter of finite length satisfies both \Cref{eq:cqf,eq:sym}~\citep{daubechies1988orthonormal}. The spline family therefore gives up orthonormality \Cref{eq:cqf} for symmetry: its four filters satisfy the biorthogonality conditions \Crefrange{eq:biorth}{eq:biorth2}, and all satisfy \Cref{eq:sym} on their support. On top of these two conditions, the high-pass filters carry prescribed numbers of vanishing moments: $g$ has $p$ and $\tilde g$ has $\tilde p$, and the resulting filter bank is denoted \texttt{bior}$p$.$\tilde p$. For \texttt{bior2.2} ($q=6$), the analysis pair is
\begin{equation}
\label{eq:bior22}
  h = \frac{1}{4\sqrt{2}}\bigl(-1,\,2,\,6,\,2,\,-1\bigr) \ \text{ on } \{0,\dots,4\},
  \qquad
  g = \frac{1}{2\sqrt{2}}\bigl(1,\,-2,\,1\bigr) \ \text{ on } \{2,3,4\},
\end{equation}
and the synthesis pair is $\tilde h = \tfrac{1}{2\sqrt{2}}(1,2,1)$ on $\{1,2,3\}$ and $\tilde g = \tfrac{1}{4\sqrt{2}}(1,2,-6,2,1)$ on $\{1,\dots,5\}$, all four vanishing elsewhere.

The reverse family \texttt{rbio}$p$.$\tilde p$ is the same construction with the analysis pair $(h,g)$ and the synthesis pair $(\tilde h,\tilde g)$ exchanged: the filters that \texttt{bior} uses in \Cref{eq:fwt} are used in \Cref{eq:iwt} instead, and conversely.

\subsection{Learnable orthonormal wavelets}
\label{sec:method:learnable}

\subsubsection{Requirements}
The filter bank has so far been fixed before training. We propose a way to learn it instead, jointly with the model, in the hope of finding the representation in which the flow is easiest to learn. We restrict the search to orthonormal filter banks (\Cref{app:wavelets}), which gives two things at once: perfect reconstruction, since $\W^{-1}=\W^{\top}$, so no error is introduced when mapping the coefficients back to the time domain, and energy preservation, $\|\W x\|_F=\|x\|_F$, which prevents the transform from making \Cref{eq:rf-loss} small by scaling its output down rather than by fitting the data. \Citet{recoskie2018sparse} penalize $(\|h\|_2-1)^2$ for the same reason, but using a penalty rather than a guarantee. The only freedom left is how the energy is spread across levels.

\subsubsection{The polyphase matrix}
\label{app:polyphase}
Throughout, $h$ is supported on $\{0,\dots,q-1\}$ with $q=2K$ even, the high-pass filter is $g[n]=(-1)^{n}h[q-1-n]$ as in \Cref{app:wavelets}, and $H(z)=\sum_n h[n]z^{-n}$, $G(z)=\sum_n g[n]z^{-n}$ denote the corresponding transfer functions.

\begin{definition}[Polyphase matrix]
\label{def:polyphase}
Splitting a filter into its even- and odd-indexed coefficients gives its two polyphase components. Collecting those of $h$ in the first row and those of $g$ in the second defines the polyphase matrix
\begin{equation}
  E(z) \;=\;
  \begin{pmatrix}
    \sum_n h[2n]z^{-n} & \sum_n h[2n+1]z^{-n}\\[2pt]
    \sum_n g[2n]z^{-n} & \sum_n g[2n+1]z^{-n}
  \end{pmatrix}
  \;\in\; \R^{2\times2}[z^{-1}],
  \label{eq:polyphase}
\end{equation}
whose entries are polynomials of degree at most $K-1$ in $z^{-1}$.
\end{definition}

\begin{remark}
\label{rem:polyphase-bijection}
By construction,
\begin{equation}
  \begin{pmatrix} H(z)\\ G(z)\end{pmatrix}
  \;=\; E(z^{2})\begin{pmatrix} 1\\ z^{-1}\end{pmatrix}
  \quad\Longrightarrow\quad
  \left\{
  \begin{aligned}
    H(z) &= E_{11}(z^{2}) + z^{-1}E_{12}(z^{2}),\\
    G(z) &= E_{21}(z^{2}) + z^{-1}E_{22}(z^{2}).
  \end{aligned}
  \right.
  \label{eq:polyphase-recover}
\end{equation}

\end{remark}
 
\begin{definition}[Paraunitarity]
\label{def:Paraunitarity}
The matrix $E$ is paraunitary if
\begin{equation}
  E(z^{-1})^{\top}E(z) \;=\; \Id
  \qquad\text{on } |z|=1 ,
  \label{eq:paraunitary-def}
\end{equation}
that is, when $E(z)$ is unitary at every point of the unit circle. Since $E$ is square, \Cref{eq:paraunitary-def} is equivalent to $E(z)E(z^{-1})^{\top}=\Id$, the form used below.
\end{definition}

\subsubsection{Lattice parametrization}
Let $E$ be the $2\times2$ polyphase matrix of $(h,g)$, defined in \Cref{app:polyphase}, $R(\phi)=\left(\begin{smallmatrix}\cos\phi & \sin\phi\\-\sin\phi & \cos\phi\end{smallmatrix}\right)$ the rotation of angle $\phi$, and $\Lambda(z)=\diag(1,z^{-1})$ the delay. For filters of length $q=2K$,
\begin{flalign}
  &(h,g) \ \text{satisfies \Cref{eq:cqf-shift}}
   \;\iff\; E(z^{-1})^{\top}E(z)=\Id \ \text{ on } |z|=1,
  &\label{eq:paraunitary}\\[2pt]
  &\iff\; E(z)=\diag(1,-1)\,R(\phi_{K-1})\,\Lambda(z)\,R(\phi_{K-2})\,\Lambda(z)\cdots\Lambda(z)\,R(\phi_{0}),
  &\label{eq:lattice}
\end{flalign}
for some $\phi\in\R^{K}$. \Citet{le2024orthlatt} take these $K$ angles as free parameters. We constrain them: writing $(h_\phi,g_\phi)$ for the filters obtained from $\phi$ through \Cref{eq:lattice}, the second condition reads
\begin{equation}
  (h_\phi,g_\phi) \ \text{satisfies \Cref{eq:cqf-norm}}
  \;\iff\; \sum_{i=0}^{K-1}\phi_i \;\equiv\; \frac{\pi}{4} \pmod{2\pi}.
  \label{eq:vm-angles}
\end{equation}
See \Cref{app:lattice} for details on these equivalences and for the recursions mapping $\phi$ to $(h_\phi,g_\phi)$ and back.

A wavelet filter bank of length $q=2K$ is therefore described by $K-1$ free angles: we parametrize $\phi_0,\dots,\phi_{K-2}$ freely and set $\phi_{K-1}=\pi/4-\sum_{i<K-1}\phi_i$, so every parameter value is an orthonormal bank with one vanishing moment at every step of training, and the angles can be learned jointly with $\theta$. The vanishing moment is what keeps the approximation and detail coefficients apart in frequency. It forces $\sum_n g[n]=0$, so the filter producing $d_j$ in \Cref{eq:fwt} has no response at zero frequency, implying that it passes none of the signal's slow variation, and by \Cref{eq:cqf} the filter producing $a_j$ is then the one that carries it. Without the constraint both filters respond at every frequency, so neither is low- nor high-pass and the two outputs no longer separate the signal by scale. This does not matter when the bank is applied once, which is why \citet{le2024orthlatt} leave all $K$ angles free. We apply it $J$ times, so each level would inherit the same blurred split.


\clearpage

\FloatBarrier 
\section{Supporting Derivations and Proofs}
\label{app:lattice}
 
This section proves \Cref{eq:paraunitary} (orthonormality is paraunitarity) and \Cref{eq:vm-angles} (the normalization is an affine condition on the angles) of \Cref{sec:method:learnable}, and records the lattice factorization \Cref{eq:lattice} of~\citet{vaidyanathan1988lattice}, proving the direction used during training. It also derives the recursion that turns angles into filters and its inverse, and the orthogonality of the periodized transform.
 
\subsection{Orthonormality is paraunitarity}

\begin{proposition}
\label{prop:paraunitary}
$h$ satisfies \Cref{eq:cqf-shift} $\iff$ $E$ is paraunitary.
\end{proposition}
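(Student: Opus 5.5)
Write $E_{11}(z)=\sum_n h[2n]z^{-n}=:A(z)$ and $E_{12}(z)=\sum_n h[2n+1]z^{-n}=:B(z)$. I will show that \Cref{eq:cqf-shift} is equivalent to the power-complementary identity
\[
A(z)A(z^{-1})+B(z)B(z^{-1})=1 .
\]
The second row of $E$ is then fixed by the alternating flip, and I will show that it contributes no further condition. I use the form $E(z)E(z^{-1})^{\top}=\Id$ from \Cref{def:Paraunitarity}. Its entries are Laurent polynomials, so an identity on $|z|=1$ is the same as an identity of Laurent polynomials. This lets me compare coefficients.

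\textbf{Step 1 (the $(1,1)$ entry).} The $(1,1)$ entry is
\[
A(z)A(z^{-1})+B(z)B(z^{-1})=\sum_k z^{-k}\Bigl(\sum_n h[2n+2k]h[2n]+\sum_n h[2n+1+2k]h[2n+1]\Bigr).
\]
Splitting $\sum_m h[m+2k]h[m]$ over even and odd $m$, the coefficient of $z^{-k}$ is the autocorrelation $r[2k]=\sum_m h[m]h[m-2k]$ (after $k\mapsto -k$, using $r[2k]=r[-2k]$). So the $(1,1)$ entry equals $1$ iff $r[2k]=\delta_{k,0}$ for all $k$, which is exactly \Cref{eq:cqf-shift}. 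This already gives the direction ``paraunitary $\Rightarrow$ \Cref{eq:cqf-shift}''.

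\textbf{Step 2 (the remaining entries).} Let $q=2K$. For the high-pass filter, $g[2n]=h[q-1-2n]=h[2(K-1-n)+1]$ and $g[2n+1]=-h[2(K-1-n)]$. Hence
\[
E_{21}(z)=z^{-(K-1)}B(z^{-1}),\qquad E_{22}(z)=-z^{-(K-1)}A(z^{-1}).
\]
Substituting into $E(z)E(z^{-1})^{\top}$:
\begin{itemize}
\item The $(2,2)$ entry is $B(z^{-1})B(z)+A(z^{-1})A(z)$, since the monomial factors $z^{\mp(K-1)}$ cancel. It is therefore the same expression as the $(1,1)$ entry.
\item The $(1,2)$ entry is $z^{K-1}\bigl(A(z)B(z)-B(z)A(z)\bigr)=0$, identically.
\item The $(2,1)$ entry is its transpose-conjugate counterpart, so it vanishes as well.
\end{itemize}
Therefore $E$ is paraunitary iff the $(1,1)$ identity holds, and by Step 1 this is iff \Cref{eq:cqf-shift} holds.

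\textbf{Main obstacle.} The only real care is bookkeeping: matching the indices of the polyphase components of $g$ to those of $h$ under the flip $n\mapsto q-1-n$, which swaps parities precisely because $q$ is even, and getting the sign and the monomial shift $z^{-(K-1)}$ right. Once those two formulas for $E_{21}$ and $E_{22}$ are in place, the off-diagonal cancellation and the equality of the two diagonal entries are immediate. The coefficient comparison in Step 1 is routine.
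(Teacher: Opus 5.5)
Your proof is correct and has the same structure as the paper's. In both, the $(1,1)$ entry of $E(z)E(z^{-1})^{\top}$ has the even-lag autocorrelations $\sum_n h[n]h[n-2k]$ as its Laurent coefficients, and the alternating flip with $q$ even makes the remaining entries impose no further condition. The difference is that you do the flip bookkeeping on polyphase components, via $E_{21}(z)=z^{-(K-1)}B(z^{-1})$ and $E_{22}(z)=-z^{-(K-1)}A(z^{-1})$, which shows at once that $E(z)E(z^{-1})^{\top}$ is a scalar multiple of $\Id$ for every $h$; the paper instead works on correlation sums by index substitution (e.g.\ $S_k=-S_k$).
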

 
\begin{proof}
Splitting each correlation over $n$ into its even and odd indexed terms, the four entries of $E(z)\,E(z^{-1})^{\top}$ collect the even-lag correlations of the two filters:
\begin{equation}
  E(z)\,E(z^{-1})^{\top}
  \;=\; \sum_{k\in\Z}
  \begin{pmatrix}
    \sum_n h[n]\,h[n-2k] & \sum_n h[n]\,g[n-2k]\\[2pt]
    \sum_n g[n]\,h[n-2k] & \sum_n g[n]\,g[n-2k]
  \end{pmatrix} z^{-k}.
  \label{eq:corr-polyphase}
\end{equation}
Since a Laurent series vanishes if and only if all its coefficients do, $E(z)E(z^{-1})^{\top}=\Id$ holds if and only if
\begin{equation}
  \sum_n h[n]h[n-2k] = \sum_n g[n]g[n-2k] = \delta_{k,0},
  \qquad
  \sum_n h[n]g[n-2k] = 0
  \qquad \forall k\in\Z.
  \label{eq:cqf-full}
\end{equation}

($\Rightarrow$) Assume \Cref{eq:cqf-shift}. Using $g[n]=(-1)^{n}h[q-1-n]$ and $(-1)^{n}(-1)^{n-2k}=1$,
\begin{equation*}
  \sum_n g[n]g[n-2k]
  \;=\; \sum_n h[q-1-n]\,h[q-1-n+2k]
  \;=\; \sum_m h[m]\,h[m+2k]
  \;=\; \delta_{k,0}.
\end{equation*}
For the cross term, write $S_k=\sum_n h[n]g[n-2k]=\sum_n(-1)^{n}h[n]h[q-1+2k-n]$ and substitute $m=q-1+2k-n$. As $q$ is even, $(-1)^{n}=(-1)^{q-1+2k-m}=-(-1)^{m}$, so $S_k=-S_k$ and $S_k=0$, and \Cref{eq:cqf-full} holds.

($\Leftarrow$) Immediate, \Cref{eq:cqf-shift} being part of \Cref{eq:cqf-full}.

\end{proof}
 
\subsection{The lattice factorization}
\label{app:lattice-fact}

Throughout, $h$ is supported on $\{0,\dots,2K-1\}$, $g$ is its alternating flip $g[n]=(-1)^{n}h[2K-1-n]$, and $E$ is their polyphase matrix (\Cref{app:polyphase}). For $\phi=(\phi_0,\dots,\phi_{K-1})\in\R^{K}$, let
\begin{equation*}
  E_\phi(z) \;=\; R(\phi_{K-1})\,\Lambda(z)\,R(\phi_{K-2})\cdots\Lambda(z)\,R(\phi_0)
\end{equation*}
be the lattice product of \Cref{eq:lattice}, and let $h_\phi$ be the filter read off its first row through \Cref{eq:polyphase-recover}.

\begin{proposition}[\citealp{vaidyanathan1988lattice}]
\label{prop:lattice}
$E$ is paraunitary if and only if $E(z)=\diag(1,-1)\,E_\phi(z)$ for some $\phi\in\R^{K}$, equivalently, if and only if $h=h_\phi$ for some $\phi\in\R^{K}$.
\end{proposition}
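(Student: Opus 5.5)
The plan is to prove the two directions separately, working entirely with the polyphase matrix. \Cref{prop:paraunitary} lets me use paraunitarity of $E$ interchangeably with \Cref{eq:cqf-shift}. The second formulation ($h=h_\phi$) will follow from the first once I check that the lattice product automatically has the alternating-flip structure of \Cref{app:polyphase}. In terms of the polyphase entries, $g[n]=(-1)^n h[2K-1-n]$ is equivalent to
\[
E_{21}(z)=z^{-(K-1)}E_{12}(z^{-1}),\qquad E_{22}(z)=-z^{-(K-1)}E_{11}(z^{-1}).
\]
I will call this the \emph{flip structure} of degree $K-1$.

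\textbf{($\Leftarrow$).} Each rotation $R(\phi_i)$ is a real orthogonal constant matrix. The delay satisfies $\Lambda(z^{-1})^{\top}\Lambda(z)=\Id$ on $|z|=1$, and $\diag(1,-1)$ is orthogonal. A product of paraunitary matrices is paraunitary, so $\diag(1,-1)E_\phi$ is paraunitary. For the filter version, I will show by induction on $K$ that $\diag(1,-1)E_\phi$ has the flip structure of degree $K-1$. The base case is $\diag(1,-1)R(\phi_0)=\left(\begin{smallmatrix}c&s\\ s&-c\end{smallmatrix}\right)$, which is the flip of $h=(c,s)$. The step follows by multiplying by $\Lambda(z)R(\phi)$ and checking the two identities above directly. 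Consequently the second row of $\diag(1,-1)E_\phi$ is exactly the polyphase of the flip of $h_\phi$. Paraunitarity plus \Cref{prop:paraunitary} then gives that $h_\phi$ satisfies \Cref{eq:cqf-shift}, so the two formulations of the statement agree.

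\textbf{($\Rightarrow$).} I argue by induction on $K$, peeling off one factor at a time. For $K=1$, $E$ is a constant orthogonal matrix with flip structure, i.e.\ $\left(\begin{smallmatrix}a&b\\ b&-a\end{smallmatrix}\right)$ with $a^2+b^2=1$. Setting $(a,b)=(\cos\phi_0,\sin\phi_0)$ gives $E=\diag(1,-1)R(\phi_0)$.

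For $K\ge2$, write $E(z)=\sum_{k=0}^{K-1}E_k z^{-k}$ and set $F=\diag(1,-1)E$. The $z^{-(K-1)}$ coefficient of $F(z)F(z^{-1})^{\top}=\Id$ gives $F_{K-1}F_0^{\top}=0$. I then choose $\phi_{K-1}$ so that the first row of $R(\phi_{K-1})^{\top}F_{K-1}$ vanishes. This is possible because the flip structure forces $F_{K-1}$ to have rank at most one: its rows are $(h[2K-2],h[2K-1])$ and, up to sign and order, $(h[1],h[0])$, with orthogonality linking them. Orthogonality $F_{K-1}F_0^{\top}=0$ then forces the second row of $R^{\top}F_0$ to vanish. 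Hence $R(\phi_{K-1})^{\top}F(z)=\Lambda(z)F'(z)$ with $F'$ a polynomial matrix of degree $K-2$. $F'$ is paraunitary as a product of paraunitary matrices, and I need to check that $\diag(1,-1)F'$ inherits the flip structure of degree $K-2$. The induction hypothesis then gives $F'=E_{\phi'}$, and hence $F=E_\phi$.

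The main obstacle is this reduction step. Two things must be checked. First, the flip structure is preserved by $R^{\top}$ and $\Lambda^{-1}$, which is a short computation with the identities above. Second, the degenerate case in which the extreme taps vanish must be handled. If $F_{K-1}=0$, then $F_0=0$ as well by the flip structure, and any $\phi_{K-1}$ works; in particular, taking $R=\Id$ with a $\Lambda$ absorbed amounts to viewing $h$ as supported on a shifted window. I would first try to treat this uniformly by showing that $F_{K-1}$ and $F_0$ have complementary one-dimensional row spaces whenever either is nonzero. If that fails, I would handle the degenerate case by an explicit choice of angle, for instance $\phi_{K-1}\in\{0,\pi/2\}$, and verify directly that the remainder still has degree $K-2$.
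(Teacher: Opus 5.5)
Your proof takes the standard lattice-reduction route and is sound in outline, but one inference in the peeling step is wrong as written; a one-line fix is given below. Your ($\Leftarrow$) is the same as the paper's proof: each factor is paraunitary, and paraunitarity is closed under products. The paper proves only that direction and cites the original lattice reference for ($\Rightarrow$). Its ``from filters to angles'' recursion is, however, the tap-level form of your peeling step. The angle that rotates the column of $F_{K-1}$ onto $e_2$ satisfies $\tan\phi_{K-1}=h[2K-1]/h[0]$, and your two vanishing rows are its four overflow taps. Once the step below is repaired, your induction gives a complete proof of the direction the paper leaves to the citation.

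The inference ``$F_{K-1}F_0^{\top}=0$ forces the second row of $R^{\top}F_0$ to vanish'' does not follow. That relation only says the rows of $F_0$ are orthogonal to those of $F_{K-1}$. Writing $F_{K-1}=u\,w^{\top}$ with $u,w\neq0$, it gives $F_0w=0$. After choosing $R$ with $R^{\top}u\parallel e_2$, the second row of $R^{\top}F_0$ is a multiple of $u^{\top}F_0$. So what you need is that the \emph{columns} of $F_0$ are orthogonal to $u$, and row orthogonality does not give that.

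The fix is to take the $z^{-(K-1)}$ coefficient of the other half of paraunitarity, $F(z^{-1})^{\top}F(z)=\Id$, which is equivalent for square matrices as the paper notes. That coefficient is $F_0^{\top}F_{K-1}=0$. With $R^{\top}F_{K-1}=e_2\tilde w^{\top}$ and $\tilde w\neq0$, it reads $\bigl((R^{\top}F_0)^{\top}e_2\bigr)\tilde w^{\top}=0$, so the second row of $R^{\top}F_0$ vanishes.

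Alternatively, the flip structure of $\diag(1,-1)F$ in degree $d$ is equivalent to $F(z)=z^{-d}J^{\top}F(z^{-1})J$ with $J=\left(\begin{smallmatrix}0&1\\-1&0\end{smallmatrix}\right)$. Its top coefficient gives $F_0=JF_{K-1}J^{\top}$, so the column of $F_0$ is $Ju\perp u$. This form also makes your preservation check immediate: $J$ commutes with $R$, and $\Lambda(z)^{-1}J^{\top}=zJ^{\top}\Lambda(z^{-1})^{-1}$ lowers $d$ by one.

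The same identity settles the degenerate case without a fallback: $F_0=0$ exactly when $F_{K-1}=0$, and then any angle works. In your fallback remark, the property to aim for is complementary column spaces, not row spaces.
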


This is the two-channel real FIR case of the lattice factorization theorem for paraunitary systems; we refer to~\citet{vaidyanathan1988lattice} for the proof and record only the direction used during training.

\begin{proof}[Proof of ($\Leftarrow$), the direction used during training]
Every factor of $\diag(1,-1)\,E_\phi$ is paraunitary:
\begin{equation*}
  R(\phi_i)^{\top}R(\phi_i)=\Id,
  \qquad
  \diag(1,-1)^{\top}\diag(1,-1)=\Id,
  \qquad
  \Lambda(z^{-1})^{\top}\Lambda(z)=\diag(1,\,z\cdot z^{-1})=\Id ,
\end{equation*}
the first two being constant orthogonal matrices. 

Paraunitarity is stable under products: if $A(z^{-1})^{\top}A(z)=B(z^{-1})^{\top}B(z)=\Id$, then $(AB)(z^{-1})^{\top}(AB)(z)=B(z^{-1})^{\top}A(z^{-1})^{\top}A(z)B(z)=\Id$. Hence $\diag(1,-1)\,E_\phi$ is paraunitary, so every $\phi\in\R^{K}$ yields an admissible filter bank.
\end{proof}

The converse ($\Rightarrow$), that every admissible bank is reached by some $\phi$, is proved in~\citet{vaidyanathan1988lattice}. Beyond the statement, we use two constructive ingredients.

\paragraph{From angles to filters.}
For $i=0,\dots,K-1$, let $E^{(i)}(z)=R(\phi_i)\,\Lambda(z)\,R(\phi_{i-1})\cdots\Lambda(z)\,R(\phi_0)$ be the partial products of \Cref{eq:lattice}, so $E^{(K-1)}=E_\phi$, and let $(h^{(i)},g^{(i)})$ be the filter pair of $E^{(i)}$ through \Cref{eq:polyphase-recover}. The entries of $E^{(i)}$ have degree at most $i$ in $z^{-1}$, so $h^{(i)}$ and $g^{(i)}$ have length $2(i{+}1)$. Write $c_i=\cos\phi_i$, $s_i=\sin\phi_i$. At $i=0$,
\begin{equation*}
  E^{(0)}(z) \;=\; R(\phi_0) \;=\;
  \begin{pmatrix} c_0 & s_0\\ -s_0 & c_0 \end{pmatrix}
  \quad\xRightarrow{\ \text{\Cref{eq:polyphase-recover}}\ }\quad
  \begin{aligned}
    H^{(0)}(z) &= c_0 + s_0\,z^{-1},\\
    G^{(0)}(z) &= -s_0 + c_0\,z^{-1},
  \end{aligned}
  \qquad
  \begin{aligned}
    h^{(0)} &= (c_0,\,s_0),\\
    g^{(0)} &= (-s_0,\,c_0).
  \end{aligned}
\end{equation*}
For $i\ge1$, peeling one factor off the product,
\begin{equation*}
  E^{(i)}(z)
  \;=\; R(\phi_i)\,\Lambda(z)\,E^{(i-1)}(z)
  \;=\; \begin{pmatrix} c_i & s_i\,z^{-1}\\ -s_i & c_i\,z^{-1}\end{pmatrix}
  \begin{pmatrix}
    E^{(i-1)}_{11}(z) & E^{(i-1)}_{12}(z)\\[2pt]
    E^{(i-1)}_{21}(z) & E^{(i-1)}_{22}(z)
  \end{pmatrix},
\end{equation*}
and feeding the first row through \Cref{eq:polyphase-recover}, where the substitution $z\mapsto z^{2}$ turns the factor $z^{-1}$ into $z^{-2}$,
\begin{align*}
  H^{(i)}(z)
  &= E^{(i)}_{11}(z^{2}) + z^{-1}E^{(i)}_{12}(z^{2})\\
  &= c_i\bigl[E^{(i-1)}_{11}(z^{2}) + z^{-1}E^{(i-1)}_{12}(z^{2})\bigr]
     \;+\; s_i\,z^{-2}\bigl[E^{(i-1)}_{21}(z^{2}) + z^{-1}E^{(i-1)}_{22}(z^{2})\bigr]\\
  &= c_i\,H^{(i-1)}(z) \;+\; s_i\,z^{-2}\,G^{(i-1)}(z),
\end{align*}
and identically for the second row, $G^{(i)}(z) = -s_i\,H^{(i-1)}(z) + c_i\,z^{-2}\,G^{(i-1)}(z)$. Since $z^{-2}$ delays by two samples, matching coefficients of $z^{-n}$ gives for $i=1,\dots,K-1$ 
\begin{equation}
  h^{(0)}=(c_0,\,s_0),
  \quad
  g^{(0)}=(-s_0,\,c_0),
  \qquad
  \begin{aligned}
    h^{(i)}[n] &= c_i\,h^{(i-1)}[n] + s_i\,g^{(i-1)}[n-2],\\
    g^{(i)}[n] &= -s_i\,h^{(i-1)}[n] + c_i\,g^{(i-1)}[n-2].
  \end{aligned}
  \label{eq:lattice-taps}
\end{equation}
and $h_\phi=h^{(K-1)}$. By induction, $g^{(i)}$ is minus the alternating flip of $h^{(i)}$, which is where the $\diag(1,-1)$ in \Cref{prop:lattice} comes from: with $g_\phi$ the alternating flip of $h_\phi$, the convention of \Cref{app:wavelets}, the pair $(h_\phi,g_\phi)$ has polyphase matrix $\diag(1,-1)E_\phi$. 

\begin{remark}
At $K=2$,
\begin{equation}
  h_\phi = (c_1c_0,\; c_1s_0,\; -s_1s_0,\; s_1c_0),
  \label{eq:lattice-taps-K2}
\end{equation}
which at $(\phi_0,\phi_1)=(\frac{\pi}{3},-\frac{\pi}{12})$ returns the Daubechies-2 filter \Cref{eq:db2}, using $\cos\frac{\pi}{12}=\frac{\sqrt6+\sqrt2}{4}$ and $\sin\frac{\pi}{12}=\frac{\sqrt6-\sqrt2}{4}$.
\end{remark}

\paragraph{From filters to angles.}
Fix $i\in\{1,\dots,K-1\}$, the case $i=0$ is treated separately at the end. 
For each $n$, \Cref{eq:lattice-taps} rotates the stage-$(i{-}1)$ pair by $R(\phi_i)$, which is undone by $R(\phi_i)^{-1}=R(\phi_i)^{\top}$:
\begin{equation*}
  \begin{pmatrix} h^{(i)}[n]\\ g^{(i)}[n]\end{pmatrix}
  \;=\; R(\phi_i)
  \begin{pmatrix} h^{(i-1)}[n]\\ g^{(i-1)}[n-2]\end{pmatrix},
  \qquad
  \begin{pmatrix} h^{(i-1)}[n]\\ g^{(i-1)}[n-2]\end{pmatrix}
  \;=\; R(\phi_i)^{\top}
  \begin{pmatrix} h^{(i)}[n]\\ g^{(i)}[n]\end{pmatrix},
\end{equation*}
componentwise
\begin{equation}
  h^{(i-1)}[n] = c_i\,h^{(i)}[n] - s_i\,g^{(i)}[n],
  \qquad
  g^{(i-1)}[n-2] = s_i\,h^{(i)}[n] + c_i\,g^{(i)}[n].
  \label{eq:lattice-inv}
\end{equation}
As functions of $n$, the right-hand sides of \Cref{eq:lattice-inv} can be nonzero for $n\in\{0,\dots,2i+1\}$, hence define $h^{(i-1)}$ on $\{0,\dots,2i+1\}$ and $g^{(i-1)}$ on $\{-2,\dots,2i-1\}$; but a stage-$(i{-}1)$ pair is supported on $\{0,\dots,2i-1\}$, so $\phi_i$ is determined by the requirement that the four overflow taps vanish: $h^{(i-1)}[2i]=h^{(i-1)}[2i+1]=0$ and $g^{(i-1)}[-2]=g^{(i-1)}[-1]=0$. Substituting $g^{(i)}[n]=-(-1)^{n}h^{(i)}[2i+1-n]$ into \Cref{eq:lattice-inv}, these four conditions reduce to
\begin{equation}
  \tan\phi_i
  \;=\; \frac{h^{(i)}[2i+1]}{h^{(i)}[0]}
  \;=\; -\,\frac{h^{(i)}[2i]}{h^{(i)}[1]},
  \qquad i=K-1,\dots,1,
  \label{eq:lattice-peel}
\end{equation}
the two ratios agreeing because $h^{(i)}[0]h^{(i)}[2i]+h^{(i)}[1]h^{(i)}[2i+1] =\langle h^{(i)},\,h^{(i)}[\,\cdot-2i\,]\rangle=0$ for any admissible filter and any lag $2i\neq0$. We take the representative $\phi_i\in(-\frac{\pi}{2},\frac{\pi}{2}]$: the alternative $\phi_i+\pi$ merely negates $h^{(i-1)}$, a sign that propagates through the remaining steps and is absorbed by $\phi_0$ below. Eliminating $g^{(i)}$ from \Cref{eq:lattice-inv} by the same flip, the down-step reads
\begin{equation}
  h^{(i-1)}[n]
  \;=\; c_i\,h^{(i)}[n] + s_i\,(-1)^{n}\,h^{(i)}[2i+1-n],
  \qquad n=0,\dots,2i-1,
  \label{eq:lattice-stepdown}
\end{equation}
and $g^{(i-1)}$ is again minus the alternating flip of $h^{(i-1)}$ (the downward analogue of the induction above), so the recursion closes on $h$ alone. Starting from $h^{(K-1)}=h$ and iterating $i=K-1,\dots,1$ leaves the length-two filter $h^{(0)}$, to which \Cref{eq:lattice-peel} does not extend: at $i=0$ the relevant correlation is the lag-zero one, $\langle h^{(0)},h^{(0)}\rangle=1\neq0$, and the two ratios contradict each other. None is needed: the base case of \Cref{eq:lattice-taps} gives $h^{(0)}=(\cos\phi_0,\sin\phi_0)$ directly, a unit vector whose both taps carry the angle, hence
\begin{equation*}
  \phi_0 \;=\; \operatorname{atan2}\!\bigl(h^{(0)}[1],\,h^{(0)}[0]\bigr),
\end{equation*}
determined modulo $2\pi$, unlike the $\phi_i$, $i\geq 1$, of which only the tangent is pinned.
\begin{remark}
For the Daubechies-2 filter \Cref{eq:db2}, $\tan\phi_1=\frac{h[3]}{h[0]}=\frac{1-\sqrt3}{1+\sqrt3}=\sqrt3-2$, so $\phi_1=-\frac{\pi}{12}$, and \Cref{eq:lattice-stepdown} leaves $h^{(0)}=(\cos\frac{\pi}{3},\,\sin\frac{\pi}{3})$, recovering the angles of \Cref{eq:lattice-taps-K2}.
\end{remark}

\subsubsection{The normalization is an affine condition on the angles}
\label{app:angles}

\begin{proposition}
\label{prop:moments}
Let $\Phi=\sum_{i=0}^{K-1}\phi_i$. Then $\sum_n h_\phi[n]=\sqrt2\,\sin\bigl(\Phi+\frac{\pi}{4}\bigr)$. In particular, \Cref{eq:vm-angles} holds: $h_\phi$ satisfies \Cref{eq:cqf-norm} if and only if $\Phi\equiv\pi/4 \pmod{2\pi}$.
\end{proposition}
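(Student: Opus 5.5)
The plan is to evaluate the transfer function at $z=1$, since $\sum_n h_\phi[n]=H_\phi(1)$, and to exploit the lattice structure at that point, where the delay disappears. By \Cref{eq:polyphase-recover}, $H(1)=E_{11}(1)+E_{12}(1)$, i.e.\ $H(1)$ is the first row of $E(1)$ applied to $(1,1)^\top$. The filter $h_\phi$ is read off the first row of the lattice product, and the $\diag(1,-1)$ prefactor of \Cref{prop:lattice} leaves the first row unchanged. So it suffices to compute the first row of $E_\phi(1)$.

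At $z=1$ we have $\Lambda(1)=\Id$, so $E_\phi(1)=R(\phi_{K-1})R(\phi_{K-2})\cdots R(\phi_0)$. Plane rotations compose additively, $R(\alpha)R(\beta)=R(\alpha+\beta)$, so $E_\phi(1)=R(\Phi)$. Its first row is $(\cos\Phi,\sin\Phi)$, hence
\[
\sum_n h_\phi[n]=\cos\Phi+\sin\Phi=\sqrt2\,\sin\!\bigl(\Phi+\tfrac{\pi}{4}\bigr).
\]
As a cross-check I would run the same argument through the recursion \Cref{eq:lattice-taps}. Summing over $n$ gives $S_h^{(i)}=c_iS_h^{(i-1)}+s_iS_g^{(i-1)}$ and $S_g^{(i)}=-s_iS_h^{(i-1)}+c_iS_g^{(i-1)}$. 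Thus the pair of tap sums is rotated by $R(\phi_i)$ at each step, starting from $(c_0,-s_0+c_0)$, and this reproduces the same closed form.

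For the equivalence, \Cref{eq:cqf-norm} requires $\sin(\Phi+\pi/4)=1$, which holds iff $\Phi+\pi/4\equiv\pi/2\pmod{2\pi}$, i.e.\ $\Phi\equiv\pi/4\pmod{2\pi}$.

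The only delicate step is the sign and convention bookkeeping. I need to confirm that the first row of $E_\phi$ really encodes $h_\phi$ with the $R(\phi)=\left(\begin{smallmatrix}\cos&\sin\\-\sin&\cos\end{smallmatrix}\right)$ convention, and that the products of these rotations add angles rather than subtract them. As a sanity check I would evaluate the formula on the \texttt{db2} angles $(\pi/3,-\pi/12)$. These give $\Phi=\pi/4$, hence a tap sum of $\sqrt2$, consistent with \Cref{eq:db2}.
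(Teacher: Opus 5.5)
Your main argument is exactly the paper's: set $z=1$ so that $\Lambda(1)=\Id$, collapse the rotations to $R(\Phi)$, and read $\cos\Phi+\sin\Phi=\sqrt2\sin(\Phi+\pi/4)$ off the first row. Your optional cross-check has one slip. The initial tap sums are $(c_0+s_0,\,c_0-s_0)^\top=R(\phi_0)(1,1)^\top$, not $(c_0,\,c_0-s_0)$; with that correction the recursion yields $R(\Phi)(1,1)^\top$ and agrees with the main computation.
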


\begin{proof}
Since $\Lambda(1)=\Id$ and $R(\alpha)R(\beta)=R(\alpha+\beta)$, evaluating \Cref{eq:lattice} at $z=1$ gives $E_\phi(1)=R(\Phi)$. Hence, by \Cref{eq:polyphase-recover} at $z=1$,
\begin{equation*}
  \sum_n h_\phi[n]
  \;=\; H_\phi(1)
  \;=\; \bigl[E_\phi\bigr]_{11}(1)+\bigl[E_\phi\bigr]_{12}(1)
  \;=\; \cos\Phi+\sin\Phi
  \;=\; \sqrt2\,\sin\Bigl(\Phi+\frac{\pi}{4}\Bigr),
\end{equation*}
which equals $\sqrt2$ if and only if $\sin\bigl(\Phi+\frac{\pi}{4}\bigr)=1$, i.e.\ $\Phi\equiv\pi/4 \pmod{2\pi}$.
\end{proof}

\subsection{Orthogonality of the periodized transform}
\label{app:periodized}

The properties invoked in \Cref{sec:method:learnable} concern the finite matrix $\W$ of the periodized transform rather than the filters themselves. For any $h$ satisfying \Cref{eq:cqf-shift}, one periodized analysis stage on $\R^{L}$, $L$ even, is an orthogonal matrix: the even circular shifts of the periodized low and high-pass filters together form an orthonormal basis of $\R^{L}$~\citep{mallat2008wavelet}. The $J$-level matrix $\W$, a product of such stages, is therefore orthogonal as well, giving $\W^{-1}=\W^{\top}$ and $\|\W x\|_F=\|x\|_F$. Since \Cref{prop:lattice} produces filters satisfying \Cref{eq:cqf-shift} at every $\phi\in\R^{K}$, this holds at every value of the learned angles. The conclusion is specific to periodization: under symmetric extension the analysis operator of the same filters is a rectangular, merely expansive frame, and both properties fail.

The stage is a genuine multiresolution step only when the periodized filters do not wrap around. In the notation of \Cref{sec:background:wavelets}, stage $j$ filters the approximation $a_{j-1}\in\R^{L_{j-1}}$, so the condition reads $q\le L_{j-1}$; with $2^{J}\mid T$ and $L_j=T/2^{j}$, the lengths halve at every stage and the constraint binds at the coarsest one, $q\le L_{J-1}=T/2^{J-1}$.


\clearpage
\section{Velocity Network Architecture}
\label{app:architecture}

This section provides implementation details for the channel-token velocity network introduced in \Cref{sec:method:architecture}, including the time-conditioning pathway, AdaLN-Zero transformer blocks, adaptive read-out, and computational cost.

\subsection{Time conditioning}
\label{app:architecture:time}

We condition the velocity network on the continuous flow time $t\in[0,1]$ using a DiT-style pathway~\citep{peebles2023dit}. A sinusoidal featurization of $t$ is passed through a two-layer MLP with SiLU activations, producing a time embedding $\psi(t)\in\R^{d_{\mathrm{t}}}$. This representation is then projected to the transformer width, yielding a conditioning vector $\eta(t)\in\R^{d_{\mathrm{m}}}$. The resulting vector is shared across all channel tokens and supplies the conditioning signal to every transformer block as well as to the final adaptive normalization.

\subsection{AdaLN-Zero transformer blocks}
\label{app:architecture:blocks}

Starting from the channel tokens $u^{(0)}\in\R^{F\times d_{\mathrm{m}}}$ defined in \Cref{eq:token}, the network applies $B$ pre-norm transformer blocks. Block $i\in\{0,\ldots,B-1\}$ maps $u^{(i)}\in\R^{F\times d_{\mathrm{m}}}$ to $u^{(i+1)}\in\R^{F\times d_{\mathrm{m}}}$. Following AdaLN-Zero~\citep{peebles2023dit}, each block derives six modulation vectors from the time conditioning:
\begin{equation}
\label{eq:app-adaln-parameters}
\bigl(\beta^{(i)}_1,\gamma^{(i)}_1,g^{(i)}_1,\beta^{(i)}_2,\gamma^{(i)}_2,g^{(i)}_2\bigr)
=
\mathbf{A}_i\!\left(\eta(t)\right)
\in\R^{6d_{\mathrm{m}}},
\qquad
\mathbf{A}_i=\mathrm{Linear}\circ\mathrm{SiLU}.
\end{equation}
These modulation vectors are shared across the $F$ channel tokens. The attention and feed-forward branches are given by \Cref{eq:block-attn} and \Cref{eq:block-mlp}, respectively:
\begin{align}
h^{(i)}_1 &= \mathrm{LN}\bigl(u^{(i)}\bigr)\odot\bigl(1+\gamma^{(i)}_1\bigr) + \beta^{(i)}_1, &
\tilde u^{(i)} &= u^{(i)} + g^{(i)}_1\odot \mathrm{MHSA}\bigl(h^{(i)}_1,h^{(i)}_1,h^{(i)}_1\bigr), \label{eq:block-attn}\\
h^{(i)}_2 &= \mathrm{LN}\bigl(\tilde u^{(i)}\bigr)\odot\bigl(1+\gamma^{(i)}_2\bigr) + \beta^{(i)}_2, &
u^{(i+1)} &= \tilde u^{(i)} + g^{(i)}_2\odot \mathrm{MLP}\bigl(h^{(i)}_2\bigr). \label{eq:block-mlp}
\end{align}

Here, $\mathrm{LN}$ denotes layer normalization without learned affine parameters, $\mathrm{MHSA}$ is multi-head self-attention over the $F$ channel tokens, and $\odot$ denotes element-wise multiplication with broadcasting over the token dimension. The feed-forward network is a two-layer MLP with GELU activation and hidden width $4d_{\mathrm{m}}$. Because attention is performed over channels, rather than over the $T$ wavelet coefficients, its sequence length is $F$, so its cost $O(F^2 d_{\mathrm{m}})$ per block is small next to the $O(F d_{\mathrm{m}}^2)$ of the token-wise projections and MLP.

The final linear layer of every modulation network $\mathbf{A}_i$ is initialized to zero. Consequently, the residual gates $g^{(i)}_1$ and $g^{(i)}_2$ are zero at initialization, so both residual branches are initially inactive and each transformer block implements the identity map. This AdaLN-Zero initialization provides the stable starting point used in DiT~\citep{peebles2023dit}.

\subsection{Adaptive read-out}
\label{app:architecture:readout}

Let $u=u^{(B)}$ denote the output of the final transformer block. Before applying the level-specific heads of \Cref{eq:readout}, we use a final adaptive normalization whose scale and shift are obtained from the same time-conditioning representation:
\begin{equation}
\label{eq:app-output-conditioning}
\bigl(\beta_{\mathrm{o}},\gamma_{\mathrm{o}}\bigr)
=
\mathbf{A}_{\mathrm{out}}\!\left(\eta(t)\right),
\qquad
h=
\mathrm{LN}(u)\odot\left(1+\gamma_{\mathrm{o}}\right)+\beta_{\mathrm{o}}.
\end{equation}
The modulation parameters are shared across all channel tokens, and the final affine layer of $\mathbf{A}_{\mathrm{out}}$ is zero-initialized following the AdaLN-Zero initialization used in the transformer blocks. The resulting channel representations $h_f$ are projected back to the individual wavelet levels by the heads $\mathbf{H}_j$ as defined in \Cref{eq:readout}.

\subsection{Parameter count and computational cost }
\label{app:cost:breakdown}

\Cref{tab:param-breakdown} decomposes the $9{,}779{,}352$ trainable parameters of the default configuration of \Cref{sec:setup:implementation} into the components of \Cref{sec:method:architecture}. We write $d_{\mathrm{m}}$ for the token width, $B$ for the number of blocks, $d_{\mathrm{t}}$ for the time-embedding width, $J$ for the decomposition depth and $D=\sum_{j=1}^{J+1} L_j = T$ for the number of coefficients per channel. 

\begin{table}[!htbp]
  \centering
  \small
  \setlength{\tabcolsep}{5pt}
  \renewcommand{\arraystretch}{1.1}
  \begin{tabular}{|l|l|r|r|}
    \hline
    \textbf{Component} & \textbf{Formula} & \textbf{Parameters} & \textbf{Share} \\
    \hline
    MLP, all blocks              & $B\,(8d_{\mathrm{m}}^{2}+5d_{\mathrm{m}})$   & $4{,}204{,}544$ & $42.99\%$ \\
    AdaLN-Zero, all blocks       & $B\,(6d_{\mathrm{m}}^{2}+6d_{\mathrm{m}})$   & $3{,}158{,}016$ & $32.29\%$ \\
    Channel attention, all blocks& $B\,(4d_{\mathrm{m}}^{2}+4d_{\mathrm{m}})$   & $2{,}105{,}344$ & $21.53\%$ \\
    \hline
    Time embedding               & $8d_{\mathrm{t}}^{2}+5d_{\mathrm{t}}$        & $131{,}712$     & $1.35\%$ \\
    Output modulation            & $2d_{\mathrm{m}}^{2}+2d_{\mathrm{m}}$        & $131{,}584$     & $1.35\%$ \\
    Conditioning projection      & $d_{\mathrm{m}}d_{\mathrm{t}}+d_{\mathrm{m}}$& $33{,}024$      & $0.34\%$ \\
    \hline
    Level embedders $\mathbf{W}^{\mathrm{emb}}_j$ & $d_{\mathrm{m}}D+(J{+}1)d_{\mathrm{m}}$ & $7{,}168$ & $0.07\%$ \\
    Level heads $\mathbf{H}_j$   & $d_{\mathrm{m}}D+D$                          & $6{,}168$       & $0.06\%$ \\
    Channel identity             & $F d_{\mathrm{m}}$                           & $1{,}792$       & $0.02\%$ \\
    \hline
    \textbf{Total}               & \Cref{eq:paramcount}                         & $\mathbf{9{,}779{,}352}$ & $100\%$ \\
    \hline
  \end{tabular}
  \caption{Trainable parameters of the default configuration on ETTh1 at $T=24$ ($d_{\mathrm{m}}=256$, $B=8$, $d_{\mathrm{t}}=128$, MLP ratio $4$, $J=3$, $D=T=24$ coefficients per channel, $F=7$).}
  \label{tab:param-breakdown}
\end{table}

Summing the second column of \Cref{tab:param-breakdown} gives
\begin{equation}
\begin{aligned}
  \#\text{parameters} \;=\;&
  \underbrace{B\bigl(18d_{\mathrm{m}}^{2}+15d_{\mathrm{m}}\bigr)}_{\text{blocks}}
  \;+\;\underbrace{8d_{\mathrm{t}}^{2}+5d_{\mathrm{t}}}_{\text{time embedding}}
  \;+\;\underbrace{2d_{\mathrm{m}}^{2}+3d_{\mathrm{m}}+d_{\mathrm{m}}d_{\mathrm{t}}}_{\text{conditioning}}
  \\[3ex]
  &+\;\underbrace{2d_{\mathrm{m}}D+(J{+}1)d_{\mathrm{m}}+D}_{\text{level embedders and heads}}
  \;+\;\underbrace{F d_{\mathrm{m}}}_{\text{channel identity}} .
\end{aligned}
\label{eq:paramcount}
\end{equation}

Three points follow. The $B$ blocks are $96.8\%$ of the total, while the wavelet-specific level embedders $\mathbf{W}^{\mathrm{emb}}_j$ and heads $\mathbf{H}_j$ are $0.14\%$: the multi-resolution structure is almost free in parameters, since a token carries all $J{+}1$ levels of a channel and the per-level maps are shared across channels. Time conditioning is a third of the model, the AdaLN-Zero projections costing $6d_{\mathrm{m}}^{2}$ per block against $4d_{\mathrm{m}}^{2}$ for the attention itself. Finally, $F$ appears in only one term of \Cref{eq:paramcount}, so each additional channel adds exactly $d_{\mathrm{m}}=256$ parameters.

\clearpage
\section{Reproducibility and Experimental Details} \label{app:setup}
\FloatBarrier
\subsection{Datasets}
\label{app:datasets}

\Cref{tab:datasets} lists properties of the datasets used in our experiments.

\begin{table}[!htbp]
  \centering
  \small
  \begin{tabular}{l l r r l}
    \toprule
    \textbf{Dataset} & \textbf{Type} & \textbf{Length} & \textbf{Channels} $F$ & \textbf{Source} \\
    \midrule
    ETTh1         & Electricity transformer & $17{,}420$          & $7$  & \citet{zhou2021informer} \\
    ETTh2         & Electricity transformer & $17{,}420$          & $7$  & \citet{zhou2021informer} \\
    Stocks        & Daily Google prices     & $3{,}685$           & $6$  & \citet{yoon2019timegan} \\
    Exchange & Daily exchange rates    & $7{,}588$           & $8$  & \citet{lai2018lstnet} \\
    EEG           & 14-electrode recording  & $14{,}980$          & $14$ & \citet{roesler2013eeg} \\
    Energy        & Appliance energy        & $19{,}735$          & $28$ & \citet{candanedo2017energy} \\
    MuJoCo        & Simulated Hopper        & $T\times 10{,}000$ & $14$ & \citet{todorov2012mujoco} \\
    \bottomrule
  \end{tabular}
  \caption{Datasets used in our experiments. Length is the number of time steps in
    the raw series; for MuJoCo, $10{,}000$ trajectories of length $T$ are simulated.}
  \label{tab:datasets}
\end{table}

\paragraph{ETTh1 and ETTh2.}
These two datasets come from~\citet{zhou2021informer} and record two electricity transformers at separate sites in China. Each is sampled once per hour and covers about two years. Six channels give the electrical load, split into three magnitude levels with a useful and a useless part each, and the seventh is the temperature of the transformer oil. 

\paragraph{Stocks.}
Daily Google share data from $2004$ to $2019$, used as a benchmark since~\citet{yoon2019timegan}. The six channels are the open, high, low, close and adjusted close prices, plus the traded volume.

\paragraph{Exchange.}
Daily exchange rates of eight currencies against the US dollar between $1990$ and $2016$, from the collection of~\citet{lai2018lstnet}. The currencies are those of Australia, United Kingdom, Canada, Switzerland, China, Japan, New Zealand and Singapore. 

\paragraph{EEG.}
A single continuous EEG recording of one person, made with a $14$-electrode consumer headset and distributed by~\citet{roesler2013eeg}. It lasts under two minutes, and the $14{,}980$ samples correspond to about $128$ Hz. The $14$ channels are the electrode signals, one per position on the scalp. 

\paragraph{Energy.}
Measurements from one low-energy house over about four and a half months, taken every ten minutes, collected by~\citet{candanedo2017energy}. Two channels give the electricity used by the appliances and by the lights. Eighteen more come from nine sensor nodes, eight placed in rooms and one on the north facade, each giving a temperature and a relative humidity. Six channels are outdoor conditions from a nearby weather station, put on the same time grid: outside temperature, pressure, humidity, wind speed, visibility and dew point. The remaining two channels are the pair of uniform random variables that the original authors added as a control for feature selection. They are numerically identical to each other and we keep them so that the channel count matches prior work.

\paragraph{MuJoCo.}
Trajectories of the Hopper simulated in the MuJoCo physics engine~\citep{todorov2012mujoco}. We use $10{,}000$ separate rollouts, each started from a random initial configuration, and the $14$ channels are the positions and velocities of the body.

\FloatBarrier
\subsection{Baselines} \label{app:baselines}

\paragraph{Diffusion-TS}
Diffusion-TS~\citep{yuan2024diffusionts} is a denoising diffusion model that operates directly on the time-domain window $x\in\R^{T\times F}$. Its denoiser is an encoder--decoder transformer. The decoder is interpretable and writes its output as the sum of a trend component, modeled by a low-degree polynomial regression in time, and a seasonal component, modeled by a small set of trigonometric (Fourier) bases. The network predicts the clean sample rather than the injected noise. Its training loss combines a time-domain reconstruction error with the same error measured between the discrete Fourier transforms of the target and the prediction. Sampling uses ancestral DDPM or DDIM steps (\Cref{tab:nfe}). Official implementation: \url{https://github.com/Y-debug-sys/Diffusion-TS}.

\paragraph{SigDiffusion}
SigDiffusion~\citep{barancikova2025sigdiffusion} generates time series through their log-signature embedding. Each channel is treated as a continuous path, augmented with deterministic time-dependent channels, and mapped to its truncated log-signature. The per-channel log-signatures are then concatenated. The log-signatures live in a free nilpotent Lie algebra, which is a finite-dimensional vector space, so a standard score-based model can be trained on them. Here the model is a transformer score network under a variance-preserving SDE, sampled by integrating the probability-flow ODE with a Tsit5 solver. Generated log-signatures are mapped back to the time domain by closed-form inversion formulae, which express the truncated Fourier coefficients of the path as polynomial functions of the log-signature. The reconstructed series are therefore truncated Fourier expansions, and the signature truncation level bounds how many frequencies they contain. Official implementation: \url{https://github.com/BarbOra/SigDiffusions}.

\paragraph{FourierDiffusion}
FourierDiffusion~\citep{crabbe2024fourierdiffusion} trains a score-based diffusion model on the discrete Fourier transform (DFT) of each window, computed along time for every channel. The unitary DFT is linear, so applying it to the time-domain forward SDE yields a dual SDE in the frequency domain. In this dual SDE the Brownian motion is replaced by a mirrored Brownian motion: for a real signal the coefficients at frequencies $\kappa$ and $T-\kappa$ are complex conjugates, so only the non-redundant real and imaginary parts carry independent noise. Denoising score matching is adapted to this non-redundant representation, and samples are mapped back with the inverse DFT. The authors motivate the approach by showing that many real-world series are more localized in frequency than in time. Official implementation: \url{https://github.com/JonathanCrabbe/FourierDiffusion}.

\paragraph{WaveletDiff}
WaveletDiff~\citep{wang2025waveletdiff} trains a denoising diffusion model directly on the coefficients of a multilevel DWT of each window. Its denoiser assigns a dedicated transformer to each decomposition level, that is, to the approximation band and to every detail band. The levels exchange information through cross-level attention modulated by adaptive gating. Sampling uses DDPM or DDIM steps. Our model differs in two ways. It uses flow matching rather than diffusion. It also uses a single velocity network shared across all levels, with attention across channels, instead of level-specific networks. Official implementation: \url{https://github.com/GarlicWang/WaveletDiff}.

\paragraph{FlowTS}
FlowTS~\citep{hu2024flowts} trains a rectified flow~\citep{liu2023rectified} directly in the time domain, using the linear path of \Cref{eq:linear-path} and the loss of \Cref{eq:rf-loss} with the window $x$ in place of the coefficients $c$ (equivalently, with $\W=\Id$). Its velocity network is a transformer that keeps a trend--seasonality decomposition similar to that of Diffusion-TS, and adds attention registers for global context aggregation and rotary position embeddings. Training times are drawn from a logit-normal distribution~\citep{esser2024sd3}, and samples are generated by Euler integration of the learned ODE. The comparison with FlowTS therefore keeps the training objective essentially fixed while varying the representation and the network architecture. Official implementation: \url{https://github.com/UNITES-Lab/FlowTS}.

All six models generate a sample by integrating an ODE or SDE driven by the trained network, so sampling cost is naturally measured by the number of function evaluations (NFE), the number of network calls per sample. \Cref{tab:nfe} reports it at $T=24$ for each method's default sampler.

\begin{table}[t]
\centering
\small
\setlength{\tabcolsep}{6pt}
\renewcommand{\arraystretch}{1.08}
\begin{tabular}{lll}
\toprule
\textbf{Model} & \textbf{Sampler} & \textbf{NFE per sample} \\
\midrule
Diffusion-TS
  & DDPM / DDIM
  & 100--1000\textsuperscript{$\dagger$} \\
SigDiffusion
  & Tsit5 (128 steps)
  & 768 \\
FourierDiffusion
  & VP-SDE (1000 steps)
  & 1000 \\
WaveletDiff
  & DDPM (1000 steps)
  & 1000 \\
FlowTS
  & Euler (100 steps)
  & 100 \\
\rowcolor{ourscolor}
Ours
  & Euler (100 steps)
  & 100 \\
\bottomrule
\end{tabular}

\caption{Network function evaluations (NFE) per generated sample at $T=24$.
\textsuperscript{$\dagger$}Diffusion-TS uses 500 evaluations on ETTh1, ETTh2,
Stocks, and Exchange, 100 on EEG, and 1000 on Energy and MuJoCo.}
\label{tab:nfe}
\end{table}

NFE is per sample and independent of the batch size. Every method calls its network once per integration step except SigDiffusion, whose sampler is a Tsitouras $5(4)$ Runge--Kutta method with six evaluations per step, so its $128$ configured steps cost $768$ evaluations. Diffusion-TS varies because its configuration files set a different number of diffusion steps per dataset and switch to DDIM with $100$ steps on EEG. These are the settings released by each method's authors. 
\FloatBarrier
\subsection{Evaluation metrics}
\label{app:metrics}

All four metrics compare the set of real windows against an equally sized set of generated windows, and lower is better throughout.

\paragraph{Discriminative score.}
A post-hoc classifier is trained to separate real from generated windows. The score is $|\mathrm{acc} - 0.5|$ on a held-out split, so $0$ means the two sets are indistinguishable and $0.5$ means perfectly separable. Following TimeGAN~\citep{yoon2019timegan}, the classifier is a single-layer GRU~\citep{cho2014gru} of hidden width $\max\bigl(\lfloor F/2 \rfloor,\, 2\bigr)$ followed by a linear layer, trained for $2000$ iterations with Adam and a binary cross-entropy loss at batch size $128$.

\paragraph{Predictive score.}
A one-step-ahead forecaster is trained on the generated windows and evaluated on the real ones, measuring how useful the synthetic data is as a substitute for real training data. The score is the mean absolute error on the real windows. Again following~\citet{yoon2019timegan}, the forecaster is a single-layer GRU of hidden width $\max\bigl(\lfloor F/2 \rfloor,\, 2\bigr)$, trained for $5000$ iterations at batch size $128$.

\paragraph{Context-FID.}
The Fréchet Inception Distance~\citep{heusel2017fid} between two Gaussians fitted to a feature representation,
\begin{equation}
\label{eq:fid}
  \mathrm{FID}(X,Y) \;=\; \lVert \mu_X - \mu_Y \rVert^2
  \;+\; \operatorname{Tr}\bigl( \Sigma_X + \Sigma_Y
  - 2(\Sigma_X \Sigma_Y)^{1/2} \bigr),
\end{equation}
with $\mu$ and $\Sigma$ the mean and covariance of the features. Context-FID~\citep{jeha2022psagan} replaces the Inception features of the image version by embeddings from a TS2Vec encoder~\citep{yue2022ts2vec} trained on the real windows. We use the implementation of~\citet{yuan2024diffusionts} with output dimension $320$.

\paragraph{Correlational score.}
Writing $\rho^{\mathrm{real}}_{i,j}$ and $\rho^{\mathrm{gen}}_{i,j}$ for the lag-zero correlation between channels $i$ and $j$ in the real and generated data, the score is
\begin{equation}
\label{eq:corr-score}
  \frac{1}{10} \sum_{1 \le j \le i \le F}
  \bigl\lvert \rho^{\mathrm{real}}_{i,j} - \rho^{\mathrm{gen}}_{i,j} \bigr\rvert.
\end{equation}
We keep the factor $1/10$ of~\citet{liao2020sigwgan}, used by all the baselines, so that our numbers are directly comparable. Note that it does not normalize by $F$, so the score is not comparable across datasets.

\paragraph{Repeats.}
All four metrics are stochastic through the auxiliary networks they train, or through subsampling, so each is recomputed five times on the same generated set and we report mean $\pm$ standard deviation. This measures the variability of the metric, not of training.

\subsection{Training and implementation details}
\label{sec:setup:implementation}

Unless stated otherwise, all experiments use the following configuration. The analysis transform uses periodic boundary handling, with \texttt{db2} for $T\le32$, \texttt{db4} for $T=64$ and \texttt{db6} for $T=128$ (filter length $q=4,8,12$), at depth $J=\min\bigl(\max\bigl(\lfloor\log_2\tfrac{T}{q-1} \rfloor,3\bigr),7\bigr)$, i.e.\ $J=3$ in all configurations. The velocity network uses token width $d_{\mathrm{m}}=256$, depth $B=8$, $8$ attention heads, MLP ratio $4$, dropout $0.1$, and time-embedding width $d_{\mathrm{t}}=128$ ($\approx 9.78$M parameters, decomposed in \Cref{tab:param-breakdown}). Training minimizes \Cref{eq:rf-loss} with logit-normal time sampling \Cref{eq:logitnorm} ($m=0$, $s=1$), using AdamW~\citep{loshchilov2019adamw} (weight decay $10^{-5}$, $\beta=(0.9,0.999)$, $\varepsilon=10^{-8}$), a one-cycle learning-rate schedule~\citep{smith2019onecycle} with peak learning rate $6\times 10^{-4}$ and $6\%$ warm-up, batch size $512$, gradient-norm clipping at $1.0$, for $2400$ epochs per dataset. An EMA of all weights with decay $0.999$ is updated every optimizer step and used for sampling and evaluation. Sampling uses $N=100$ Euler steps on the uniform grid ($\alpha=1$), and no validation-based model selection is performed so the final EMA weights are evaluated. The learnable wavelet of \Cref{sec:method:learnable} is disabled in the main configuration and studied separately in the ablations.
All models were trained and sampled in the same environment on a single NVIDIA GeForce RTX 4090 (24\,GB), in \texttt{float32} with TF32/bf16 matmuls enabled, under PyTorch 2.13~\citep{paszke2019pytorch}.


\clearpage
\section{Extended Results}

\FloatBarrier
\subsection{Coefficient statistics}
\label{app:coefstats}

\Cref{tab:level-energy} reports, for each dataset and each level, the standard deviation $\sigma$ of the coefficients and the time $t=1/(1+\sigma)$ at which that level's signal-to-noise ratio \Cref{eq:snr} crosses one (called crossing time).

\begin{table}[H]
  \centering
  \small
  \setlength{\tabcolsep}{6pt}
  \begin{tabular}{l *{8}{c}}
    \toprule
    & \multicolumn{2}{c}{$a_3$}
    & \multicolumn{2}{c}{$d_3$}
    & \multicolumn{2}{c}{$d_2$}
    & \multicolumn{2}{c}{$d_1$} \\
    \cmidrule(lr){2-3} \cmidrule(lr){4-5} \cmidrule(lr){6-7} \cmidrule(lr){8-9}
    \textbf{Dataset}
    & $\sigma$ & $t$ & $\sigma$ & $t$ & $\sigma$ & $t$ & $\sigma$ & $t$ \\
    \midrule
    ETTh1         & 2.57 & 0.28 & 0.90 & 0.53 & 0.42 & 0.70 & 0.24 & 0.81 \\
    ETTh2         & 2.76 & 0.27 & 0.40 & 0.71 & 0.24 & 0.81 & 0.17 & 0.85 \\
    Stocks        & 2.75 & 0.27 & 0.30 & 0.77 & 0.23 & 0.81 & 0.17 & 0.85 \\
    Exchange & 2.82 & 0.26 & 0.12 & 0.89 & 0.07 & 0.93 & 0.04 & 0.96 \\
    EEG           & 1.41 & 0.41 & 0.95 & 0.51 & 0.93 & 0.52 & 0.92 & 0.52 \\
    Energy        & 2.66 & 0.27 & 0.51 & 0.66 & 0.37 & 0.73 & 0.31 & 0.76 \\
    MuJoCo        & 2.67 & 0.27 & 0.66 & 0.60 & 0.38 & 0.72 & 0.21 & 0.83 \\
    \bottomrule
  \end{tabular}
  \caption{Per-level coefficient standard deviation $\sigma$ and SNR crossing time
    $t = 1/(1+\sigma)$ at $T=24$, under the \texttt{db2} wavelet.
    Coarser levels have larger $\sigma$ and therefore cross earlier.}
  \label{tab:level-energy}
\end{table}

\Cref{fig:snr-levels-all} plots the corresponding SNR trajectories for the six datasets not shown in \Cref{fig:snr-levels}: the crossing order $a_3 \to d_3 \to d_2 \to d_1$ holds on all of them, although on EEG the three detail levels nearly coincide.

\begin{figure}[H]
  \centering
  \captionsetup[subfigure]{font=small, skip=1pt}
  \begin{subfigure}[b]{0.3427\linewidth}
    \centering
    \snrnolegend
    \begingroup
  \def\snrdataset{ETTh2}\def\snrwidth{5.5cm}\def\snrheight{3.4cm}\includestandalone[width=\linewidth]{figures/SNR_figure}\endgroup
    \caption{ETTh2}
  \end{subfigure}\hfill
  \begin{subfigure}[b]{0.3236\linewidth}
    \centering
    \snrnoy\snrtightlegend
    \begingroup
  \def\snrdataset{Stocks}\def\snrwidth{5.5cm}\def\snrheight{3.4cm}\includestandalone[width=\linewidth]{figures/SNR_figure}\endgroup
    \caption{Stocks}
  \end{subfigure}\hfill
  \begin{subfigure}[b]{0.3236\linewidth}
    \centering
    \snrnoy\snrnolegend
    \begingroup
  \def\snrdataset{Exchange Rate}\def\snrwidth{5.5cm}\def\snrheight{3.4cm}\includestandalone[width=\linewidth]{figures/SNR_figure}\endgroup
    \caption{Exchange Rate}
  \end{subfigure}

  \vspace{0.2em}
  \begin{subfigure}[b]{0.3427\linewidth}
    \centering
    \snrnolegend
    \begingroup
  \def\snrdataset{EEG}\def\snrwidth{5.5cm}\def\snrheight{3.4cm}\includestandalone[width=\linewidth]{figures/SNR_figure}\endgroup
    \caption{EEG}
  \end{subfigure}\hfill
  \begin{subfigure}[b]{0.3236\linewidth}
    \centering
    \snrnoy\snrnolegend
    \begingroup
  \def\snrdataset{Energy}\def\snrwidth{5.5cm}\def\snrheight{3.4cm}\includestandalone[width=\linewidth]{figures/SNR_figure}\endgroup
    \caption{Energy}
  \end{subfigure}\hfill
  \begin{subfigure}[b]{0.3236\linewidth}
    \centering
    \snrnoy\snrnolegend
    \begingroup
  \def\snrdataset{MuJoCo}\def\snrwidth{5.5cm}\def\snrheight{3.4cm}\includestandalone[width=\linewidth]{figures/SNR_figure}\endgroup
    \caption{MuJoCo}
  \end{subfigure}
  \caption{Level-wise SNR along the linear probability path on the other six datasets. Coarse levels cross $\mathrm{SNR}=1$ earlier than fine levels.}
  \label{fig:snr-levels-all}
\end{figure}

\clearpage

\FloatBarrier
\subsection{Full results on longer sequences}
\label{app:long}
 
\Cref{tab:long32,tab:long64,tab:long128} give the per-dataset values summarized in \Cref{fig:lengths} for all four metrics, seven datasets and six models at $T=32$, $64$ and $128$, and \Cref{fig:app-grid} plots them, together with those of \Cref{tab:main24} at $T=24$, as a function of $T$ in one plot per dataset--metric pair.
 
\begin{table}[H]
\centering
\caption{Unconditional generation at $T=32$. Mean {\scriptsize$\pm$std} over five recomputations of each metric on the same generated set. \textbf{Bold}: best; \underline{underlined}: second best.}
\label{tab:long32}
\small
\setlength{\tabcolsep}{3.4pt}
\begin{tabular}{@{}l*{7}{c}@{}}
\toprule
 & ETTh1 & ETTh2 & Stocks & Exchange & EEG & Energy & MuJoCo \\
\midrule
\multicolumn{8}{@{}l}{\textit{\hspace{0.2cm}Context-FID} ($\downarrow$)} \\
Diffusion-TS & \ms{0.187}{.006} & \ms{0.073}{.004} & \ms{0.238}{.050} & \ms{0.048}{.005} & \ms{0.024}{.002} & \ms{0.106}{.019} & \ms{\second{0.021}}{.002} \\
SigDiffusion & \ms{2.972}{.112} & \ms{1.413}{.181} & \ms{4.086}{.784} & \ms{1.702}{.101} & \ms{0.032}{.004} & \ms{4.730}{.547} & \ms{3.117}{.150} \\
FourierDiffusion & \ms{0.039}{.005} & \ms{0.034}{.004} & \ms{0.055}{.010} & \ms{0.058}{.009} & \ms{0.022}{.001} & \ms{0.296}{.018} & \ms{0.099}{.012} \\
WaveletDiff & \ms{0.059}{.005} & \ms{0.074}{.005} & \ms{0.024}{.002} & \ms{\second{0.007}}{.001} & \ms{\second{0.011}}{.002} & \ms{0.541}{.022} & \ms{1.269}{.149} \\
FlowTS & \ms{\second{0.030}}{.002} & \ms{\second{0.013}}{.001} & \ms{\second{0.019}}{.003} & \ms{0.010}{.001} & \ms{\best{0.008}}{.001} & \ms{\second{0.061}}{.004} & \ms{0.026}{.001} \\
\rowcolor{ourscolor} \textbf{Ours} & \ms{\best{0.007}}{.000} & \ms{\best{0.003}}{.000} & \ms{\best{0.010}}{.001} & \ms{\best{0.003}}{.001} & \ms{0.022}{.003} & \ms{\best{0.012}}{.000} & \ms{\best{0.006}}{.000} \\
\midrule
\addlinespace[2pt]
\multicolumn{8}{@{}l}{\textit{\hspace{0.2cm}Discriminative score} ($\downarrow$)} \\
Diffusion-TS & \ms{0.084}{.006} & \ms{0.039}{.004} & \ms{0.103}{.012} & \ms{0.026}{.007} & \ms{0.246}{.174} & \ms{0.115}{.003} & \ms{0.032}{.007} \\
SigDiffusion & \ms{0.299}{.052} & \ms{0.347}{.040} & \ms{0.351}{.013} & \ms{0.340}{.084} & \ms{0.327}{.216} & \ms{0.500}{.000} & \ms{0.466}{.051} \\
FourierDiffusion & \ms{0.023}{.007} & \ms{0.017}{.009} & \ms{\best{0.005}}{.004} & \ms{0.020}{.012} & \ms{0.009}{.004} & \ms{0.133}{.005} & \ms{0.050}{.006} \\
WaveletDiff & \ms{0.023}{.013} & \ms{0.019}{.003} & \ms{0.016}{.010} & \ms{\best{0.005}}{.006} & \ms{\best{0.005}}{.004} & \ms{0.366}{.021} & \ms{0.195}{.036} \\
FlowTS & \ms{\best{0.006}}{.004} & \ms{\best{0.004}}{.002} & \ms{0.025}{.010} & \ms{0.010}{.008} & \ms{0.385}{.035} & \ms{\best{0.094}}{.029} & \ms{\second{0.022}}{.008} \\
\rowcolor{ourscolor} \textbf{Ours} & \ms{\best{0.006}}{.006} & \ms{\second{0.005}}{.002} & \ms{\second{0.008}}{.007} & \ms{\second{0.008}}{.004} & \ms{\best{0.005}}{.002} & \ms{\second{0.101}}{.007} & \ms{\best{0.005}}{.003} \\
\midrule
\addlinespace[2pt]
\multicolumn{8}{@{}l}{\textit{\hspace{0.2cm} Correlational score} ($\downarrow$)} \\
Diffusion-TS & \ms{0.061}{.018} & \ms{0.087}{.023} & \ms{0.019}{.003} & \ms{0.103}{.040} & \ms{4.672}{.150} & \ms{1.162}{.124} & \ms{\second{0.239}}{.021} \\
SigDiffusion & \ms{0.201}{.015} & \ms{0.399}{.017} & \ms{0.152}{.010} & \ms{1.067}{.025} & \ms{4.439}{.293} & \ms{7.274}{.090} & \ms{0.844}{.020} \\
FourierDiffusion & \ms{0.050}{.010} & \ms{0.109}{.005} & \ms{0.015}{.004} & \ms{0.076}{.019} & \ms{3.623}{.270} & \ms{1.321}{.219} & \ms{0.267}{.022} \\
WaveletDiff & \ms{0.054}{.017} & \ms{0.083}{.018} & \ms{\best{0.006}}{.004} & \ms{0.086}{.025} & \ms{\best{2.119}}{.675} & \ms{1.337}{.255} & \ms{0.277}{.014} \\
FlowTS & \ms{\second{0.041}}{.011} & \ms{\second{0.065}}{.016} & \ms{\second{0.012}}{.003} & \ms{\second{0.056}}{.019} & \ms{\second{2.698}}{.601} & \ms{\second{1.152}}{.222} & \ms{0.249}{.022} \\
\rowcolor{ourscolor} \textbf{Ours} & \ms{\best{0.040}}{.008} & \ms{\best{0.064}}{.018} & \ms{0.018}{.004} & \ms{\best{0.052}}{.006} & \ms{3.972}{.542} & \ms{\best{0.722}}{.089} & \ms{\best{0.237}}{.031} \\
\midrule
\addlinespace[2pt]
\multicolumn{8}{@{}l}{\textit{\hspace{0.2cm}Predictive score} ($\downarrow$)} \\
Diffusion-TS & \ms{\second{0.119}}{.003} & \ms{0.105}{.004} & \ms{\best{0.037}}{.000} & \ms{0.043}{.004} & \ms{0.001}{.000} & \ms{\second{0.251}}{.000} & \ms{\best{0.008}}{.001} \\
SigDiffusion & \ms{0.128}{.003} & \ms{0.127}{.005} & \ms{0.041}{.003} & \ms{0.085}{.010} & \ms{\best{0.000}}{.000} & \ms{0.356}{.007} & \ms{0.024}{.001} \\
FourierDiffusion & \ms{0.121}{.003} & \ms{0.105}{.003} & \ms{\best{0.037}}{.000} & \ms{0.046}{.006} & \ms{\best{0.000}}{.000} & \ms{\second{0.251}}{.000} & \ms{0.011}{.001} \\
WaveletDiff & \ms{\best{0.115}}{.004} & \ms{\best{0.102}}{.004} & \ms{\best{0.037}}{.000} & \ms{0.044}{.005} & \ms{\best{0.000}}{.000} & \ms{\second{0.251}}{.000} & \ms{0.009}{.001} \\
FlowTS & \ms{0.120}{.003} & \ms{0.106}{.003} & \ms{\best{0.037}}{.000} & \ms{\best{0.041}}{.004} & \ms{0.001}{.000} & \ms{\second{0.251}}{.000} & \ms{0.010}{.002} \\
\rowcolor{ourscolor} \textbf{Ours} & \ms{0.120}{.003} & \ms{\second{0.104}}{.003} & \ms{\best{0.037}}{.000} & \ms{\best{0.041}}{.006} & \ms{\best{0.000}}{.000} & \ms{\best{0.250}}{.000} & \ms{\best{0.008}}{.002} \\
\bottomrule
\end{tabular}
\end{table}
 
\begin{table}[t]
\centering
\caption{Unconditional generation at $T=64$. Same conventions as \Cref{tab:long32}.}
\label{tab:long64}
\small
\setlength{\tabcolsep}{3.4pt}
\begin{tabular}{@{}l*{7}{c}@{}}
\toprule
 & ETTh1 & ETTh2 & Stocks & Exchange & EEG & Energy & MuJoCo \\
\midrule
\multicolumn{8}{@{}l}{\textit{\hspace{0.2cm}Context-FID} ($\downarrow$)} \\
Diffusion-TS & \ms{0.267}{.023} & \ms{0.112}{.010} & \ms{0.354}{.080} & \ms{0.056}{.003} & \ms{0.058}{.005} & \ms{\second{0.085}}{.007} & \ms{\second{0.035}}{.002} \\
SigDiffusion & \ms{5.948}{.465} & \ms{1.581}{.174} & \ms{3.851}{.744} & \ms{1.986}{.125} & \ms{0.056}{.004} & \ms{6.403}{.271} & \ms{3.621}{.181} \\
FourierDiffusion & \ms{0.089}{.005} & \ms{0.068}{.005} & \ms{0.111}{.012} & \ms{0.081}{.008} & \ms{0.045}{.005} & \ms{0.446}{.030} & \ms{0.167}{.009} \\
WaveletDiff & \ms{0.104}{.008} & \ms{0.072}{.003} & \ms{0.059}{.011} & \ms{0.151}{.015} & \ms{\second{0.025}}{.003} & \ms{0.438}{.013} & \ms{0.347}{.021} \\
FlowTS & \ms{\second{0.052}}{.003} & \ms{\second{0.027}}{.002} & \ms{\second{0.038}}{.005} & \ms{\second{0.016}}{.001} & \ms{\best{0.015}}{.003} & \ms{0.172}{.016} & \ms{0.093}{.005} \\
\rowcolor{ourscolor} \textbf{Ours} & \ms{\best{0.012}}{.001} & \ms{\best{0.009}}{.001} & \ms{\best{0.014}}{.003} & \ms{\best{0.003}}{.000} & \ms{0.041}{.006} & \ms{\best{0.079}}{.002} & \ms{\best{0.011}}{.001} \\
\midrule
\addlinespace[2pt]
\multicolumn{8}{@{}l}{\textit{\hspace{0.2cm}Discriminative score} ($\downarrow$)} \\
Diffusion-TS & \ms{0.090}{.004} & \ms{0.040}{.013} & \ms{0.107}{.007} & \ms{0.035}{.006} & \ms{0.308}{.179} & \ms{0.203}{.049} & \ms{\second{0.030}}{.004} \\
SigDiffusion & \ms{0.385}{.066} & \ms{0.151}{.039} & \ms{0.316}{.021} & \ms{0.305}{.016} & \ms{0.395}{.198} & \ms{0.499}{.003} & \ms{0.484}{.002} \\
FourierDiffusion & \ms{0.043}{.014} & \ms{0.024}{.010} & \ms{0.017}{.007} & \ms{0.075}{.014} & \ms{0.013}{.003} & \ms{\second{0.178}}{.014} & \ms{0.098}{.025} \\
WaveletDiff & \ms{0.024}{.012} & \ms{0.017}{.004} & \ms{\best{0.003}}{.003} & \ms{0.065}{.018} & \ms{\second{0.007}}{.005} & \ms{0.435}{.023} & \ms{0.155}{.036} \\
FlowTS & \ms{\second{0.017}}{.009} & \ms{\second{0.006}}{.002} & \ms{0.011}{.005} & \ms{\second{0.010}}{.007} & \ms{0.221}{.135} & \ms{\best{0.116}}{.070} & \ms{0.088}{.030} \\
\rowcolor{ourscolor} \textbf{Ours} & \ms{\best{0.005}}{.003} & \ms{\best{0.005}}{.004} & \ms{\second{0.010}}{.008} & \ms{\best{0.007}}{.007} & \ms{\best{0.005}}{.004} & \ms{0.312}{.009} & \ms{\best{0.008}}{.004} \\
\midrule
\addlinespace[2pt]
\multicolumn{8}{@{}l}{\textit{\hspace{0.2cm}Correlational score} ($\downarrow$)} \\
Diffusion-TS & \ms{0.068}{.018} & \ms{0.110}{.019} & \ms{0.018}{.002} & \ms{0.125}{.042} & \ms{5.136}{.128} & \ms{\second{0.877}}{.154} & \ms{0.215}{.028} \\
SigDiffusion & \ms{0.198}{.017} & \ms{0.337}{.021} & \ms{0.115}{.007} & \ms{1.005}{.020} & \ms{3.706}{.198} & \ms{6.615}{.090} & \ms{0.688}{.017} \\
FourierDiffusion & \ms{0.053}{.009} & \ms{0.103}{.012} & \ms{\second{0.010}}{.004} & \ms{0.115}{.019} & \ms{2.755}{.527} & \ms{1.631}{.264} & \ms{0.208}{.020} \\
WaveletDiff & \ms{0.048}{.012} & \ms{\second{0.061}}{.020} & \ms{\best{0.004}}{.003} & \ms{0.218}{.030} & \ms{\second{2.608}}{.128} & \ms{1.071}{.103} & \ms{0.212}{.023} \\
FlowTS & \ms{\second{0.042}}{.010} & \ms{0.068}{.017} & \ms{0.012}{.003} & \ms{\second{0.082}}{.036} & \ms{\best{1.744}}{.773} & \ms{1.057}{.170} & \ms{\second{0.196}}{.015} \\
\rowcolor{ourscolor} \textbf{Ours} & \ms{\best{0.034}}{.003} & \ms{\best{0.056}}{.009} & \ms{\second{0.010}}{.005} & \ms{\best{0.067}}{.020} & \ms{3.788}{.364} & \ms{\best{0.789}}{.104} & \ms{\best{0.191}}{.021} \\
\midrule
\addlinespace[2pt]
\multicolumn{8}{@{}l}{\textit{\hspace{0.2cm}Predictive score} ($\downarrow$)} \\
Diffusion-TS & \ms{0.117}{.006} & \ms{0.119}{.017} & \ms{0.037}{.000} & \ms{0.041}{.005} & \ms{0.001}{.000} & \ms{\second{0.250}}{.000} & \ms{\second{0.008}}{.001} \\
SigDiffusion & \ms{0.125}{.002} & \ms{0.125}{.003} & \ms{0.038}{.001} & \ms{0.068}{.003} & \ms{\best{0.000}}{.000} & \ms{0.310}{.003} & \ms{0.017}{.001} \\
FourierDiffusion & \ms{0.115}{.005} & \ms{0.104}{.002} & \ms{0.037}{.000} & \ms{0.044}{.002} & \ms{\best{0.000}}{.000} & \ms{0.251}{.000} & \ms{0.009}{.001} \\
WaveletDiff & \ms{\best{0.113}}{.011} & \ms{0.102}{.001} & \ms{0.037}{.000} & \ms{0.043}{.003} & \ms{0.002}{.000} & \ms{\second{0.250}}{.001} & \ms{\best{0.006}}{.001} \\
FlowTS & \ms{0.117}{.003} & \ms{\best{0.099}}{.002} & \ms{\best{0.036}}{.000} & \ms{\best{0.040}}{.007} & \ms{0.001}{.000} & \ms{0.251}{.000} & \ms{\second{0.008}}{.000} \\
\rowcolor{ourscolor} \textbf{Ours} & \ms{\best{0.113}}{.002} & \ms{\second{0.100}}{.001} & \ms{\best{0.036}}{.000} & \ms{\best{0.040}}{.006} & \ms{\best{0.000}}{.000} & \ms{\best{0.249}}{.000} & \ms{\second{0.008}}{.002} \\
\bottomrule
\end{tabular}
\end{table}
 
\begin{table}[t]
\centering
\caption{Unconditional generation at $T=128$. Same conventions as \Cref{tab:long32}.}
\label{tab:long128}
\small
\setlength{\tabcolsep}{3.4pt}
\begin{tabular}{@{}l*{7}{c}@{}}
\toprule
 & ETTh1 & ETTh2 & Stocks & Exchange & EEG & Energy & MuJoCo \\
\midrule
\multicolumn{8}{@{}l}{\textit{\hspace{0.2cm}Context-FID} ($\downarrow$)} \\
Diffusion-TS & \ms{0.994}{.025} & \ms{0.146}{.018} & \ms{0.611}{.068} & \ms{0.059}{.004} & \ms{0.139}{.023} & \ms{\best{0.074}}{.008} & \ms{\second{0.073}}{.004} \\
SigDiffusion & \ms{11.619}{.739} & \ms{2.087}{.284} & \ms{4.743}{.381} & \ms{2.005}{.267} & \ms{0.130}{.012} & \ms{10.471}{.272} & \ms{3.738}{.269} \\
FourierDiffusion & \ms{0.319}{.020} & \ms{0.200}{.013} & \ms{0.272}{.023} & \ms{0.235}{.040} & \ms{0.086}{.014} & \ms{0.837}{.034} & \ms{0.224}{.021} \\
WaveletDiff & \ms{0.158}{.023} & \ms{0.088}{.006} & \ms{0.112}{.010} & \ms{0.154}{.015} & \ms{\second{0.059}}{.005} & \ms{0.501}{.037} & \ms{0.277}{.018} \\
FlowTS & \ms{\second{0.097}}{.003} & \ms{\second{0.058}}{.004} & \ms{\second{0.063}}{.010} & \ms{\second{0.035}}{.004} & \ms{\best{0.027}}{.003} & \ms{0.260}{.021} & \ms{0.156}{.009} \\
\rowcolor{ourscolor} \textbf{Ours} & \ms{\best{0.037}}{.001} & \ms{\best{0.018}}{.003} & \ms{\best{0.043}}{.006} & \ms{\best{0.013}}{.001} & \ms{0.114}{.012} & \ms{\second{0.123}}{.013} & \ms{\best{0.024}}{.002} \\
\midrule
\addlinespace[2pt]
\multicolumn{8}{@{}l}{\textit{\hspace{0.2cm}Discriminative score} ($\downarrow$)} \\
Diffusion-TS & \ms{0.165}{.008} & \ms{0.047}{.013} & \ms{0.118}{.019} & \ms{0.035}{.005} & \ms{0.211}{.220} & \ms{\best{0.231}}{.054} & \ms{\second{0.068}}{.003} \\
SigDiffusion & \ms{0.258}{.178} & \ms{0.153}{.087} & \ms{0.321}{.011} & \ms{0.245}{.081} & \ms{0.419}{.123} & \ms{0.499}{.001} & \ms{0.481}{.007} \\
FourierDiffusion & \ms{0.095}{.039} & \ms{0.061}{.006} & \ms{0.041}{.042} & \ms{0.093}{.018} & \ms{\second{0.011}}{.008} & \ms{0.301}{.041} & \ms{0.168}{.068} \\
WaveletDiff & \ms{0.010}{.005} & \ms{0.019}{.007} & \ms{\second{0.015}}{.006} & \ms{0.059}{.011} & \ms{\second{0.011}}{.001} & \ms{0.463}{.023} & \ms{0.078}{.048} \\
FlowTS & \ms{\second{0.009}}{.005} & \ms{\second{0.007}}{.006} & \ms{0.020}{.017} & \ms{\second{0.010}}{.008} & \ms{0.176}{.118} & \ms{\second{0.296}}{.021} & \ms{0.112}{.042} \\
\rowcolor{ourscolor} \textbf{Ours} & \ms{\best{0.007}}{.004} & \ms{\best{0.003}}{.002} & \ms{\best{0.007}}{.004} & \ms{\best{0.007}}{.005} & \ms{\best{0.008}}{.006} & \ms{0.411}{.022} & \ms{\best{0.008}}{.004} \\
\midrule
\addlinespace[2pt]
\multicolumn{8}{@{}l}{\textit{\hspace{0.2cm}Correlational score} ($\downarrow$)} \\
Diffusion-TS & \ms{0.092}{.009} & \ms{0.086}{.020} & \ms{0.026}{.003} & \ms{0.106}{.022} & \ms{5.141}{.128} & \ms{\best{0.619}}{.051} & \ms{0.229}{.021} \\
SigDiffusion & \ms{0.229}{.017} & \ms{0.393}{.018} & \ms{0.105}{.004} & \ms{1.015}{.039} & \ms{3.141}{.032} & \ms{5.903}{.068} & \ms{0.596}{.021} \\
FourierDiffusion & \ms{0.080}{.012} & \ms{0.160}{.011} & \ms{0.034}{.006} & \ms{0.155}{.023} & \ms{\second{1.394}}{.451} & \ms{1.369}{.369} & \ms{0.201}{.012} \\
WaveletDiff & \ms{\second{0.040}}{.010} & \ms{\second{0.062}}{.020} & \ms{\best{0.005}}{.002} & \ms{0.153}{.037} & \ms{1.572}{.313} & \ms{0.905}{.062} & \ms{\best{0.177}}{.013} \\
FlowTS & \ms{\best{0.035}}{.004} & \ms{0.068}{.010} & \ms{\second{0.009}}{.005} & \ms{\second{0.068}}{.015} & \ms{\best{0.871}}{.097} & \ms{0.837}{.087} & \ms{0.204}{.026} \\
\rowcolor{ourscolor} \textbf{Ours} & \ms{\second{0.040}}{.013} & \ms{\best{0.059}}{.020} & \ms{0.020}{.001} & \ms{\best{0.067}}{.027} & \ms{3.348}{.603} & \ms{\second{0.770}}{.174} & \ms{\second{0.182}}{.022} \\
\midrule
\addlinespace[2pt]
\multicolumn{8}{@{}l}{\textit{\hspace{0.2cm}Predictive score} ($\downarrow$)} \\
Diffusion-TS & \ms{0.120}{.002} & \ms{0.109}{.009} & \ms{\best{0.036}}{.000} & \ms{0.040}{.005} & \ms{\best{0.000}}{.000} & \ms{\second{0.249}}{.001} & \ms{\second{0.005}}{.001} \\
SigDiffusion & \ms{0.129}{.002} & \ms{0.123}{.006} & \ms{0.038}{.000} & \ms{0.068}{.003} & \ms{\best{0.000}}{.000} & \ms{0.278}{.002} & \ms{0.011}{.000} \\
FourierDiffusion & \ms{\second{0.112}}{.004} & \ms{0.107}{.003} & \ms{0.037}{.000} & \ms{0.049}{.004} & \ms{\best{0.000}}{.000} & \ms{0.251}{.000} & \ms{0.006}{.000} \\
WaveletDiff & \ms{\best{0.103}}{.005} & \ms{0.104}{.003} & \ms{\best{0.036}}{.000} & \ms{\second{0.039}}{.005} & \ms{\best{0.000}}{.000} & \ms{0.250}{.000} & \ms{\second{0.005}}{.000} \\
FlowTS & \ms{\second{0.112}}{.008} & \ms{\best{0.098}}{.002} & \ms{\best{0.036}}{.000} & \ms{\best{0.036}}{.003} & \ms{0.001}{.000} & \ms{0.251}{.000} & \ms{0.006}{.001} \\
\rowcolor{ourscolor} \textbf{Ours} & \ms{\second{0.112}}{.008} & \ms{\second{0.101}}{.009} & \ms{\best{0.036}}{.000} & \ms{0.040}{.004} & \ms{\best{0.000}}{.000} & \ms{\best{0.248}}{.001} & \ms{\best{0.004}}{.001} \\
\bottomrule
\end{tabular}
\end{table}

\begin{figure}[t]
  \centering
  \begin{tikzpicture}
  \begin{groupplot}[
      group style={group size=4 by 7, horizontal sep=0.86cm, vertical sep=0.20cm,
                   xticklabels at=edge bottom},
      scale only axis, width=2.34cm, height=1.72cm,
      symbolic x coords={24,32,64,128}, xtick=data, enlarge x limits=0.12,
      ymode=log, log ticks with fixed point,
      grid=both, major grid style={gray!28,dashed}, minor grid style={gray!11},
      tick label style={font=\tiny}, label style={font=\tiny},
      title style={font=\scriptsize, yshift=-0.35em},
      every axis plot/.append style={line width=0.5pt, mark size=1pt},
  ]

  \nextgroupplot[title={Context-FID $\downarrow$}, ylabel={\textbf{ETTh1}}, ymin=0.003, ymax=15.68565, ytick={0.005,0.2,10}, legend to name=appmetriclegend, legend style={legend columns=6, draw=none, fill=none, font=\scriptsize, /tikz/every even column/.append style={column sep=5pt}}]
  \addplot[colOurs, mark=*, forget plot] coordinates {(24,0.005) (32,0.007) (64,0.012) (128,0.037)};
  \addplot[colWave, mark=square*, forget plot] coordinates {(24,0.026) (32,0.059) (64,0.104) (128,0.158)};
  \addplot[colFlow, mark=triangle*, forget plot] coordinates {(24,0.025) (32,0.03) (64,0.052) (128,0.097)};
  \addplot[colFour, mark=diamond*, forget plot] coordinates {(24,0.025) (32,0.039) (64,0.089) (128,0.319)};
  \addplot[colDTS, mark=pentagon*, forget plot] coordinates {(24,0.137) (32,0.187) (64,0.267) (128,0.994)};
  \addplot[colSig, mark=x, mark size=1.4pt, forget plot] coordinates {(24,2.473) (32,2.972) (64,5.948) (128,11.619)};
  \addlegendimage{colDTS, mark=pentagon*, line width=0.5pt, mark size=1pt}
  \addlegendentry{Diffusion-TS}
  \addlegendimage{colSig, mark=x, line width=0.5pt, mark size=1.4pt}
  \addlegendentry{SigDiffusion}
  \addlegendimage{colFour, mark=diamond*, line width=0.5pt, mark size=1pt}
  \addlegendentry{FourierDiffusion}
  \addlegendimage{colWave, mark=square*, line width=0.5pt, mark size=1pt}
  \addlegendentry{WaveletDiff}
  \addlegendimage{colFlow, mark=triangle*, line width=0.5pt, mark size=1pt}
  \addlegendentry{FlowTS}
  \addlegendimage{colOurs, mark=*, line width=0.5pt, mark size=1pt}
  \addlegendentry{Ours}

  \nextgroupplot[title={Discriminative $\downarrow$}, ymin=0.0015, ymax=0.5198, ytick={0.002,0.03,0.3}]
  \addplot[colOurs, mark=*] coordinates {(24,0.005) (32,0.006) (64,0.005) (128,0.007)};
  \addplot[colWave, mark=square*] coordinates {(24,0.014) (32,0.023) (64,0.024) (128,0.01)};
  \addplot[colFlow, mark=triangle*] coordinates {(24,0.008) (32,0.006) (64,0.017) (128,0.009)};
  \addplot[colFour, mark=diamond*] coordinates {(24,0.018) (32,0.023) (64,0.043) (128,0.095)};
  \addplot[colDTS, mark=pentagon*] coordinates {(24,0.074) (32,0.084) (64,0.09) (128,0.165)};
  \addplot[colSig, mark=x, mark size=1.4pt] coordinates {(24,0.355) (32,0.299) (64,0.385) (128,0.258)};

  \nextgroupplot[title={Correlational $\downarrow$}, ymin=0.017, ymax=0.3092, ytick={0.03,0.07,0.2}]
  \addplot[colOurs, mark=*] coordinates {(24,0.042) (32,0.04) (64,0.034) (128,0.04)};
  \addplot[colWave, mark=square*] coordinates {(24,0.054) (32,0.054) (64,0.048) (128,0.04)};
  \addplot[colFlow, mark=triangle*] coordinates {(24,0.044) (32,0.041) (64,0.042) (128,0.035)};
  \addplot[colFour, mark=diamond*] coordinates {(24,0.052) (32,0.05) (64,0.053) (128,0.08)};
  \addplot[colDTS, mark=pentagon*] coordinates {(24,0.053) (32,0.061) (64,0.068) (128,0.092)};
  \addplot[colSig, mark=x, mark size=1.4pt] coordinates {(24,0.186) (32,0.201) (64,0.198) (128,0.229)};

  \nextgroupplot[title={Predictive $\downarrow$}, ymin=0.0763, ymax=0.1755, ytick={0.1,0.15}]
  \addplot[colOurs, mark=*] coordinates {(24,0.12) (32,0.12) (64,0.113) (128,0.112)};
  \addplot[colWave, mark=square*] coordinates {(24,0.117) (32,0.115) (64,0.113) (128,0.103)};
  \addplot[colFlow, mark=triangle*] coordinates {(24,0.12) (32,0.12) (64,0.117) (128,0.112)};
  \addplot[colFour, mark=diamond*] coordinates {(24,0.119) (32,0.121) (64,0.115) (128,0.112)};
  \addplot[colDTS, mark=pentagon*] coordinates {(24,0.12) (32,0.119) (64,0.117) (128,0.12)};
  \addplot[colSig, mark=x, mark size=1.4pt] coordinates {(24,0.13) (32,0.128) (64,0.125) (128,0.129)};

  \nextgroupplot[ylabel={\textbf{ETTh2}}, ymin=0.0022, ymax=2.81745, ytick={0.005,0.1,2}]
  \addplot[colOurs, mark=*] coordinates {(24,0.004) (32,0.003) (64,0.009) (128,0.018)};
  \addplot[colWave, mark=square*] coordinates {(24,0.031) (32,0.074) (64,0.072) (128,0.088)};
  \addplot[colFlow, mark=triangle*] coordinates {(24,0.012) (32,0.013) (64,0.027) (128,0.058)};
  \addplot[colFour, mark=diamond*] coordinates {(24,0.026) (32,0.034) (64,0.068) (128,0.2)};
  \addplot[colDTS, mark=pentagon*] coordinates {(24,0.056) (32,0.073) (64,0.112) (128,0.146)};
  \addplot[colSig, mark=x, mark size=1.4pt] coordinates {(24,1.162) (32,1.413) (64,1.581) (128,2.087)};

  \nextgroupplot[ymin=0.0022, ymax=0.4684, ytick={0.005,0.05,0.3}]
  \addplot[colOurs, mark=*] coordinates {(24,0.005) (32,0.005) (64,0.005) (128,0.003)};
  \addplot[colWave, mark=square*] coordinates {(24,0.017) (32,0.019) (64,0.017) (128,0.019)};
  \addplot[colFlow, mark=triangle*] coordinates {(24,0.007) (32,0.004) (64,0.006) (128,0.007)};
  \addplot[colFour, mark=diamond*] coordinates {(24,0.011) (32,0.017) (64,0.024) (128,0.061)};
  \addplot[colDTS, mark=pentagon*] coordinates {(24,0.04) (32,0.039) (64,0.04) (128,0.047)};
  \addplot[colSig, mark=x, mark size=1.4pt] coordinates {(24,0.347) (32,0.347) (64,0.151) (128,0.153)};

  \nextgroupplot[ymin=0.0304, ymax=0.5575, ytick={0.05,0.1,0.3}]
  \addplot[colOurs, mark=*] coordinates {(24,0.066) (32,0.064) (64,0.056) (128,0.059)};
  \addplot[colWave, mark=square*] coordinates {(24,0.09) (32,0.083) (64,0.061) (128,0.062)};
  \addplot[colFlow, mark=triangle*] coordinates {(24,0.059) (32,0.065) (64,0.068) (128,0.068)};
  \addplot[colFour, mark=diamond*] coordinates {(24,0.088) (32,0.109) (64,0.103) (128,0.16)};
  \addplot[colDTS, mark=pentagon*] coordinates {(24,0.11) (32,0.087) (64,0.11) (128,0.086)};
  \addplot[colSig, mark=x, mark size=1.4pt] coordinates {(24,0.413) (32,0.399) (64,0.337) (128,0.393)};

  \nextgroupplot[ymin=0.0726, ymax=0.1809, ytick={0.1,0.15}]
  \addplot[colOurs, mark=*] coordinates {(24,0.105) (32,0.104) (64,0.1) (128,0.101)};
  \addplot[colWave, mark=square*] coordinates {(24,0.104) (32,0.102) (64,0.102) (128,0.104)};
  \addplot[colFlow, mark=triangle*] coordinates {(24,0.104) (32,0.106) (64,0.099) (128,0.098)};
  \addplot[colFour, mark=diamond*] coordinates {(24,0.11) (32,0.105) (64,0.104) (128,0.107)};
  \addplot[colDTS, mark=pentagon*] coordinates {(24,0.108) (32,0.105) (64,0.119) (128,0.109)};
  \addplot[colSig, mark=x, mark size=1.4pt] coordinates {(24,0.134) (32,0.127) (64,0.125) (128,0.123)};

  \nextgroupplot[ylabel={\textbf{Stocks}}, ymin=0.003, ymax=6.40305, ytick={0.005,0.1,3}]
  \addplot[colOurs, mark=*] coordinates {(24,0.005) (32,0.01) (64,0.014) (128,0.043)};
  \addplot[colWave, mark=square*] coordinates {(24,0.02) (32,0.024) (64,0.059) (128,0.112)};
  \addplot[colFlow, mark=triangle*] coordinates {(24,0.017) (32,0.019) (64,0.038) (128,0.063)};
  \addplot[colFour, mark=diamond*] coordinates {(24,0.039) (32,0.055) (64,0.111) (128,0.272)};
  \addplot[colDTS, mark=pentagon*] coordinates {(24,0.175) (32,0.238) (64,0.354) (128,0.611)};
  \addplot[colSig, mark=x, mark size=1.4pt] coordinates {(24,3.285) (32,4.086) (64,3.851) (128,4.743)};

  \nextgroupplot[ymin=0.0022, ymax=0.4914, ytick={0.005,0.05,0.3}]
  \addplot[colOurs, mark=*] coordinates {(24,0.012) (32,0.008) (64,0.01) (128,0.007)};
  \addplot[colWave, mark=square*] coordinates {(24,0.009) (32,0.016) (64,0.003) (128,0.015)};
  \addplot[colFlow, mark=triangle*] coordinates {(24,0.014) (32,0.025) (64,0.011) (128,0.02)};
  \addplot[colFour, mark=diamond*] coordinates {(24,0.018) (32,0.005) (64,0.017) (128,0.041)};
  \addplot[colDTS, mark=pentagon*] coordinates {(24,0.083) (32,0.103) (64,0.107) (128,0.118)};
  \addplot[colSig, mark=x, mark size=1.4pt] coordinates {(24,0.364) (32,0.351) (64,0.316) (128,0.321)};

  \nextgroupplot[ymin=0.003, ymax=0.2322, ytick={0.005,0.03,0.15}]
  \addplot[colOurs, mark=*] coordinates {(24,0.007) (32,0.018) (64,0.01) (128,0.02)};
  \addplot[colWave, mark=square*] coordinates {(24,0.005) (32,0.006) (64,0.004) (128,0.005)};
  \addplot[colFlow, mark=triangle*] coordinates {(24,0.015) (32,0.012) (64,0.012) (128,0.009)};
  \addplot[colFour, mark=diamond*] coordinates {(24,0.013) (32,0.015) (64,0.01) (128,0.034)};
  \addplot[colDTS, mark=pentagon*] coordinates {(24,0.015) (32,0.019) (64,0.018) (128,0.026)};
  \addplot[colSig, mark=x, mark size=1.4pt] coordinates {(24,0.172) (32,0.152) (64,0.115) (128,0.105)};

  \nextgroupplot[ymin=0.0267, ymax=0.0554, ytick={0.03,0.05}]
  \addplot[colOurs, mark=*] coordinates {(24,0.037) (32,0.037) (64,0.036) (128,0.036)};
  \addplot[colWave, mark=square*] coordinates {(24,0.037) (32,0.037) (64,0.037) (128,0.036)};
  \addplot[colFlow, mark=triangle*] coordinates {(24,0.037) (32,0.037) (64,0.036) (128,0.036)};
  \addplot[colFour, mark=diamond*] coordinates {(24,0.037) (32,0.037) (64,0.037) (128,0.037)};
  \addplot[colDTS, mark=pentagon*] coordinates {(24,0.037) (32,0.037) (64,0.037) (128,0.036)};
  \addplot[colSig, mark=x, mark size=1.4pt] coordinates {(24,0.041) (32,0.041) (64,0.038) (128,0.038)};

  \nextgroupplot[ylabel={\textbf{Exchange}}, ymin=0.0022, ymax=2.70675, ytick={0.005,0.1,2}]
  \addplot[colOurs, mark=*] coordinates {(24,0.004) (32,0.003) (64,0.003) (128,0.013)};
  \addplot[colWave, mark=square*] coordinates {(24,0.009) (32,0.007) (64,0.151) (128,0.154)};
  \addplot[colFlow, mark=triangle*] coordinates {(24,0.009) (32,0.01) (64,0.016) (128,0.035)};
  \addplot[colFour, mark=diamond*] coordinates {(24,0.08) (32,0.058) (64,0.081) (128,0.235)};
  \addplot[colDTS, mark=pentagon*] coordinates {(24,0.047) (32,0.048) (64,0.056) (128,0.059)};
  \addplot[colSig, mark=x, mark size=1.4pt] coordinates {(24,1.629) (32,1.702) (64,1.986) (128,2.005)};

  \nextgroupplot[ymin=0.0037, ymax=0.5346, ytick={0.005,0.05,0.3}]
  \addplot[colOurs, mark=*] coordinates {(24,0.005) (32,0.008) (64,0.007) (128,0.007)};
  \addplot[colWave, mark=square*] coordinates {(24,0.011) (32,0.005) (64,0.065) (128,0.059)};
  \addplot[colFlow, mark=triangle*] coordinates {(24,0.009) (32,0.01) (64,0.01) (128,0.01)};
  \addplot[colFour, mark=diamond*] coordinates {(24,0.014) (32,0.02) (64,0.075) (128,0.093)};
  \addplot[colDTS, mark=pentagon*] coordinates {(24,0.024) (32,0.026) (64,0.035) (128,0.035)};
  \addplot[colSig, mark=x, mark size=1.4pt] coordinates {(24,0.396) (32,0.34) (64,0.305) (128,0.245)};

  \nextgroupplot[ymin=0.0348, ymax=1.44045, ytick={0.05,0.2,1}]
  \addplot[colOurs, mark=*] coordinates {(24,0.056) (32,0.052) (64,0.067) (128,0.067)};
  \addplot[colWave, mark=square*] coordinates {(24,0.12) (32,0.086) (64,0.218) (128,0.153)};
  \addplot[colFlow, mark=triangle*] coordinates {(24,0.047) (32,0.056) (64,0.082) (128,0.068)};
  \addplot[colFour, mark=diamond*] coordinates {(24,0.146) (32,0.076) (64,0.115) (128,0.155)};
  \addplot[colDTS, mark=pentagon*] coordinates {(24,0.115) (32,0.103) (64,0.125) (128,0.106)};
  \addplot[colSig, mark=x, mark size=1.4pt] coordinates {(24,0.896) (32,1.067) (64,1.005) (128,1.015)};

  \nextgroupplot[ymin=0.0267, ymax=0.1202, ytick={0.05,0.07}]
  \addplot[colOurs, mark=*] coordinates {(24,0.042) (32,0.041) (64,0.04) (128,0.04)};
  \addplot[colWave, mark=square*] coordinates {(24,0.046) (32,0.044) (64,0.043) (128,0.039)};
  \addplot[colFlow, mark=triangle*] coordinates {(24,0.043) (32,0.041) (64,0.04) (128,0.036)};
  \addplot[colFour, mark=diamond*] coordinates {(24,0.045) (32,0.046) (64,0.044) (128,0.049)};
  \addplot[colDTS, mark=pentagon*] coordinates {(24,0.045) (32,0.043) (64,0.041) (128,0.04)};
  \addplot[colSig, mark=x, mark size=1.4pt] coordinates {(24,0.089) (32,0.085) (64,0.068) (128,0.068)};

  \nextgroupplot[ylabel={\textbf{EEG}}, ymin=0.0037, ymax=0.18765, ytick={0.005,0.02,0.1}]
  \addplot[colOurs, mark=*] coordinates {(24,0.013) (32,0.022) (64,0.041) (128,0.114)};
  \addplot[colWave, mark=square*] coordinates {(24,0.008) (32,0.011) (64,0.025) (128,0.059)};
  \addplot[colFlow, mark=triangle*] coordinates {(24,0.005) (32,0.008) (64,0.015) (128,0.027)};
  \addplot[colFour, mark=diamond*] coordinates {(24,0.015) (32,0.022) (64,0.045) (128,0.086)};
  \addplot[colDTS, mark=pentagon*] coordinates {(24,0.018) (32,0.024) (64,0.058) (128,0.139)};
  \addplot[colSig, mark=x, mark size=1.4pt] coordinates {(24,0.02) (32,0.032) (64,0.056) (128,0.13)};

  \nextgroupplot[ymin=0.0022, ymax=0.56565, ytick={0.005,0.05,0.3}]
  \addplot[colOurs, mark=*] coordinates {(24,0.003) (32,0.005) (64,0.005) (128,0.008)};
  \addplot[colWave, mark=square*] coordinates {(24,0.009) (32,0.005) (64,0.007) (128,0.011)};
  \addplot[colFlow, mark=triangle*] coordinates {(24,0.106) (32,0.385) (64,0.221) (128,0.176)};
  \addplot[colFour, mark=diamond*] coordinates {(24,0.011) (32,0.009) (64,0.013) (128,0.011)};
  \addplot[colDTS, mark=pentagon*] coordinates {(24,0.312) (32,0.246) (64,0.308) (128,0.211)};
  \addplot[colSig, mark=x, mark size=1.4pt] coordinates {(24,0.4) (32,0.327) (64,0.395) (128,0.419)};

  \nextgroupplot[ymin=0.6452, ymax=6.9714, ytick={1,2,5}]
  \addplot[colOurs, mark=*] coordinates {(24,3.87) (32,3.972) (64,3.788) (128,3.348)};
  \addplot[colWave, mark=square*] coordinates {(24,2.27) (32,2.119) (64,2.608) (128,1.572)};
  \addplot[colFlow, mark=triangle*] coordinates {(24,1.89) (32,2.698) (64,1.744) (128,0.871)};
  \addplot[colFour, mark=diamond*] coordinates {(24,3.585) (32,3.623) (64,2.755) (128,1.394)};
  \addplot[colDTS, mark=pentagon*] coordinates {(24,5.164) (32,4.672) (64,5.136) (128,5.141)};
  \addplot[colSig, mark=x, mark size=1.4pt] coordinates {(24,4.777) (32,4.439) (64,3.706) (128,3.141)};

  \nextgroupplot[ymin=0.0004, ymax=0.0027, ytick={0.0005,0.001,0.002}]
  \addplot[colOurs, mark=*] coordinates {(24,0.0005) (32,0.0005) (64,0.0005) (128,0.0005)};
  \addplot[colWave, mark=square*] coordinates {(24,0.0005) (32,0.0005) (64,0.002) (128,0.0005)};
  \addplot[colFlow, mark=triangle*] coordinates {(24,0.001) (32,0.001) (64,0.001) (128,0.001)};
  \addplot[colFour, mark=diamond*] coordinates {(24,0.0005) (32,0.0005) (64,0.0005) (128,0.0005)};
  \addplot[colDTS, mark=pentagon*] coordinates {(24,0.002) (32,0.001) (64,0.001) (128,0.0005)};
  \addplot[colSig, mark=x, mark size=1.4pt] coordinates {(24,0.0005) (32,0.0005) (64,0.0005) (128,0.0005)};

  \nextgroupplot[ylabel={\textbf{Energy}}, ymin=0.0052, ymax=14.13585, ytick={0.01,0.3,10}]
  \addplot[colOurs, mark=*] coordinates {(24,0.01) (32,0.012) (64,0.079) (128,0.123)};
  \addplot[colWave, mark=square*] coordinates {(24,0.482) (32,0.541) (64,0.438) (128,0.501)};
  \addplot[colFlow, mark=triangle*] coordinates {(24,0.042) (32,0.061) (64,0.172) (128,0.26)};
  \addplot[colFour, mark=diamond*] coordinates {(24,0.217) (32,0.296) (64,0.446) (128,0.837)};
  \addplot[colDTS, mark=pentagon*] coordinates {(24,0.09) (32,0.106) (64,0.085) (128,0.074)};
  \addplot[colSig, mark=x, mark size=1.4pt] coordinates {(24,4.241) (32,4.73) (64,6.403) (128,10.471)};

  \nextgroupplot[ymin=0.0556, ymax=0.675, ytick={0.1,0.2,0.5}]
  \addplot[colOurs, mark=*] coordinates {(24,0.098) (32,0.101) (64,0.312) (128,0.411)};
  \addplot[colWave, mark=square*] coordinates {(24,0.311) (32,0.366) (64,0.435) (128,0.463)};
  \addplot[colFlow, mark=triangle*] coordinates {(24,0.079) (32,0.094) (64,0.116) (128,0.296)};
  \addplot[colFour, mark=diamond*] coordinates {(24,0.118) (32,0.133) (64,0.178) (128,0.301)};
  \addplot[colDTS, mark=pentagon*] coordinates {(24,0.12) (32,0.115) (64,0.203) (128,0.231)};
  \addplot[colSig, mark=x, mark size=1.4pt] coordinates {(24,0.5) (32,0.5) (64,0.499) (128,0.499)};

  \nextgroupplot[ymin=0.4585, ymax=10.23165, ytick={0.7,2,7}]
  \addplot[colOurs, mark=*] coordinates {(24,0.89) (32,0.722) (64,0.789) (128,0.77)};
  \addplot[colWave, mark=square*] coordinates {(24,1.234) (32,1.337) (64,1.071) (128,0.905)};
  \addplot[colFlow, mark=triangle*] coordinates {(24,0.957) (32,1.152) (64,1.057) (128,0.837)};
  \addplot[colFour, mark=diamond*] coordinates {(24,1.312) (32,1.321) (64,1.631) (128,1.369)};
  \addplot[colDTS, mark=pentagon*] coordinates {(24,0.902) (32,1.162) (64,0.877) (128,0.619)};
  \addplot[colSig, mark=x, mark size=1.4pt] coordinates {(24,7.579) (32,7.274) (64,6.615) (128,5.903)};

  \nextgroupplot[ymin=0.1837, ymax=0.509, ytick={0.2,0.3,0.5}]
  \addplot[colOurs, mark=*] coordinates {(24,0.25) (32,0.25) (64,0.249) (128,0.248)};
  \addplot[colWave, mark=square*] coordinates {(24,0.251) (32,0.251) (64,0.25) (128,0.25)};
  \addplot[colFlow, mark=triangle*] coordinates {(24,0.251) (32,0.251) (64,0.251) (128,0.251)};
  \addplot[colFour, mark=diamond*] coordinates {(24,0.251) (32,0.251) (64,0.251) (128,0.251)};
  \addplot[colDTS, mark=pentagon*] coordinates {(24,0.251) (32,0.251) (64,0.25) (128,0.249)};
  \addplot[colSig, mark=x, mark size=1.4pt] coordinates {(24,0.377) (32,0.356) (64,0.31) (128,0.278)};

  \nextgroupplot[ylabel={\textbf{MuJoCo}}, xlabel={$T$}, ymin=0.0037, ymax=5.0463, ytick={0.005,0.1,3}]
  \addplot[colOurs, mark=*] coordinates {(24,0.006) (32,0.006) (64,0.011) (128,0.024)};
  \addplot[colWave, mark=square*] coordinates {(24,1.18) (32,1.269) (64,0.347) (128,0.277)};
  \addplot[colFlow, mark=triangle*] coordinates {(24,0.012) (32,0.026) (64,0.093) (128,0.156)};
  \addplot[colFour, mark=diamond*] coordinates {(24,0.062) (32,0.099) (64,0.167) (128,0.224)};
  \addplot[colDTS, mark=pentagon*] coordinates {(24,0.016) (32,0.021) (64,0.035) (128,0.073)};
  \addplot[colSig, mark=x, mark size=1.4pt] coordinates {(24,2.681) (32,3.117) (64,3.621) (128,3.738)};

  \nextgroupplot[xlabel={$T$}, ymin=0.0037, ymax=0.6737, ytick={0.005,0.05,0.3}]
  \addplot[colOurs, mark=*] coordinates {(24,0.007) (32,0.005) (64,0.008) (128,0.008)};
  \addplot[colWave, mark=square*] coordinates {(24,0.228) (32,0.195) (64,0.155) (128,0.078)};
  \addplot[colFlow, mark=triangle*] coordinates {(24,0.011) (32,0.022) (64,0.088) (128,0.112)};
  \addplot[colFour, mark=diamond*] coordinates {(24,0.048) (32,0.05) (64,0.098) (128,0.168)};
  \addplot[colDTS, mark=pentagon*] coordinates {(24,0.016) (32,0.032) (64,0.03) (128,0.068)};
  \addplot[colSig, mark=x, mark size=1.4pt] coordinates {(24,0.499) (32,0.466) (64,0.484) (128,0.481)};

  \nextgroupplot[xlabel={$T$}, ymin=0.1311, ymax=1.57275, ytick={0.2,0.5,1}]
  \addplot[colOurs, mark=*] coordinates {(24,0.246) (32,0.237) (64,0.191) (128,0.182)};
  \addplot[colWave, mark=square*] coordinates {(24,0.284) (32,0.277) (64,0.212) (128,0.177)};
  \addplot[colFlow, mark=triangle*] coordinates {(24,0.228) (32,0.249) (64,0.196) (128,0.204)};
  \addplot[colFour, mark=diamond*] coordinates {(24,0.266) (32,0.267) (64,0.208) (128,0.201)};
  \addplot[colDTS, mark=pentagon*] coordinates {(24,0.283) (32,0.239) (64,0.215) (128,0.229)};
  \addplot[colSig, mark=x, mark size=1.4pt] coordinates {(24,1.165) (32,0.844) (64,0.688) (128,0.596)};

  \nextgroupplot[xlabel={$T$}, ymin=0.003, ymax=0.0338, ytick={0.005,0.01,0.02}]
  \addplot[colOurs, mark=*] coordinates {(24,0.007) (32,0.008) (64,0.008) (128,0.004)};
  \addplot[colWave, mark=square*] coordinates {(24,0.009) (32,0.009) (64,0.006) (128,0.005)};
  \addplot[colFlow, mark=triangle*] coordinates {(24,0.008) (32,0.01) (64,0.008) (128,0.006)};
  \addplot[colFour, mark=diamond*] coordinates {(24,0.01) (32,0.011) (64,0.009) (128,0.006)};
  \addplot[colDTS, mark=pentagon*] coordinates {(24,0.007) (32,0.008) (64,0.008) (128,0.005)};
  \addplot[colSig, mark=x, mark size=1.4pt] coordinates {(24,0.025) (32,0.024) (64,0.017) (128,0.011)};

  \end{groupplot}
  \end{tikzpicture}

  \vspace{0.5em}
  \pgfplotslegendfromname{appmetriclegend}

  \caption{Evolution of each metric with the window length $T$.}
  \label{fig:app-grid}
\end{figure}

\FloatBarrier
\subsection{Qualitative comparison on all datasets}
\label{app:qualitative}

\Cref{fig:app-tsne,fig:app-density} show the two qualitative diagnostics for every method and dataset at $T=24$.

\begin{figure}[!htbp]
  \centering
  \setlength{\tabcolsep}{1pt}
  \renewcommand{\arraystretch}{0.6}
  \settsneh{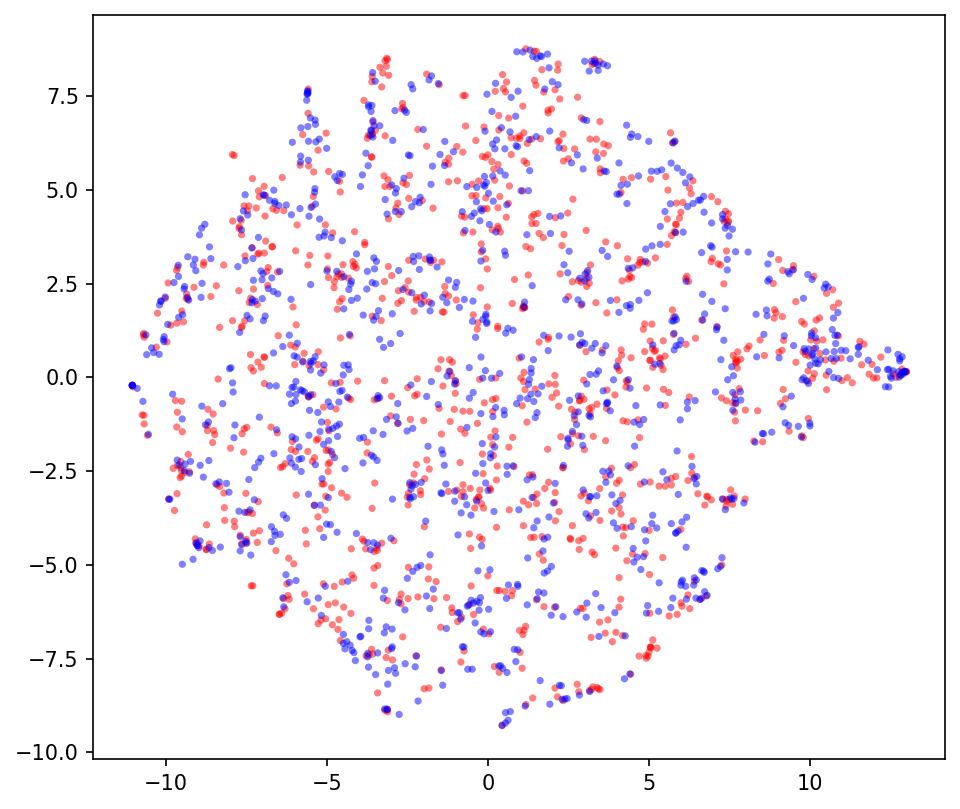}
  \begin{tabular}{c@{\hspace{2pt}}cccccc}
    \rowlab{ETTh1}    & \tsnecell{etth1}{ours}    & \tsnecell{etth1}{waveletdiff}    & \tsnecell{etth1}{flowts}    & \tsnecell{etth1}{fourier}    & \tsnecell{etth1}{difftts}    & \tsnecell{etth1}{sigdiff} \\
    \rowlab{ETTh2}    & \tsnecell{etth2}{ours}    & \tsnecell{etth2}{waveletdiff}    & \tsnecell{etth2}{flowts}    & \tsnecell{etth2}{fourier}    & \tsnecell{etth2}{difftts}    & \tsnecell{etth2}{sigdiff} \\
    \rowlab{Stocks}   & \tsnecell{stocks}{ours}   & \tsnecell{stocks}{waveletdiff}   & \tsnecell{stocks}{flowts}   & \tsnecell{stocks}{fourier}   & \tsnecell{stocks}{difftts}   & \tsnecell{stocks}{sigdiff} \\
    \rowlab{Exchange} & \tsnecell{exch}{ours} & \tsnecell{exch}{waveletdiff} & \tsnecell{exch}{flowts} & \tsnecell{exch}{fourier} & \tsnecell{exch}{difftts} & \tsnecell{exch}{sigdiff} \\
    \rowlab{EEG}      & \tsnecell{eeg}{ours}      & \tsnecell{eeg}{waveletdiff}      & \tsnecell{eeg}{flowts}      & \tsnecell{eeg}{fourier}      & \tsnecell{eeg}{difftts}      & \tsnecell{eeg}{sigdiff} \\
    \rowlab{Energy}   & \tsnecell{energy}{ours}   & \tsnecell{energy}{waveletdiff}   & \tsnecell{energy}{flowts}   & \tsnecell{energy}{fourier}   & \tsnecell{energy}{difftts}   & \tsnecell{energy}{sigdiff} \\
    \rowlab{MuJoCo}   & \tsnecell{mujoco}{ours}   & \tsnecell{mujoco}{waveletdiff}   & \tsnecell{mujoco}{flowts}   & \tsnecell{mujoco}{fourier}   & \tsnecell{mujoco}{difftts}   & \tsnecell{mujoco}{sigdiff} \\[2pt]
     & {\scriptsize (a) Ours} & {\scriptsize (b) WaveletDiff} & {\scriptsize (c) FlowTS} & {\scriptsize (d) FourierDiffusion} & {\scriptsize (e) Diffusion-TS} & {\scriptsize (f) SigDiffusion} \\
  \end{tabular}
  \caption{t-SNE embeddings of real (red) and generated (blue) windows for all methods and datasets at $T=24$.}
  \label{fig:app-tsne}
\end{figure}

\begin{figure}[!htbp]
  \centering
  \setlength{\tabcolsep}{1pt}
  \renewcommand{\arraystretch}{0.6}
  \setlength{\panelw}{\dimexpr(\textwidth-\rowlabw-\pdlabw-13pt)/6\relax}
  \setgridh{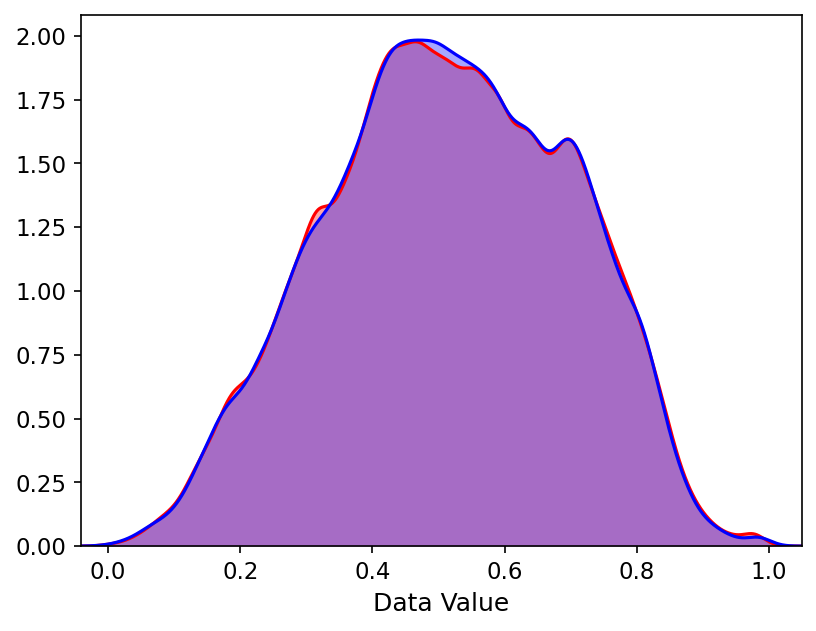}
  \begin{tabular}{@{}c@{\hspace{1pt}}c@{\hspace{2pt}}cccccc@{}}
    \rowlab{ETTh1} & \pdlab & \denscell{etth1}{ours} & \denscell{etth1}{waveletdiff} & \denscell{etth1}{flowts} & \denscell{etth1}{fourier} & \denscell{etth1}{difftts} & \denscell{etth1}{sigdiff} \\
    \rowlab{ETTh2} & \pdlab & \denscell{etth2}{ours} & \denscell{etth2}{waveletdiff} & \denscell{etth2}{flowts} & \denscell{etth2}{fourier} & \denscell{etth2}{difftts} & \denscell{etth2}{sigdiff} \\
    \rowlab{Stocks} & \pdlab & \denscell{stocks}{ours} & \denscell{stocks}{waveletdiff} & \denscell{stocks}{flowts} & \denscell{stocks}{fourier} & \denscell{stocks}{difftts} & \denscell{stocks}{sigdiff} \\
    \rowlab{Exchange} & \pdlab & \denscell{exch}{ours} & \denscell{exch}{waveletdiff} & \denscell{exch}{flowts} & \denscell{exch}{fourier} & \denscell{exch}{difftts} & \denscell{exch}{sigdiff} \\
    \rowlab{EEG} & \pdlab & \denscell{eeg}{ours} & \denscell{eeg}{waveletdiff} & \denscell{eeg}{flowts} & \denscell{eeg}{fourier} & \denscell{eeg}{difftts} & \denscell{eeg}{sigdiff} \\
    \rowlab{Energy} & \pdlab & \denscell{energy}{ours} & \denscell{energy}{waveletdiff} & \denscell{energy}{flowts} & \denscell{energy}{fourier} & \denscell{energy}{difftts} & \denscell{energy}{sigdiff} \\
    \rowlab{MuJoCo} & \pdlab & \denscell{mujoco}{ours} & \denscell{mujoco}{waveletdiff} & \denscell{mujoco}{flowts} & \denscell{mujoco}{fourier} & \denscell{mujoco}{difftts} & \denscell{mujoco}{sigdiff} \\[2pt]
     & & {\scriptsize (a) Ours} & {\scriptsize (b) WaveletDiff} & {\scriptsize (c) FlowTS} & {\scriptsize (d) FourierDiffusion} & {\scriptsize (e) Diffusion-TS} & {\scriptsize (f) SigDiffusion} \\
  \end{tabular}
  \caption{Probability distributions of real (red) and generated (blue) data for all methods and datasets at $T=24$.}
  \label{fig:app-density}
\end{figure}


\FloatBarrier
\subsection{Method component ablation}
\label{app:full_abl}

\Cref{tab:ablations} reports one-factor-at-a-time ablations of the main model configuration at $T=24$. The full model uses \texttt{db2} wavelet coefficients, natural level scaling, logit-normal sampling of the training time $t$, and channel-identity embeddings $e_f$. Each ablation changes exactly one of these components while keeping the remaining architecture, training, and sampling settings fixed.

For \emph{standardized levels}, each wavelet level is divided by its standard deviation from \Cref{tab:level-energy} before training and rescaled before the inverse transform, instead of retaining the natural level scaling described in \Cref{sec:method}. The \emph{uniform $t$} ablation replaces logit-normal time sampling with uniform sampling.

For the \emph{without $e_f$} ablation, removing the channel-identity embedding from \Cref{eq:token} makes every parameter shared across tokens, with none depending on the channel. Hence
$\vth(zP,t)=\vth(z,t)P$ for every permutation matrix $P$ of the channels. Since the $\DWT$ acts channel-wise and the prior is invariant under $P$, the generated law is exchangeable in the channels: all off-diagonal lag-zero correlations share a common value $\rho$. No choice of $\rho$ can match a real correlation matrix that is not equicorrelated, so \Cref{eq:corr-score} is bounded below by $\tfrac{1}{10}\sum_{i<j}\lvert\rho^{\mathrm{real}}_{ij}-\rho^{\star}\rvert$, where $\rho^{\star}$ is the median of the real off-diagonal correlations.

Finally, the \emph{time-domain} ablation replaces the \texttt{db2} transform by the identity, $\W=I$, and trains the same model directly on the time-domain signal.

\begin{table}[t]
\centering
\footnotesize
\setlength{\tabcolsep}{3.5pt}
\resizebox{\textwidth}{!}{%
\begin{tabular}{l *{7}{c}}
\toprule
\textbf{Variant}
& ETTh1 & ETTh2 & Stocks & Exchange & EEG & Energy & MuJoCo \\
\midrule
\multicolumn{8}{l}{\textit{\hspace{0.2cm}Context-FID} ($\downarrow$)}\\
(a) Without $e_f$
  & 1.488\pmstd{0.099} & 0.977\pmstd{0.168}
  & 1.763\pmstd{0.321} & 0.938\pmstd{0.072}
  & \underline{0.011}\pmstd{0.001} & 2.617\pmstd{0.145}
  & 1.390\pmstd{0.139} \\
(b) Std. levels
  & 0.021\pmstd{0.002} & 0.014\pmstd{0.001}
  & 0.018\pmstd{0.004} & 0.010\pmstd{0.001}
  & \textbf{0.010}\pmstd{0.001} & 0.019\pmstd{0.002}
  & 0.008\pmstd{0.000} \\
(c) Time domain
  & \underline{0.006}\pmstd{0.001} & \underline{0.005}\pmstd{0.001}
  & 0.014\pmstd{0.001} & 0.007\pmstd{0.000}
  & 0.014\pmstd{0.001} & \underline{0.012}\pmstd{0.000}
  & \textbf{0.005}\pmstd{0.000} \\
(d) Uniform $t$
  & 0.007\pmstd{0.000} & 0.008\pmstd{0.001}
  & \underline{0.013}\pmstd{0.001} & \underline{0.005}\pmstd{0.000}
  & 0.014\pmstd{0.002} & 0.016\pmstd{0.001}
  & 0.010\pmstd{0.001} \\
\rowcolor{ourscolor}
Full model
  & \textbf{0.005}\pmstd{0.001} & \textbf{0.004}\pmstd{0.000}
  & \textbf{0.005}\pmstd{0.002} & \textbf{0.004}\pmstd{0.001}
  & 0.013\pmstd{0.002} & \textbf{0.010}\pmstd{0.003}
  & \underline{0.006}\pmstd{0.000} \\
  
\midrule
\multicolumn{8}{l}{\textit{\hspace{0.2cm}Discriminative score} ($\downarrow$)}\\
(a) Without $e_f$
  & 0.245\pmstd{0.112} & 0.258\pmstd{0.100}
  & 0.299\pmstd{0.076} & 0.335\pmstd{0.014}
  & 0.060\pmstd{0.021} & 0.496\pmstd{0.001}
  & 0.434\pmstd{0.023} \\
(b) Std. levels
  & 0.009\pmstd{0.005} & \textbf{0.005}\pmstd{0.004}
  & 0.023\pmstd{0.010} & 0.010\pmstd{0.010}
  & \underline{0.005}\pmstd{0.004} & 0.133\pmstd{0.009}
  & \underline{0.009}\pmstd{0.004} \\
(c) Time domain
  & \underline{0.006}\pmstd{0.004} & \underline{0.008}\pmstd{0.004}
  & 0.016\pmstd{0.006} & 0.006\pmstd{0.002}
  & \underline{0.005}\pmstd{0.004} & 0.118\pmstd{0.004}
  & 0.010\pmstd{0.006} \\
(d) Uniform $t$
  & 0.009\pmstd{0.004} & \textbf{0.005}\pmstd{0.003}
  & \textbf{0.011}\pmstd{0.008} & \textbf{0.004}\pmstd{0.002}
  & \underline{0.005}\pmstd{0.003} & \textbf{0.092}\pmstd{0.019}
  & \textbf{0.007}\pmstd{0.002} \\
\rowcolor{ourscolor}
Full model
  & \textbf{0.005}\pmstd{0.004} & \textbf{0.005}\pmstd{0.006}
  & \underline{0.012}\pmstd{0.009} & \underline{0.005}\pmstd{0.003}
  & \textbf{0.003}\pmstd{0.002} & \underline{0.098}\pmstd{0.018}
  & \textbf{0.007}\pmstd{0.004} \\

\midrule
\multicolumn{8}{l}{\textit{\hspace{0.2cm}Correlational score} ($\downarrow$)}\\
(a) Without $e_f$
  & 0.394\pmstd{0.008} & 0.574\pmstd{0.015}
  & 1.023\pmstd{0.024} & 0.816\pmstd{0.027}
  & 4.143\pmstd{0.353} & 11.385\pmstd{0.085}
  & 0.922\pmstd{0.023} \\
(b) Std. levels
  & 0.064\pmstd{0.015} & 0.079\pmstd{0.015}
  & 0.008\pmstd{0.002} & 0.076\pmstd{0.026}
  & \underline{4.002}\pmstd{0.303} & 1.087\pmstd{0.331}
  & \textbf{0.223}\pmstd{0.035} \\
(c) Time domain
  & 0.045\pmstd{0.008} & 0.078\pmstd{0.022}
  & 0.008\pmstd{0.005} & 0.075\pmstd{0.038}
  & 4.363\pmstd{0.318} & 1.069\pmstd{0.287}
  & 0.251\pmstd{0.019} \\
(d) Uniform $t$
  & \textbf{0.039}\pmstd{0.017} & \underline{0.074}\pmstd{0.023}
  & \textbf{0.004}\pmstd{0.004} & \underline{0.069}\pmstd{0.028}
  & 4.429\pmstd{0.227} & \underline{0.973}\pmstd{0.130}
  & 0.265\pmstd{0.032} \\
\rowcolor{ourscolor}
Full model
  & \underline{0.042}\pmstd{0.015} & \textbf{0.066}\pmstd{0.020}
  & \underline{0.007}\pmstd{0.005} & \textbf{0.056}\pmstd{0.017}
  & \textbf{3.870}\pmstd{0.422} & \textbf{0.890}\pmstd{0.176}
  & \underline{0.246}\pmstd{0.033} \\

\midrule
\multicolumn{8}{l}{\textit{\hspace{0.2cm}Predictive score} ($\downarrow$)}\\
(a) Without $e_f$
  & 0.138\pmstd{0.001} & 0.165\pmstd{0.001}
  & \underline{0.090}\pmstd{0.035} & 0.126\pmstd{0.003}
  & \textbf{0.000}\pmstd{0.000} & 0.268\pmstd{0.001}
  & \underline{0.040}\pmstd{0.000} \\
(b) Std. levels
  & \textbf{0.119}\pmstd{0.005} & \underline{0.107}\pmstd{0.003}
  & \textbf{0.037}\pmstd{0.000} & 0.048\pmstd{0.003}
  & \textbf{0.000}\pmstd{0.000} & 0.252\pmstd{0.000}
  & \textbf{0.007}\pmstd{0.000} \\
(c) Time domain
  & 0.122\pmstd{0.003} & \underline{0.107}\pmstd{0.004}
  & \textbf{0.037}\pmstd{0.000} & \underline{0.042}\pmstd{0.006}
  & \textbf{0.000}\pmstd{0.000} & \textbf{0.250}\pmstd{0.000}
  & \textbf{0.007}\pmstd{0.000} \\
(d) Uniform $t$
  & 0.121\pmstd{0.002} & \textbf{0.105}\pmstd{0.004}
  & \textbf{0.037}\pmstd{0.000} & \textbf{0.040}\pmstd{0.005}
  & \textbf{0.000}\pmstd{0.000} & \underline{0.251}\pmstd{0.000}
  & \textbf{0.007}\pmstd{0.000} \\
\rowcolor{ourscolor}
Full model
  & \underline{0.120}\pmstd{0.003} & \textbf{0.105}\pmstd{0.004}
  & \textbf{0.037}\pmstd{0.000} & \underline{0.042}\pmstd{0.004}
  & \textbf{0.000}\pmstd{0.000} & \textbf{0.250}\pmstd{0.000}
  & \textbf{0.007}\pmstd{0.001} \\

\bottomrule
\end{tabular}}
\caption{One-factor-at-a-time ablations at $T=24$. The full model uses
\texttt{db2} coefficients, natural level scaling, logit-normal time sampling,
and channel-identity embeddings $e_f$. Variants (a)--(d) respectively remove
$e_f$, standardize the wavelet levels, replace the wavelet transform with the
identity, and sample $t$ uniformly. Results are mean $\pm$ standard deviation; Best values are in bold and second-best values are underlined.
The shaded rows show the full model.}
\label{tab:ablations}
\end{table}

\clearpage
\FloatBarrier
\subsection{Sampler settings}
\label{app:sampler}

The main experiments use deterministic Euler integration with $N=100$ uniformly spaced steps, as described in \Cref{eq:euler}. For the sampler ablations, we additionally vary both the number and
placement of integration steps.

\paragraph{Time-grid parameterization.}
Following~\citet{hu2024flowts}, we parameterize non-uniform grids through the increasing bijection of $[0,1]$
\begin{equation}
\label{eq:tshift}
  f_\alpha(\tau)
  = \frac{\alpha\,\tau}{1 + (\alpha-1)\,\tau},
  \qquad \alpha > 0.
\end{equation}
We set $t_k=f_\alpha(k/N)$. Values $\alpha>1$ concentrate steps near $t=1$, while $0<\alpha<1$ concentrates them near $t=0$; $\alpha=1$ recovers the uniform grid used by default.

\begin{algorithm}
\caption{Sampling by deterministic Euler integration in the wavelet domain}
\label{alg:sampling}
\begin{algorithmic}[1]
\Require trained velocity field $v_{\theta}$, inverse operator $\W^{-1}$, steps $N$, shift $\alpha$
\State $t_k \gets f_\alpha(k/N)$ for $k = 0,\dots,N$
  \Comment{time grid}
\State $z \gets z_0$ with i.i.d.\ $\N(0,1)$ entries
\For{$k = 0,\dots,N-1$}
  \State $z \gets z + (t_{k+1} - t_k)\; v_{\theta}(z,\, t_k)$
    \Comment{explicit Euler step}
\EndFor
\State $\hat x \gets \W^{-1}\, z$ \Comment{inverse wavelet transform}
\State \Return $\hat x$
\end{algorithmic}
\end{algorithm}

\paragraph{Sampler ablations.}
\Cref{tab:ablation-shift,tab:ablation-steps} report all four metrics at $T=24$ while varying the sampler. \Cref{tab:ablation-shift} varies the time-grid shift $\alpha$ at fixed $N=100$, whereas \Cref{tab:ablation-steps} varies the number of Euler steps $N$ on the uniform grid ($\alpha=1$). All configurations use the same trained model, so these comparisons isolate inference-time choices. The dagger denotes the default configuration.

\begin{table}[H]
  \centering
  \footnotesize
  \setlength{\tabcolsep}{4pt}
  \begin{tabular}{l *{6}{c}}
    \toprule
    & \multicolumn{6}{c}{Time shift $\alpha$} \\
    \cmidrule(lr){2-7}
    \textbf{Dataset} & $0.33$ & $0.5$ & $1.0^\dagger$ & $2.0$ & $3.0$ & $5.0$ \\
    \midrule
    \multicolumn{7}{l}{\textit{\hspace{0.2cm}Context-FID} ($\downarrow$)} \\
    \addlinespace[2pt]
    ETTh1         & 0.007\pmstd{0.000} & 0.008\pmstd{0.000} & \textbf{0.005}\pmstd{0.001} & 0.011\pmstd{0.001} & 0.010\pmstd{0.000} & 0.021\pmstd{0.001} \\
    ETTh2         & 0.008\pmstd{0.000} & 0.005\pmstd{0.000} & \textbf{0.004}\pmstd{0.000} & \textbf{0.004}\pmstd{0.000} & 0.005\pmstd{0.000} & 0.007\pmstd{0.000} \\
    Stocks        & 0.008\pmstd{0.001} & 0.011\pmstd{0.001} & \textbf{0.005}\pmstd{0.002} & 0.006\pmstd{0.001} & 0.010\pmstd{0.001} & 0.023\pmstd{0.003} \\
    Exchange & 0.005\pmstd{0.001} & 0.005\pmstd{0.000} & 0.004\pmstd{0.001} & \textbf{0.003}\pmstd{0.000} & 0.007\pmstd{0.000} & 0.008\pmstd{0.001} \\
    EEG           & \textbf{0.012}\pmstd{0.001} & 0.015\pmstd{0.002} & 0.013\pmstd{0.002} & 0.014\pmstd{0.004} & 0.019\pmstd{0.003} & 0.015\pmstd{0.002} \\
    Energy        & 0.012\pmstd{0.001} & 0.011\pmstd{0.000} & \textbf{0.010}\pmstd{0.003} & 0.012\pmstd{0.001} & 0.011\pmstd{0.002} & 0.016\pmstd{0.002} \\
    MuJoCo        & 0.007\pmstd{0.001} & 0.007\pmstd{0.000} & \textbf{0.006}\pmstd{0.000} & 0.007\pmstd{0.000} & 0.008\pmstd{0.000} & 0.016\pmstd{0.001} \\
    \midrule
    \multicolumn{7}{l}{\textit{\hspace{0.2cm}Discriminative score} ($\downarrow$)} \\
    \addlinespace[2pt]
    ETTh1         & 0.006\pmstd{0.002} & 0.007\pmstd{0.003} & \textbf{0.005}\pmstd{0.004} & \textbf{0.005}\pmstd{0.003} & 0.007\pmstd{0.006} & 0.010\pmstd{0.002} \\
    ETTh2         & 0.009\pmstd{0.007} & 0.007\pmstd{0.004} & 0.005\pmstd{0.006} & \textbf{0.004}\pmstd{0.003} & \textbf{0.004}\pmstd{0.003} & 0.005\pmstd{0.003} \\
    Stocks        & 0.013\pmstd{0.006} & 0.020\pmstd{0.011} & \textbf{0.012}\pmstd{0.009} & 0.015\pmstd{0.008} & 0.015\pmstd{0.009} & 0.024\pmstd{0.003} \\
    Exchange & 0.011\pmstd{0.007} & 0.008\pmstd{0.007} & 0.005\pmstd{0.003} & \textbf{0.003}\pmstd{0.002} & 0.013\pmstd{0.006} & 0.013\pmstd{0.008} \\
    EEG           & \textbf{0.003}\pmstd{0.002} & 0.008\pmstd{0.003} & \textbf{0.003}\pmstd{0.002} & 0.005\pmstd{0.004} & 0.006\pmstd{0.002} & 0.008\pmstd{0.003} \\
    Energy        & 0.102\pmstd{0.013} & 0.094\pmstd{0.021} & 0.098\pmstd{0.018} & \textbf{0.091}\pmstd{0.015} & 0.109\pmstd{0.010} & 0.099\pmstd{0.019} \\
    MuJoCo        & 0.006\pmstd{0.003} & 0.005\pmstd{0.002} & 0.007\pmstd{0.004} & \textbf{0.004}\pmstd{0.006} & 0.005\pmstd{0.004} & 0.011\pmstd{0.006} \\
    \midrule
    \multicolumn{7}{l}{\textit{\hspace{0.2cm}Correlational score} ($\downarrow$)} \\
    \addlinespace[2pt]
    ETTh1         & 0.038\pmstd{0.005} & \textbf{0.031}\pmstd{0.003} & 0.042\pmstd{0.015} & 0.037\pmstd{0.008} & 0.037\pmstd{0.006} & 0.034\pmstd{0.006} \\
    ETTh2         & \textbf{0.055}\pmstd{0.016} & 0.057\pmstd{0.009} & 0.066\pmstd{0.020} & 0.074\pmstd{0.027} & 0.080\pmstd{0.037} & 0.086\pmstd{0.027} \\
    Stocks        & \textbf{0.004}\pmstd{0.001} & 0.006\pmstd{0.003} & 0.007\pmstd{0.005} & 0.006\pmstd{0.003} & 0.014\pmstd{0.004} & 0.009\pmstd{0.004} \\
    Exchange & 0.060\pmstd{0.022} & 0.073\pmstd{0.027} & \textbf{0.056}\pmstd{0.017} & 0.059\pmstd{0.010} & 0.061\pmstd{0.024} & 0.065\pmstd{0.018} \\
    EEG           & 4.921\pmstd{0.386} & 4.108\pmstd{0.426} & \textbf{3.870}\pmstd{0.422} & 3.991\pmstd{0.817} & 3.950\pmstd{0.256} & 4.018\pmstd{0.298} \\
    Energy        & 0.852\pmstd{0.112} & 0.863\pmstd{0.160} & 0.890\pmstd{0.176} & \textbf{0.822}\pmstd{0.127} & 0.843\pmstd{0.148} & 0.870\pmstd{0.194} \\
    MuJoCo        & 0.250\pmstd{0.025} & 0.248\pmstd{0.038} & \textbf{0.246}\pmstd{0.033} & 0.272\pmstd{0.034} & 0.259\pmstd{0.021} & 0.263\pmstd{0.031} \\
    \midrule
    \multicolumn{7}{l}{\textit{\hspace{0.2cm}Predictive score} ($\downarrow$)} \\
    \addlinespace[2pt]
    ETTh1         & 0.121\pmstd{0.002} & 0.121\pmstd{0.002} & 0.120\pmstd{0.003} & 0.119\pmstd{0.001} & \textbf{0.118}\pmstd{0.004} & 0.121\pmstd{0.005} \\
    ETTh2         & 0.108\pmstd{0.004} & 0.106\pmstd{0.001} & \textbf{0.105}\pmstd{0.004} & 0.107\pmstd{0.006} & 0.110\pmstd{0.005} & 0.107\pmstd{0.005} \\
    Stocks        & 0.037\pmstd{0.000} & 0.037\pmstd{0.000} & 0.037\pmstd{0.000} & 0.037\pmstd{0.000} & 0.037\pmstd{0.000} & 0.037\pmstd{0.000} \\
    Exchange & 0.044\pmstd{0.003} & 0.042\pmstd{0.005} & 0.042\pmstd{0.004} & 0.045\pmstd{0.004} & 0.043\pmstd{0.004} & \textbf{0.040}\pmstd{0.004} \\
    EEG           & 0.000\pmstd{0.000} & 0.000\pmstd{0.000} & 0.000\pmstd{0.000} & 0.000\pmstd{0.000} & 0.000\pmstd{0.000} & 0.000\pmstd{0.000} \\
    Energy        & 0.250\pmstd{0.000} & 0.250\pmstd{0.000} & 0.250\pmstd{0.000} & 0.250\pmstd{0.000} & 0.250\pmstd{0.000} & 0.250\pmstd{0.000} \\
    MuJoCo        & 0.007\pmstd{0.001} & 0.007\pmstd{0.001} & 0.007\pmstd{0.001} & 0.007\pmstd{0.000} & 0.007\pmstd{0.001} & 0.007\pmstd{0.000} \\
    \bottomrule
  \end{tabular}
  \caption{Effect of the sampling time shift $\alpha$ at $T=24$, with $N=100$ Euler steps,
    all from the same trained model. $^\dagger$\,Default setting. Results are mean $\pm$ std; the best value in each row is in bold.}
  \label{tab:ablation-shift}
\end{table}

\begin{table}[H]
  \centering
  \footnotesize
  \setlength{\tabcolsep}{4pt}
  \begin{tabular}{l *{5}{c}}
    \toprule
    & \multicolumn{5}{c}{Euler steps $N$} \\
    \cmidrule(lr){2-6}
    \textbf{Dataset} & $25$ & $50$ & $100^\dagger$ & $200$ & $400$ \\
    \midrule
    \multicolumn{6}{l}{\textit{\hspace{0.2cm}Context-FID} ($\downarrow$)} \\
    \addlinespace[2pt]
    ETTh1         & 0.019\pmstd{0.002} & 0.011\pmstd{0.001} & \textbf{0.005}\pmstd{0.001} & 0.007\pmstd{0.000} & 0.008\pmstd{0.001} \\
    ETTh2         & 0.009\pmstd{0.001} & 0.005\pmstd{0.001} & \textbf{0.004}\pmstd{0.000} & 0.005\pmstd{0.000} & 0.009\pmstd{0.000} \\
    Stocks        & 0.021\pmstd{0.003} & 0.004\pmstd{0.001} & 0.005\pmstd{0.002} & \textbf{0.003}\pmstd{0.000} & 0.010\pmstd{0.001} \\
    Exchange & 0.009\pmstd{0.000} & \textbf{0.003}\pmstd{0.000} & 0.004\pmstd{0.001} & 0.005\pmstd{0.000} & 0.006\pmstd{0.000} \\
    EEG           & 0.016\pmstd{0.002} & 0.015\pmstd{0.002} & \textbf{0.013}\pmstd{0.002} & 0.015\pmstd{0.002} & 0.016\pmstd{0.003} \\
    Energy        & 0.013\pmstd{0.001} & 0.012\pmstd{0.000} & \textbf{0.010}\pmstd{0.003} & 0.012\pmstd{0.000} & 0.012\pmstd{0.001} \\
    MuJoCo        & 0.013\pmstd{0.002} & 0.007\pmstd{0.001} & \textbf{0.006}\pmstd{0.000} & 0.007\pmstd{0.001} & 0.007\pmstd{0.001} \\
    \midrule
    \multicolumn{6}{l}{\textit{\hspace{0.2cm}Discriminative score} ($\downarrow$)} \\
    \addlinespace[2pt]
    ETTh1         & 0.007\pmstd{0.004} & 0.006\pmstd{0.003} & \textbf{0.005}\pmstd{0.004} & 0.006\pmstd{0.004} & \textbf{0.005}\pmstd{0.002} \\
    ETTh2         & 0.006\pmstd{0.003} & 0.007\pmstd{0.003} & \textbf{0.005}\pmstd{0.006} & 0.007\pmstd{0.003} & 0.008\pmstd{0.004} \\
    Stocks        & 0.009\pmstd{0.004} & \textbf{0.006}\pmstd{0.003} & 0.012\pmstd{0.009} & 0.011\pmstd{0.009} & 0.020\pmstd{0.008} \\
    Exchange & 0.013\pmstd{0.009} & 0.006\pmstd{0.004} & \textbf{0.005}\pmstd{0.003} & \textbf{0.005}\pmstd{0.003} & 0.006\pmstd{0.005} \\
    EEG           & 0.010\pmstd{0.007} & 0.009\pmstd{0.003} & \textbf{0.003}\pmstd{0.002} & \textbf{0.003}\pmstd{0.002} & 0.005\pmstd{0.003} \\
    Energy        & 0.110\pmstd{0.023} & 0.102\pmstd{0.009} & 0.098\pmstd{0.018} & \textbf{0.094}\pmstd{0.013} & 0.104\pmstd{0.018} \\
    MuJoCo        & 0.016\pmstd{0.003} & \textbf{0.005}\pmstd{0.003} & 0.007\pmstd{0.004} & 0.008\pmstd{0.005} & 0.008\pmstd{0.003} \\
    \midrule
    \multicolumn{6}{l}{\textit{\hspace{0.2cm}Correlational score} ($\downarrow$)} \\
    \addlinespace[2pt]
    ETTh1         & 0.048\pmstd{0.012} & 0.051\pmstd{0.005} & \textbf{0.042}\pmstd{0.015} & 0.046\pmstd{0.013} & 0.058\pmstd{0.018} \\
    ETTh2         & \textbf{0.059}\pmstd{0.007} & \textbf{0.059}\pmstd{0.016} & 0.066\pmstd{0.020} & 0.061\pmstd{0.007} & 0.071\pmstd{0.013} \\
    Stocks        & 0.011\pmstd{0.005} & 0.007\pmstd{0.003} & 0.007\pmstd{0.005} & \textbf{0.005}\pmstd{0.003} & \textbf{0.005}\pmstd{0.005} \\
    Exchange & 0.070\pmstd{0.021} & 0.077\pmstd{0.030} & \textbf{0.056}\pmstd{0.017} & 0.073\pmstd{0.032} & 0.072\pmstd{0.020} \\
    EEG           & 4.535\pmstd{0.204} & 4.109\pmstd{0.385} & \textbf{3.870}\pmstd{0.422} & 4.018\pmstd{0.239} & 3.911\pmstd{0.257} \\
    Energy        & 0.902\pmstd{0.101} & 0.877\pmstd{0.160} & 0.890\pmstd{0.176} & \textbf{0.823}\pmstd{0.158} & 0.842\pmstd{0.136} \\
    MuJoCo        & 0.267\pmstd{0.028} & 0.266\pmstd{0.020} & \textbf{0.246}\pmstd{0.033} & 0.249\pmstd{0.017} & 0.261\pmstd{0.028} \\
    \midrule
    \multicolumn{6}{l}{\textit{\hspace{0.2cm}Predictive score} ($\downarrow$)} \\
    \addlinespace[2pt]
    ETTh1         & 0.119\pmstd{0.003} & \textbf{0.118}\pmstd{0.002} & 0.120\pmstd{0.003} & 0.122\pmstd{0.003} & 0.119\pmstd{0.003} \\
    ETTh2         & \textbf{0.105}\pmstd{0.002} & 0.107\pmstd{0.004} & \textbf{0.105}\pmstd{0.004} & 0.106\pmstd{0.005} & 0.107\pmstd{0.005} \\
    Stocks        & 0.037\pmstd{0.000} & 0.037\pmstd{0.000} & 0.037\pmstd{0.000} & 0.037\pmstd{0.000} & 0.037\pmstd{0.000} \\
    Exchange & 0.044\pmstd{0.005} & 0.040\pmstd{0.005} & 0.042\pmstd{0.004} & \textbf{0.039}\pmstd{0.003} & 0.041\pmstd{0.003} \\
    EEG           & 0.000\pmstd{0.000} & 0.000\pmstd{0.000} & 0.000\pmstd{0.000} & 0.000\pmstd{0.000} & 0.000\pmstd{0.000} \\
    Energy        & 0.250\pmstd{0.000} & 0.250\pmstd{0.000} & 0.250\pmstd{0.000} & 0.250\pmstd{0.000} & 0.250\pmstd{0.000} \\
    MuJoCo        & \textbf{0.007}\pmstd{0.000} & 0.008\pmstd{0.002} & \textbf{0.007}\pmstd{0.001} & 0.010\pmstd{0.003} & \textbf{0.007}\pmstd{0.001} \\
    \bottomrule
  \end{tabular}
  \caption{Effect of the number of Euler steps $N$ at $T=24$, with time shift $\alpha=1$,
    all from the same trained model. $^\dagger$\,Default setting. Results are mean $\pm$ std; the best value in each row is in bold.}
  \label{tab:ablation-steps}
\end{table}

Halving the budget from $100$ to $50$ steps leaves the discriminative and predictive scores within one standard deviation of their $N=100$ values on six of seven datasets, the exception being EEG, where the discriminative score triples. The average Context-FID rises from $0.007$ to $0.008$.

\clearpage
\subsection{Choice of wavelet transform.} \label{app:wavelet_choice}
We compare four fixed wavelet families: Daubechies (\texttt{db2}), Coiflets (\texttt{coif1}), biorthogonal (\texttt{bior2.2}) and reverse biorthogonal (\texttt{rbio2.2}). As a fifth setting, following \Cref{sec:method:learnable}, we let the transform be learned jointly with the velocity field, initializing the free angles at their Daubechies values and any departure from it is driven by the flow-matching loss alone.

\begin{table}[H]
  \centering
  \footnotesize
  \setlength{\tabcolsep}{4pt}
  \begin{tabular}{l *{7}{c}}
    \toprule
    \textbf{Transform} & ETTh1 & ETTh2 & Stocks
    & \begin{tabular}[b]{@{}c@{}}Exchange\end{tabular}
    & EEG & Energy & MuJoCo \\
    \midrule
    \multicolumn{8}{l}{\textit{\hspace{0.2cm}Context-FID} ($\downarrow$)} \\
    \addlinespace[2pt]
    \texttt{db2}     & 0.005\pmstd{0.001} & 0.004\pmstd{0.000} & \textbf{0.005}\pmstd{0.002} & \textbf{0.004}\pmstd{0.001} & 0.013\pmstd{0.002} & 0.010\pmstd{0.003} & 0.006\pmstd{0.000} \\
    \texttt{coif1}   & 0.005\pmstd{0.000} & \textbf{0.003}\pmstd{0.000} & 0.009\pmstd{0.001} & 0.009\pmstd{0.001} & 0.014\pmstd{0.001} & \textbf{0.008}\pmstd{0.000} & 0.007\pmstd{0.001} \\
    \texttt{bior2.2} & 0.006\pmstd{0.000} & 0.005\pmstd{0.000} & 0.006\pmstd{0.000} & \textbf{0.004}\pmstd{0.001} & 0.014\pmstd{0.001} & 0.013\pmstd{0.001} & 0.006\pmstd{0.001} \\
    \texttt{rbio2.2} & \textbf{0.004}\pmstd{0.000} & 0.005\pmstd{0.000} & 0.019\pmstd{0.001} & 0.006\pmstd{0.001} & \textbf{0.010}\pmstd{0.002} & 0.013\pmstd{0.001} & 0.006\pmstd{0.000} \\
    Learned          & \textbf{0.004}\pmstd{0.000} & \textbf{0.003}\pmstd{0.000} & 0.008\pmstd{0.001} & 0.008\pmstd{0.000} & 0.011\pmstd{0.002} & 0.009\pmstd{0.001} & \textbf{0.005}\pmstd{0.000} \\
    \midrule
    \multicolumn{8}{l}{\textit{\hspace{0.2cm}Discriminative score} ($\downarrow$)} \\
    \addlinespace[2pt]
    \texttt{db2}     & 0.005\pmstd{0.004} & 0.005\pmstd{0.006} & 0.012\pmstd{0.009} & \textbf{0.005}\pmstd{0.003} & 0.003\pmstd{0.002} & \textbf{0.098}\pmstd{0.018} & 0.007\pmstd{0.004} \\
    \texttt{coif1}   & 0.005\pmstd{0.003} & 0.007\pmstd{0.004} & 0.012\pmstd{0.011} & 0.012\pmstd{0.003} & \textbf{0.002}\pmstd{0.001} & 0.103\pmstd{0.011} & \textbf{0.005}\pmstd{0.006} \\
    \texttt{bior2.2} & 0.006\pmstd{0.005} & \textbf{0.003}\pmstd{0.003} & 0.015\pmstd{0.006} & 0.007\pmstd{0.006} & 0.006\pmstd{0.005} & 0.106\pmstd{0.009} & 0.006\pmstd{0.008} \\
    \texttt{rbio2.2} & \textbf{0.003}\pmstd{0.004} & 0.005\pmstd{0.001} & 0.014\pmstd{0.009} & 0.008\pmstd{0.003} & 0.004\pmstd{0.002} & 0.111\pmstd{0.009} & 0.008\pmstd{0.007} \\
    Learned          & 0.004\pmstd{0.003} & 0.005\pmstd{0.003} & \textbf{0.010}\pmstd{0.005} & 0.007\pmstd{0.002} & 0.004\pmstd{0.003} & 0.110\pmstd{0.014} & 0.006\pmstd{0.003} \\
    \midrule
    \multicolumn{8}{l}{\textit{\hspace{0.2cm}Correlational score} ($\downarrow$)} \\
    \addlinespace[2pt]
    \texttt{db2}     & 0.042\pmstd{0.015} & 0.066\pmstd{0.020} & 0.007\pmstd{0.005} & 0.056\pmstd{0.017} & \textbf{3.870}\pmstd{0.422} & \textbf{0.890}\pmstd{0.176} & 0.246\pmstd{0.033} \\
    \texttt{coif1}   & 0.045\pmstd{0.014} & 0.088\pmstd{0.020} & 0.005\pmstd{0.002} & \textbf{0.045}\pmstd{0.020} & 4.777\pmstd{0.504} & 0.901\pmstd{0.139} & 0.267\pmstd{0.025} \\
    \texttt{bior2.2} & 0.041\pmstd{0.008} & \textbf{0.056}\pmstd{0.017} & 0.006\pmstd{0.004} & 0.056\pmstd{0.019} & 4.754\pmstd{0.331} & 1.033\pmstd{0.234} & \textbf{0.228}\pmstd{0.046} \\
    \texttt{rbio2.2} & \textbf{0.039}\pmstd{0.009} & 0.066\pmstd{0.012} & 0.010\pmstd{0.005} & 0.067\pmstd{0.017} & 4.447\pmstd{0.288} & 1.242\pmstd{0.210} & 0.308\pmstd{0.035} \\
    Learned          & 0.041\pmstd{0.013} & 0.068\pmstd{0.018} & \textbf{0.004}\pmstd{0.002} & 0.056\pmstd{0.013} & 4.401\pmstd{0.343} & 1.018\pmstd{0.370} & 0.269\pmstd{0.017} \\
    \midrule
    \multicolumn{8}{l}{\textit{\hspace{0.2cm}Predictive score} ($\downarrow$)} \\
    \addlinespace[2pt]
    \texttt{db2}     & 0.120\pmstd{0.003} & \textbf{0.105}\pmstd{0.004} & 0.037\pmstd{0.000} & \textbf{0.042}\pmstd{0.004} & 0.000\pmstd{0.000} & 0.250\pmstd{0.000} & 0.007\pmstd{0.001} \\
    \texttt{coif1}   & \textbf{0.114}\pmstd{0.005} & \textbf{0.105}\pmstd{0.004} & 0.037\pmstd{0.000} & 0.044\pmstd{0.005} & 0.000\pmstd{0.000} & 0.250\pmstd{0.000} & 0.007\pmstd{0.000} \\
    \texttt{bior2.2} & 0.122\pmstd{0.002} & 0.106\pmstd{0.002} & 0.037\pmstd{0.000} & 0.043\pmstd{0.007} & 0.000\pmstd{0.000} & 0.250\pmstd{0.000} & 0.007\pmstd{0.000} \\
    \texttt{rbio2.2} & 0.119\pmstd{0.001} & \textbf{0.105}\pmstd{0.002} & 0.037\pmstd{0.000} & 0.044\pmstd{0.007} & 0.000\pmstd{0.000} & 0.250\pmstd{0.000} & 0.007\pmstd{0.001} \\
    Learned          & 0.121\pmstd{0.001} & \textbf{0.105}\pmstd{0.004} & 0.037\pmstd{0.000} & \textbf{0.042}\pmstd{0.004} & 0.000\pmstd{0.000} & 0.250\pmstd{0.000} & 0.007\pmstd{0.001} \\
    \bottomrule
  \end{tabular}
  \caption{Effect of the wavelet transform at $T=24$. Results are mean $\pm$ std; the best transform for each dataset and
    metric is in bold.}
  \label{tab:ablation-transform}
\end{table}

No transform wins across the board: each of the five is best somewhere, and the winner changes from one metric to another within a dataset and from one dataset to another within a metric (\Cref{tab:ablation-transform}). The learned transform gives the model the capacity to move the basis toward a representation better suited to the data, but training makes little use of it. The angles never leave a $2^\circ$ neighborhood of the \texttt{db2} initialization, the direction of the drift is not consistent across datasets, and the final filters differ from \texttt{db2} by at most $5\%$ in $\ell^2$ (\Cref{fig:angle-drift}). We conclude that the method is largely insensitive to the choice of filter: what matters is the multi-resolution representation, not the particular basis within it.

\begin{figure}[H]
    \centering
    \begin{tikzpicture}
        \node[inner sep=0pt, anchor=south west] (im) at (0,0)
            {\includegraphics[width=\linewidth]{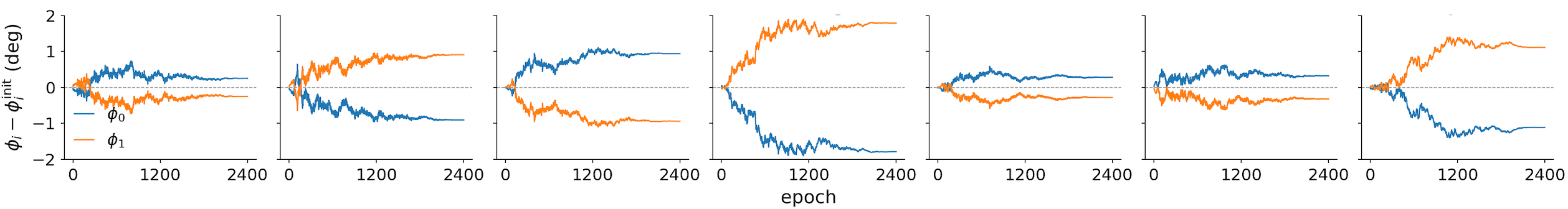}};
        \begin{scope}[x={(im.south east)}, y={(im.north west)}]
            \foreach \xc/\nm in {0.1017/{ETTh1}, 0.2394/{ETTh2}, 0.3770/{Stocks},
                                 0.5147/{Exchange}, 0.6523/{EEG},
                                 0.7900/{Energy}, 0.9276/{MuJoCo}}
                \node[anchor=south, font=\scriptsize, inner sep=1pt, yshift=3pt]
                    at (\xc,0.90) {\nm};
        \end{scope}
    \end{tikzpicture}
    \caption{Deviation of the lattice angles from their \texttt{db2} initialization during training at $T=24$.}
    \label{fig:angle-drift}
\end{figure}

\end{document}

%% file: math_commands.tex
\usepackage{amsmath,amsfonts,bm}

\def\eqref#1{equation~\ref{#1}}

\def\1{\bm{1}}

\DeclareMathAlphabet{\mathsfit}{\encodingdefault}{\sfdefault}{m}{sl}
\SetMathAlphabet{\mathsfit}{bold}{\encodingdefault}{\sfdefault}{bx}{n}

\newcommand{\pdata}{p_{\rm{data}}}

\newcommand{\E}{\mathbb{E}}

\newcommand{\R}{\mathbb{R}}



%% file: figures/SNR_figure.tex
\ifdefined\snrwidth\else\def\snrwidth{8.5cm}\fi
\ifdefined\snrheight\else\def\snrheight{5.5cm}\fi
\ifdefined\snrylabel\else\def\snrylabel{$\mathrm{SNR}_j(t)$}\fi
\ifdefined\snrlegend\else\def\snrlegend{1}\fi
\ifdefined\snrlegendsep\else\def\snrlegendsep{0.35cm}\fi
\ifdefined\snrlegendsample\else\def\snrlegendsample{0.6cm}\fi
\def\snryes{1}\def\snrlegendentry#1{\ifx\snrlegend\snryes\addlegendentry{#1}\fi}
\def\snrall{all}\def\snrno{0}\def\snrfound{0}\ifdefined\snrdataset\else\def\snrdataset{all}\fi

\def\snrplot#1#2#3#4#5{\begin{tikzpicture}
\begin{axis}[
    width=\snrwidth,
    height=\snrheight,
    ymode=log,
    ymin=1e-6,
    ymax=1e6,
    xmin=0,
    xmax=1,
    domain=1e-5:0.99999,
    samples=500,
    set layers,
    axis line style={on layer=axis foreground},
    xlabel={$t$},
    ylabel={\snrylabel},
    label style={font=\small},
    tick label style={font=\footnotesize},
    grid=both,
    grid style={gray!20,dashed},
    ytick={1e-4,1,1e4},
    yticklabels={$10^{-4}$,$1$,$10^{4}$},
    minor ytick={1e-2,1e2},
    xtick={0,0.25,0.5,0.75,1},
    legend style={
        at={(0.5,1.03)},
        anchor=south,
        draw=none,
        fill=none,
        font=\footnotesize,
        /tikz/every even column/.append style={column sep=\snrlegendsep}
    },
    legend image code/.code={\draw[##1] (0cm,0cm) -- (\snrlegendsample,0cm);},
    legend columns=4,
    legend cell align=left,
]
\addplot[cA,    line width=0.9pt] {(#2*x/(1-x))^2};
\snrlegendentry{$a_3$}
\addplot[cC,    line width=0.8pt] {(#3*x/(1-x))^2};
\snrlegendentry{$d_3$}
\addplot[cC!60, line width=0.8pt] {(#4*x/(1-x))^2};
\snrlegendentry{$d_2$}
\addplot[cC!35, line width=0.8pt] {(#5*x/(1-x))^2};
\snrlegendentry{$d_1$}
\addplot[black, dashed, line width=0.4pt, domain=0:1, samples=2, forget plot] {1};
\end{axis}
\end{tikzpicture}}\relax

\def\snrcase#1#2#3#4#5{\def\snrthis{#1}\ifx\snrdataset\snrall
  \def\snrfound{1}\snrplot{#1}{#2}{#3}{#4}{#5}\else\ifx\snrdataset\snrthis
  \def\snrfound{1}\snrplot{#1}{#2}{#3}{#4}{#5}\fi\fi\ignorespaces}\relax

\snrcase {ETTh1}         {2.57} {0.90} {0.42} {0.24}
\snrcase {ETTh2}         {2.76} {0.40} {0.24} {0.17}
\snrcase {Stocks}        {2.75} {0.30} {0.23} {0.17}
\snrcase {Exchange Rate} {2.82} {0.12} {0.07} {0.04}
\snrcase {EEG}           {1.41} {0.95} {0.93} {0.92}
\snrcase {Energy}        {2.66} {0.51} {0.37} {0.31}
\snrcase {MuJoCo}        {2.67} {0.66} {0.38} {0.21}

\ifx\snrfound\snrno\errmessage{SNR_figure: unknown dataset '\snrdataset'. Use ETTh1, ETTh2, Stocks, Exchange Rate, EEG, Energy or MuJoCo}\fi

%% file: figures/figure1_new.tex
\begin{tikzpicture}[
  panel/.style   = {draw=cN!55, line width=0.4pt, fill=white},
  sigline/.style = {line width=0.45pt, line join=round},
  coefbox/.style = {rectangle, rounded corners=1.5pt, draw=cN!55, fill=white,
                    line width=0.4pt, minimum width=1.5cm, minimum height=1.6cm,
                    inner sep=0pt},
  flow/.style    = {-{Stealth[length=2.2mm]}, line width=0.5pt, draw=cN!85},
  vert/.style    = {-{Stealth[length=1.8mm]}, line width=0.45pt, draw=cN!75},
  vertd/.style   = {-{Stealth[length=1.8mm]}, line width=0.4pt, draw=cN!50,
                    dash pattern=on 2pt off 1.6pt},
  planeb/.style  = {rounded corners=1.5pt, draw=cN!28, fill=white, line width=0.35pt},
  planem/.style  = {rounded corners=1.5pt, draw=cN!42, fill=white, line width=0.38pt},
  tag/.style     = {font=\scriptsize, text=cN, inner sep=1.5pt},
  ax/.style      = {font=\scriptsize, text=cN, inner sep=2pt}]

\def\sigAa{(0.000,0.664) (0.037,0.709) (0.073,0.630) (0.110,0.785) (0.146,0.584) (0.183,0.539) (0.219,0.670) (0.256,0.485) (0.292,0.382) (0.329,0.803) (0.365,0.388) (0.402,0.253) (0.438,0.591) (0.475,0.474) (0.511,0.743) (0.548,0.311) (0.584,0.543) (0.621,0.403) (0.657,0.725) (0.694,0.338) (0.730,0.809) (0.767,0.728) (0.803,0.720) (0.840,0.361) (0.876,0.459) (0.913,0.333) (0.949,0.389) (0.986,0.381) (1.022,0.559) (1.059,0.753) (1.095,0.838) (1.132,0.752) (1.168,0.511) (1.205,0.701) (1.241,0.425) (1.278,0.624) (1.314,0.672) (1.351,0.620) (1.387,0.333) (1.424,0.659) (1.460,0.570) (1.497,0.597) (1.533,0.585) (1.570,0.291) (1.606,0.451) (1.643,0.603) (1.679,0.137) (1.716,0.705) (1.752,0.032) (1.789,0.974) (1.825,0.411) (1.862,0.233) (1.898,0.698) (1.935,0.544) (1.971,0.438) (2.008,0.340) (2.044,0.753) (2.081,0.812) (2.117,0.834) (2.154,0.665) (2.190,0.509) (2.227,0.851) (2.263,0.380) (2.300,0.650)}
\def\sigAb{(0.000,0.450) (0.037,0.402) (0.073,0.321) (0.110,0.000) (0.146,0.267) (0.183,0.367) (0.219,0.476) (0.256,0.432) (0.292,0.583) (0.329,0.306) (0.365,0.626) (0.402,0.572) (0.438,0.653) (0.475,0.334) (0.511,0.308) (0.548,0.696) (0.584,0.856) (0.621,0.732) (0.657,0.539) (0.694,0.817) (0.730,0.346) (0.767,0.477) (0.803,0.115) (0.840,0.534) (0.876,0.511) (0.913,0.590) (0.949,0.508) (0.986,0.808) (1.022,0.364) (1.059,0.503) (1.095,0.484) (1.132,0.533) (1.168,0.361) (1.205,0.508) (1.241,0.493) (1.278,0.445) (1.314,0.615) (1.351,0.593) (1.387,0.687) (1.424,0.399) (1.460,0.218) (1.497,0.581) (1.533,0.568) (1.570,0.688) (1.606,0.691) (1.643,0.269) (1.679,0.413) (1.716,0.678) (1.752,0.471) (1.789,0.600) (1.825,0.327) (1.862,0.624) (1.898,0.581) (1.935,0.567) (1.971,0.634) (2.008,0.664) (2.044,0.625) (2.081,0.520) (2.117,0.381) (2.154,0.521) (2.190,0.485) (2.227,0.552) (2.263,0.758) (2.300,0.247)}
\def\sigAc{(0.000,0.492) (0.037,0.824) (0.073,0.469) (0.110,0.417) (0.146,0.508) (0.183,0.096) (0.219,0.295) (0.256,0.553) (0.292,0.691) (0.329,0.479) (0.365,0.826) (0.402,0.715) (0.438,0.465) (0.475,0.432) (0.511,0.635) (0.548,0.434) (0.584,0.949) (0.621,0.530) (0.657,0.246) (0.694,0.593) (0.730,0.549) (0.767,0.615) (0.803,0.137) (0.840,0.638) (0.876,0.538) (0.913,0.696) (0.949,0.321) (0.986,0.687) (1.022,0.231) (1.059,0.645) (1.095,0.536) (1.132,0.406) (1.168,0.367) (1.205,0.698) (1.241,0.381) (1.278,0.547) (1.314,0.913) (1.351,0.425) (1.387,0.436) (1.424,0.349) (1.460,0.441) (1.497,0.667) (1.533,0.476) (1.570,0.366) (1.606,0.529) (1.643,0.627) (1.679,0.282) (1.716,0.834) (1.752,0.749) (1.789,0.308) (1.825,0.256) (1.862,0.471) (1.898,0.437) (1.935,0.451) (1.971,0.658) (2.008,0.495) (2.044,0.364) (2.081,0.636) (2.117,0.810) (2.154,0.579) (2.190,0.593) (2.227,0.545) (2.263,0.773) (2.300,0.349)}
\def\sigBa{(0.000,0.568) (0.037,0.673) (0.073,0.627) (0.110,0.755) (0.146,0.590) (0.183,0.591) (0.219,0.737) (0.256,0.557) (0.292,0.415) (0.329,0.748) (0.365,0.385) (0.402,0.264) (0.438,0.517) (0.475,0.350) (0.511,0.551) (0.548,0.185) (0.584,0.410) (0.621,0.278) (0.657,0.537) (0.694,0.208) (0.730,0.678) (0.767,0.669) (0.803,0.688) (0.840,0.375) (0.876,0.484) (0.913,0.431) (0.949,0.530) (0.986,0.524) (1.022,0.647) (1.059,0.798) (1.095,0.885) (1.132,0.815) (1.168,0.564) (1.205,0.663) (1.241,0.378) (1.278,0.552) (1.314,0.600) (1.351,0.527) (1.387,0.233) (1.424,0.509) (1.460,0.472) (1.497,0.546) (1.533,0.553) (1.570,0.294) (1.606,0.459) (1.643,0.659) (1.679,0.314) (1.716,0.828) (1.752,0.224) (1.789,1.050) (1.825,0.589) (1.862,0.455) (1.898,0.834) (1.935,0.640) (1.971,0.507) (2.008,0.419) (2.044,0.777) (2.081,0.788) (2.117,0.748) (2.154,0.573) (2.190,0.458) (2.227,0.789) (2.263,0.378) (2.300,0.599)}
\def\sigBb{(0.000,0.631) (0.037,0.588) (0.073,0.509) (0.110,0.211) (0.146,0.426) (0.183,0.490) (0.219,0.564) (0.256,0.509) (0.292,0.614) (0.329,0.345) (0.365,0.605) (0.402,0.539) (0.438,0.593) (0.475,0.295) (0.511,0.263) (0.548,0.602) (0.584,0.734) (0.621,0.627) (0.657,0.458) (0.694,0.707) (0.730,0.303) (0.767,0.436) (0.803,0.135) (0.840,0.527) (0.876,0.532) (0.913,0.623) (0.949,0.559) (0.986,0.838) (1.022,0.464) (1.059,0.598) (1.095,0.581) (1.132,0.623) (1.168,0.465) (1.205,0.581) (1.241,0.563) (1.278,0.500) (1.314,0.630) (1.351,0.594) (1.387,0.649) (1.424,0.373) (1.460,0.192) (1.497,0.487) (1.533,0.448) (1.570,0.536) (1.606,0.533) (1.643,0.146) (1.679,0.271) (1.716,0.504) (1.752,0.324) (1.789,0.457) (1.825,0.237) (1.862,0.508) (1.898,0.491) (1.935,0.504) (1.971,0.585) (2.008,0.640) (2.044,0.629) (2.081,0.562) (2.117,0.467) (2.154,0.601) (2.190,0.583) (2.227,0.646) (2.263,0.830) (2.300,0.374)}
\def\sigBc{(0.000,0.429) (0.037,0.680) (0.073,0.338) (0.110,0.275) (0.146,0.354) (0.183,0.012) (0.219,0.217) (0.256,0.482) (0.292,0.642) (0.329,0.488) (0.365,0.810) (0.402,0.713) (0.438,0.477) (0.475,0.417) (0.511,0.556) (0.548,0.342) (0.584,0.761) (0.621,0.379) (0.657,0.131) (0.694,0.453) (0.730,0.449) (0.767,0.548) (0.803,0.172) (0.840,0.645) (0.876,0.578) (0.913,0.721) (0.949,0.381) (0.986,0.675) (1.022,0.244) (1.059,0.574) (1.095,0.455) (1.132,0.332) (1.168,0.306) (1.205,0.621) (1.241,0.384) (1.278,0.576) (1.314,0.942) (1.351,0.554) (1.387,0.587) (1.424,0.516) (1.460,0.583) (1.497,0.753) (1.533,0.551) (1.570,0.417) (1.606,0.530) (1.643,0.600) (1.679,0.301) (1.716,0.802) (1.752,0.760) (1.789,0.415) (1.825,0.409) (1.862,0.629) (1.898,0.617) (1.935,0.630) (1.971,0.794) (2.008,0.620) (2.044,0.466) (2.081,0.665) (2.117,0.786) (2.154,0.566) (2.190,0.579) (2.227,0.554) (2.263,0.785) (2.300,0.454)}
\def\sigCa{(0.000,0.407) (0.037,0.582) (0.073,0.598) (0.110,0.654) (0.146,0.583) (0.183,0.657) (0.219,0.792) (0.256,0.662) (0.292,0.492) (0.329,0.610) (0.365,0.414) (0.402,0.343) (0.438,0.402) (0.475,0.196) (0.511,0.242) (0.548,0.068) (0.584,0.228) (0.621,0.141) (0.657,0.238) (0.694,0.079) (0.730,0.435) (0.767,0.544) (0.803,0.600) (0.840,0.433) (0.876,0.533) (0.913,0.608) (0.949,0.751) (0.986,0.750) (1.022,0.756) (1.059,0.804) (1.095,0.874) (1.132,0.846) (1.168,0.638) (1.205,0.570) (1.241,0.340) (1.278,0.433) (1.314,0.468) (1.351,0.380) (1.387,0.144) (1.424,0.277) (1.460,0.331) (1.497,0.461) (1.533,0.497) (1.570,0.353) (1.606,0.489) (1.643,0.714) (1.679,0.641) (1.716,0.949) (1.752,0.598) (1.789,1.045) (1.825,0.853) (1.862,0.819) (1.898,0.974) (1.935,0.763) (1.971,0.620) (2.008,0.567) (2.044,0.755) (2.081,0.688) (2.117,0.560) (2.154,0.417) (2.190,0.394) (2.227,0.628) (2.263,0.409) (2.300,0.502)}
\def\sigCb{(0.000,0.890) (0.037,0.866) (0.073,0.807) (0.110,0.617) (0.146,0.700) (0.183,0.690) (0.219,0.692) (0.256,0.634) (0.292,0.643) (0.329,0.449) (0.365,0.553) (0.402,0.484) (0.438,0.483) (0.475,0.289) (0.511,0.254) (0.548,0.435) (0.584,0.492) (0.621,0.439) (0.657,0.346) (0.694,0.492) (0.730,0.287) (0.767,0.393) (0.803,0.259) (0.840,0.515) (0.876,0.563) (0.913,0.652) (0.949,0.631) (0.986,0.810) (1.022,0.637) (1.059,0.731) (1.095,0.719) (1.132,0.741) (1.168,0.642) (1.205,0.684) (1.241,0.663) (1.278,0.593) (1.314,0.629) (1.351,0.579) (1.387,0.562) (1.424,0.368) (1.460,0.229) (1.497,0.348) (1.533,0.278) (1.570,0.295) (1.606,0.282) (1.643,0.041) (1.679,0.109) (1.716,0.235) (1.752,0.141) (1.789,0.249) (1.825,0.163) (1.862,0.328) (1.898,0.357) (1.935,0.410) (1.971,0.495) (2.008,0.574) (2.044,0.609) (2.081,0.619) (2.117,0.616) (2.154,0.710) (2.190,0.724) (2.227,0.765) (2.263,0.870) (2.300,0.608)}
\def\sigCc{(0.000,0.351) (0.037,0.416) (0.073,0.175) (0.110,0.110) (0.146,0.153) (0.183,0.000) (0.219,0.167) (0.256,0.381) (0.292,0.538) (0.329,0.512) (0.365,0.717) (0.402,0.665) (0.438,0.507) (0.475,0.419) (0.511,0.424) (0.548,0.240) (0.584,0.411) (0.621,0.175) (0.657,0.044) (0.694,0.252) (0.730,0.309) (0.767,0.437) (0.803,0.309) (0.840,0.627) (0.876,0.629) (0.913,0.714) (0.949,0.509) (0.986,0.621) (1.022,0.330) (1.059,0.450) (1.095,0.344) (1.132,0.261) (1.168,0.262) (1.205,0.478) (1.241,0.421) (1.278,0.609) (1.314,0.890) (1.351,0.749) (1.387,0.809) (1.424,0.780) (1.460,0.793) (1.497,0.835) (1.533,0.661) (1.570,0.523) (1.606,0.532) (1.643,0.542) (1.679,0.383) (1.716,0.686) (1.752,0.722) (1.789,0.608) (1.825,0.677) (1.862,0.853) (1.898,0.879) (1.935,0.886) (1.971,0.943) (2.008,0.794) (2.044,0.641) (2.081,0.678) (2.117,0.686) (2.154,0.537) (2.190,0.544) (2.227,0.562) (2.263,0.743) (2.300,0.635)}
\def\sigDa{(0.000,0.302) (0.037,0.488) (0.073,0.553) (0.110,0.533) (0.146,0.561) (0.183,0.678) (0.219,0.767) (0.256,0.715) (0.292,0.567) (0.329,0.468) (0.365,0.468) (0.402,0.460) (0.438,0.338) (0.475,0.154) (0.511,0.055) (0.548,0.091) (0.584,0.153) (0.621,0.128) (0.657,0.061) (0.694,0.090) (0.730,0.253) (0.767,0.432) (0.803,0.504) (0.840,0.507) (0.876,0.572) (0.913,0.735) (0.949,0.877) (0.986,0.881) (1.022,0.786) (1.059,0.734) (1.095,0.770) (1.132,0.785) (1.168,0.671) (1.205,0.479) (1.241,0.357) (1.278,0.357) (1.314,0.372) (1.351,0.294) (1.387,0.172) (1.424,0.147) (1.460,0.264) (1.497,0.406) (1.533,0.456) (1.570,0.451) (1.606,0.524) (1.643,0.710) (1.679,0.888) (1.716,0.938) (1.752,0.896) (1.789,0.900) (1.825,0.989) (1.862,1.050) (1.898,0.972) (1.935,0.804) (1.971,0.690) (2.008,0.682) (2.044,0.673) (2.081,0.559) (2.117,0.391) (2.154,0.313) (2.190,0.374) (2.227,0.463) (2.263,0.467) (2.300,0.425)}
\def\sigDb{(0.000,1.012) (0.037,1.010) (0.073,0.985) (0.110,0.938) (0.146,0.886) (0.183,0.816) (0.219,0.756) (0.256,0.711) (0.292,0.635) (0.329,0.558) (0.365,0.501) (0.402,0.449) (0.438,0.401) (0.475,0.347) (0.511,0.320) (0.548,0.317) (0.584,0.296) (0.621,0.302) (0.657,0.299) (0.694,0.318) (0.730,0.338) (0.767,0.392) (0.803,0.436) (0.840,0.508) (0.876,0.579) (0.913,0.642) (0.949,0.664) (0.986,0.709) (1.022,0.753) (1.059,0.788) (1.095,0.784) (1.132,0.784) (1.168,0.761) (1.205,0.728) (1.241,0.712) (1.278,0.654) (1.314,0.600) (1.351,0.552) (1.387,0.477) (1.424,0.407) (1.460,0.341) (1.497,0.278) (1.533,0.200) (1.570,0.152) (1.606,0.135) (1.643,0.083) (1.679,0.083) (1.716,0.085) (1.752,0.088) (1.789,0.147) (1.825,0.198) (1.862,0.229) (1.898,0.288) (1.935,0.361) (1.971,0.426) (2.008,0.505) (2.044,0.570) (2.081,0.642) (2.117,0.719) (2.154,0.753) (2.190,0.789) (2.227,0.802) (2.263,0.811) (2.300,0.785)}
\def\sigDc{(0.000,0.332) (0.037,0.221) (0.073,0.132) (0.110,0.082) (0.146,0.082) (0.183,0.132) (0.219,0.222) (0.256,0.334) (0.292,0.445) (0.329,0.535) (0.365,0.585) (0.402,0.586) (0.438,0.538) (0.475,0.450) (0.511,0.340) (0.548,0.230) (0.584,0.143) (0.621,0.096) (0.657,0.100) (0.694,0.154) (0.730,0.248) (0.767,0.366) (0.803,0.485) (0.840,0.584) (0.876,0.644) (0.913,0.657) (0.949,0.622) (0.986,0.549) (1.022,0.457) (1.059,0.366) (1.095,0.298) (1.132,0.272) (1.168,0.295) (1.205,0.369) (1.241,0.481) (1.278,0.615) (1.314,0.747) (1.351,0.855) (1.387,0.921) (1.424,0.936) (1.460,0.899) (1.497,0.820) (1.533,0.718) (1.570,0.614) (1.606,0.531) (1.643,0.487) (1.679,0.491) (1.716,0.544) (1.752,0.637) (1.789,0.750) (1.825,0.864) (1.862,0.955) (1.898,1.005) (1.935,1.006) (1.971,0.958) (2.008,0.869) (2.044,0.758) (2.081,0.647) (2.117,0.558) (2.154,0.508) (2.190,0.509) (2.227,0.558) (2.263,0.648) (2.300,0.760)}

\def\bandL{-0.40} \def\bandR{13.30}   
\def\bandB{-1.20} \def\bandT{1.55}    
\def\bumpL{10.30} \def\bumpT{3.40}    

\begin{scope}[on background layer]
  \path[fill=cN!10, rounded corners=8pt]
    (\bandL,\bandB) -- (\bandL,\bandT) -- (\bumpL,\bandT) --
    (\bumpL,\bumpT) -- (\bandR,\bumpT) -- (\bandR,\bandB) -- cycle;
\end{scope}

\begin{scope}[shift={(0.0,2.15)}]
  \draw[panel] (0,0) rectangle (2.3,1.05);
  \draw[sigline,draw=cS!85] plot coordinates {\sigAa};
  \draw[sigline,draw=cS!50] plot coordinates {\sigAb};
  \draw[sigline,draw=cS!28] plot coordinates {\sigAc};
\end{scope}

\begin{scope}[shift={(3.55,2.15)}]
  \draw[panel] (0,0) rectangle (2.3,1.05);
  \draw[sigline,draw=cS!85] plot coordinates {\sigBa};
  \draw[sigline,draw=cS!50] plot coordinates {\sigBb};
  \draw[sigline,draw=cS!28] plot coordinates {\sigBc};
\end{scope}

\begin{scope}[shift={(7.1,2.15)}]
  \draw[panel] (0,0) rectangle (2.3,1.05);
  \draw[sigline,draw=cS!85] plot coordinates {\sigCa};
  \draw[sigline,draw=cS!50] plot coordinates {\sigCb};
  \draw[sigline,draw=cS!28] plot coordinates {\sigCc};
\end{scope}

\begin{scope}[shift={(10.65,2.15)}]
  \draw[panel] (0,0) rectangle (2.3,1.05);
  \draw[sigline,draw=cS!85] plot coordinates {\sigDa};
  \draw[sigline,draw=cS!50] plot coordinates {\sigDb};
  \draw[sigline,draw=cS!28] plot coordinates {\sigDc};
\end{scope}

\foreach \cx in {1.15,4.70,8.25,11.80}{%
  \begin{scope}[shift={(\cx,0.45)}]
    \draw[planeb] (-0.53,-0.58) rectangle (0.97,1.02);
    \draw[planem] (-0.64,-0.69) rectangle (0.86,0.91);
  \end{scope}}

\node[coefbox] (c0) at (1.15,0.45)  {};
\node[coefbox] (c1) at (4.70,0.45)  {};
\node[coefbox] (c2) at (8.25,0.45)  {};
\node[coefbox] (c3) at (11.80,0.45) {};

\begin{scope}[shift={(1.15,0.45)}]
  \draw[line width=0.25pt, draw=cN!22] (-0.63,0.6) -- (0.63,0.6);
  \draw[line width=0.25pt, draw=cN!22] (-0.63,0.3) -- (0.63,0.3);
  \draw[line width=0.25pt, draw=cN!22] (-0.63,0.0) -- (0.63,0.0);
  \draw[line width=0.25pt, draw=cN!22] (-0.63,-0.6) -- (0.63,-0.6);
  \draw[line width=0.55pt, draw=cS] (-0.420,0.6) -- (-0.420,+0.499);
  \draw[line width=0.55pt, draw=cS] (0.000,0.6) -- (0.000,+0.540);
  \draw[line width=0.55pt, draw=cS] (0.420,0.6) -- (0.420,+0.557);
  \draw[line width=0.55pt, draw=cS] (-0.420,0.3) -- (-0.420,+0.224);
  \draw[line width=0.55pt, draw=cS] (0.000,0.3) -- (0.000,+0.214);
  \draw[line width=0.55pt, draw=cS] (0.420,0.3) -- (0.420,+0.309);
  \draw[line width=0.55pt, draw=cS] (-0.525,0.0) -- (-0.525,+0.064);
  \draw[line width=0.55pt, draw=cS] (-0.315,0.0) -- (-0.315,-0.028);
  \draw[line width=0.55pt, draw=cS] (-0.105,0.0) -- (-0.105,-0.057);
  \draw[line width=0.55pt, draw=cS] (0.105,0.0) -- (0.105,+0.038);
  \draw[line width=0.55pt, draw=cS] (0.315,0.0) -- (0.315,+0.056);
  \draw[line width=0.55pt, draw=cS] (0.525,0.0) -- (0.525,-0.033);
  \draw[line width=0.55pt, draw=cS] (-0.578,-0.6) -- (-0.578,-0.553);
  \draw[line width=0.55pt, draw=cS] (-0.473,-0.6) -- (-0.473,-0.530);
  \draw[line width=0.55pt, draw=cS] (-0.367,-0.6) -- (-0.367,-0.626);
  \draw[line width=0.55pt, draw=cS] (-0.263,-0.6) -- (-0.263,-0.660);
  \draw[line width=0.55pt, draw=cS] (-0.158,-0.6) -- (-0.158,-0.564);
  \draw[line width=0.55pt, draw=cS] (-0.052,-0.6) -- (-0.052,-0.571);
  \draw[line width=0.55pt, draw=cS] (0.052,-0.6) -- (0.052,-0.528);
  \draw[line width=0.55pt, draw=cS] (0.157,-0.6) -- (0.157,-0.679);
  \draw[line width=0.55pt, draw=cS] (0.262,-0.6) -- (0.262,-0.653);
  \draw[line width=0.55pt, draw=cS] (0.367,-0.6) -- (0.367,-0.661);
  \draw[line width=0.55pt, draw=cS] (0.473,-0.6) -- (0.473,-0.695);
  \draw[line width=0.55pt, draw=cS] (0.578,-0.6) -- (0.578,-0.580);
  \fill[cN!70] (0,-0.22) circle (0.5pt);
  \fill[cN!70] (0,-0.3) circle (0.5pt);
  \fill[cN!70] (0,-0.38) circle (0.5pt);
\end{scope}

\begin{scope}[shift={(4.7,0.45)}]
  \draw[line width=0.25pt, draw=cN!22] (-0.63,0.6) -- (0.63,0.6);
  \draw[line width=0.25pt, draw=cN!22] (-0.63,0.3) -- (0.63,0.3);
  \draw[line width=0.25pt, draw=cN!22] (-0.63,0.0) -- (0.63,0.0);
  \draw[line width=0.25pt, draw=cN!22] (-0.63,-0.6) -- (0.63,-0.6);
  \draw[line width=0.55pt, draw=cS] (-0.420,0.6) -- (-0.420,+0.514);
  \draw[line width=0.55pt, draw=cS] (0.000,0.6) -- (0.000,+0.519);
  \draw[line width=0.55pt, draw=cS] (0.420,0.6) -- (0.420,+0.656);
  \draw[line width=0.55pt, draw=cS] (-0.420,0.3) -- (-0.420,+0.216);
  \draw[line width=0.55pt, draw=cS] (0.000,0.3) -- (0.000,+0.249);
  \draw[line width=0.55pt, draw=cS] (0.420,0.3) -- (0.420,+0.338);
  \draw[line width=0.55pt, draw=cS] (-0.525,0.0) -- (-0.525,+0.027);
  \draw[line width=0.55pt, draw=cS] (-0.315,0.0) -- (-0.315,-0.028);
  \draw[line width=0.55pt, draw=cS] (-0.105,0.0) -- (-0.105,-0.035);
  \draw[line width=0.55pt, draw=cS] (0.105,0.0) -- (0.105,-0.014);
  \draw[line width=0.55pt, draw=cS] (0.315,0.0) -- (0.315,+0.049);
  \draw[line width=0.55pt, draw=cS] (0.525,0.0) -- (0.525,+0.026);
  \draw[line width=0.55pt, draw=cS] (-0.578,-0.6) -- (-0.578,-0.555);
  \draw[line width=0.55pt, draw=cS] (-0.473,-0.6) -- (-0.473,-0.551);
  \draw[line width=0.55pt, draw=cS] (-0.367,-0.6) -- (-0.367,-0.617);
  \draw[line width=0.55pt, draw=cS] (-0.263,-0.6) -- (-0.263,-0.638);
  \draw[line width=0.55pt, draw=cS] (-0.158,-0.6) -- (-0.158,-0.575);
  \draw[line width=0.55pt, draw=cS] (-0.052,-0.6) -- (-0.052,-0.576);
  \draw[line width=0.55pt, draw=cS] (0.052,-0.6) -- (0.052,-0.542);
  \draw[line width=0.55pt, draw=cS] (0.157,-0.6) -- (0.157,-0.664);
  \draw[line width=0.55pt, draw=cS] (0.262,-0.6) -- (0.262,-0.641);
  \draw[line width=0.55pt, draw=cS] (0.367,-0.6) -- (0.367,-0.642);
  \draw[line width=0.55pt, draw=cS] (0.473,-0.6) -- (0.473,-0.674);
  \draw[line width=0.55pt, draw=cS] (0.578,-0.6) -- (0.578,-0.585);
  \fill[cN!70] (0,-0.22) circle (0.5pt);
  \fill[cN!70] (0,-0.3) circle (0.5pt);
  \fill[cN!70] (0,-0.38) circle (0.5pt);
\end{scope}

\begin{scope}[shift={(8.25,0.45)}]
  \draw[line width=0.25pt, draw=cN!22] (-0.63,0.6) -- (0.63,0.6);
  \draw[line width=0.25pt, draw=cN!22] (-0.63,0.3) -- (0.63,0.3);
  \draw[line width=0.25pt, draw=cN!22] (-0.63,0.0) -- (0.63,0.0);
  \draw[line width=0.25pt, draw=cN!22] (-0.63,-0.6) -- (0.63,-0.6);
  \draw[line width=0.55pt, draw=cS] (-0.420,0.6) -- (-0.420,+0.531);
  \draw[line width=0.55pt, draw=cS] (0.000,0.6) -- (0.000,+0.501);
  \draw[line width=0.55pt, draw=cS] (0.420,0.6) -- (0.420,+0.701);
  \draw[line width=0.55pt, draw=cS] (-0.420,0.3) -- (-0.420,+0.208);
  \draw[line width=0.55pt, draw=cS] (0.000,0.3) -- (0.000,+0.330);
  \draw[line width=0.55pt, draw=cS] (0.420,0.3) -- (0.420,+0.355);
  \draw[line width=0.55pt, draw=cS] (-0.525,0.0) -- (-0.525,-0.043);
  \draw[line width=0.55pt, draw=cS] (-0.315,0.0) -- (-0.315,-0.027);
  \draw[line width=0.55pt, draw=cS] (-0.105,0.0) -- (-0.105,+0.014);
  \draw[line width=0.55pt, draw=cS] (0.105,0.0) -- (0.105,-0.046);
  \draw[line width=0.55pt, draw=cS] (0.315,0.0) -- (0.315,+0.042);
  \draw[line width=0.55pt, draw=cS] (0.525,0.0) -- (0.525,+0.055);
  \draw[line width=0.55pt, draw=cS] (-0.578,-0.6) -- (-0.578,-0.558);
  \draw[line width=0.55pt, draw=cS] (-0.473,-0.6) -- (-0.473,-0.582);
  \draw[line width=0.55pt, draw=cS] (-0.367,-0.6) -- (-0.367,-0.597);
  \draw[line width=0.55pt, draw=cS] (-0.263,-0.6) -- (-0.263,-0.590);
  \draw[line width=0.55pt, draw=cS] (-0.158,-0.6) -- (-0.158,-0.593);
  \draw[line width=0.55pt, draw=cS] (-0.052,-0.6) -- (-0.052,-0.582);
  \draw[line width=0.55pt, draw=cS] (0.052,-0.6) -- (0.052,-0.559);
  \draw[line width=0.55pt, draw=cS] (0.157,-0.6) -- (0.157,-0.646);
  \draw[line width=0.55pt, draw=cS] (0.262,-0.6) -- (0.262,-0.627);
  \draw[line width=0.55pt, draw=cS] (0.367,-0.6) -- (0.367,-0.612);
  \draw[line width=0.55pt, draw=cS] (0.473,-0.6) -- (0.473,-0.649);
  \draw[line width=0.55pt, draw=cS] (0.578,-0.6) -- (0.578,-0.590);
  \fill[cN!70] (0,-0.22) circle (0.5pt);
  \fill[cN!70] (0,-0.3) circle (0.5pt);
  \fill[cN!70] (0,-0.38) circle (0.5pt);
\end{scope}

\begin{scope}[shift={(11.8,0.45)}]
  \draw[line width=0.25pt, draw=cN!22] (-0.63,0.6) -- (0.63,0.6);
  \draw[line width=0.25pt, draw=cN!22] (-0.63,0.3) -- (0.63,0.3);
  \draw[line width=0.25pt, draw=cN!22] (-0.63,0.0) -- (0.63,0.0);
  \draw[line width=0.25pt, draw=cN!22] (-0.63,-0.6) -- (0.63,-0.6);
  \draw[line width=0.55pt, draw=cS] (-0.420,0.6) -- (-0.420,+0.548);
  \draw[line width=0.55pt, draw=cS] (0.000,0.6) -- (0.000,+0.488);
  \draw[line width=0.55pt, draw=cS] (0.420,0.6) -- (0.420,+0.730);
  \draw[line width=0.55pt, draw=cS] (-0.420,0.3) -- (-0.420,+0.202);
  \draw[line width=0.55pt, draw=cS] (0.000,0.3) -- (0.000,+0.368);
  \draw[line width=0.55pt, draw=cS] (0.420,0.3) -- (0.420,+0.368);
  \draw[line width=0.55pt, draw=cS] (-0.525,0.0) -- (-0.525,-0.070);
  \draw[line width=0.55pt, draw=cS] (-0.315,0.0) -- (-0.315,-0.027);
  \draw[line width=0.55pt, draw=cS] (-0.105,0.0) -- (-0.105,+0.041);
  \draw[line width=0.55pt, draw=cS] (0.105,0.0) -- (0.105,-0.065);
  \draw[line width=0.55pt, draw=cS] (0.315,0.0) -- (0.315,+0.034);
  \draw[line width=0.55pt, draw=cS] (0.525,0.0) -- (0.525,+0.074);
  \draw[line width=0.55pt, draw=cS] (-0.578,-0.6) -- (-0.578,-0.560);
  \draw[line width=0.55pt, draw=cS] (-0.473,-0.6) -- (-0.473,-0.632);
  \draw[line width=0.55pt, draw=cS] (-0.367,-0.6) -- (-0.367,-0.583);
  \draw[line width=0.55pt, draw=cS] (-0.263,-0.6) -- (-0.263,-0.560);
  \draw[line width=0.55pt, draw=cS] (-0.158,-0.6) -- (-0.158,-0.618);
  \draw[line width=0.55pt, draw=cS] (-0.052,-0.6) -- (-0.052,-0.590);
  \draw[line width=0.55pt, draw=cS] (0.052,-0.6) -- (0.052,-0.579);
  \draw[line width=0.55pt, draw=cS] (0.157,-0.6) -- (0.157,-0.625);
  \draw[line width=0.55pt, draw=cS] (0.262,-0.6) -- (0.262,-0.605);
  \draw[line width=0.55pt, draw=cS] (0.367,-0.6) -- (0.367,-0.568);
  \draw[line width=0.55pt, draw=cS] (0.473,-0.6) -- (0.473,-0.615);
  \draw[line width=0.55pt, draw=cS] (0.578,-0.6) -- (0.578,-0.596);
  \fill[cN!70] (0,-0.22) circle (0.5pt);
  \fill[cN!70] (0,-0.3) circle (0.5pt);
  \fill[cN!70] (0,-0.38) circle (0.5pt);
\end{scope}

\draw[vertd] (1.15,1.58)  -- (1.15,2.08);
\draw[vertd] (4.70,1.58)  -- (4.70,2.08);
\draw[vertd] (8.25,1.58)  -- (8.25,2.08);
\draw[vert]  (11.80,1.58) -- (11.80,2.08);
\node[tag,anchor=east] at (1.08,1.83)  {IDWT};
\node[tag,anchor=east] at (4.63,1.83)  {IDWT};
\node[tag,anchor=east] at (8.18,1.83)  {IDWT};
\node[tag,anchor=east] at (11.73,1.83) {IDWT};

\node[font=\small, text=cN, inner sep=2pt] (e1) at (2.925,0.45)  {$\cdots$};
\node[font=\small, text=cN, inner sep=2pt] (e2) at (10.025,0.45) {$\cdots$};

\draw[flow] (2.15,0.45) -- (e1);
\draw[flow] (e1) -- (c1);
\draw[flow] (5.70,0.45) -- node[tag,above,align=center,pos=0.44]{learned \\ velocity\\[-1pt] $\vth(z_{t_k},t_k)$} (c2);
\draw[flow] (9.25,0.45) -- (e2);
\draw[flow] (e2) -- (c3);

\node[font=\small, text=cN] at (1.15,-0.58)  {$z_0$};
\node[font=\small, text=cN] at (4.70,-0.58)  {$z_{t_k}$};
\node[font=\small, text=cN] at (8.25,-0.58)  {$z_{t_{k+1}}$};
\node[font=\small, text=cN] at (11.80,-0.58) {$z_1$};
\node[ax, anchor=north] at (1.15,-0.78)  {pure noise};
\node[ax, anchor=north] at (11.80,-0.78) {generated sample};

\node[tag,anchor=east] at (0.33,1.05)  {$a_J$};
\node[tag,anchor=east] at (0.33,0.75)  {$d_J$};
\node[tag,anchor=east] at (0.33,0.45)  {$d_{J-1}$};
\node[tag,anchor=east] at (0.33,-0.15) {$d_1$};

\draw[decorate, decoration={brace, amplitude=3.5pt, raise=1pt},
      line width=0.4pt, draw=cN!85] (2.12,-0.13) -- (1.90,-0.35);
\node[font=\tiny, text=cN, anchor=west, inner sep=1pt] at (2.15,-0.36) {$F$ channels};

\node[ax, anchor=east] at (-0.60,2.68) {time domain};
\node[ax, anchor=east] at (-0.60,0.45) {wavelet domain};

\end{tikzpicture}

%% file: figures/figure_architecture.tex
\begin{tikzpicture}[
  io/.style   = {draw=cN!65, fill=cN!8, rounded corners=2pt, line width=0.4pt,
                 align=center, font=\scriptsize, inner sep=3pt, minimum height=6.5mm},
  mod/.style  = {draw=cS!65, fill=cS!8, rounded corners=2pt, line width=0.4pt,
                 align=center, font=\scriptsize, inner sep=3pt, minimum height=6.0mm},
  modC/.style = {draw=cS!65, fill=cS!8, rounded corners=2pt, line width=0.4pt,
                 align=center, font=\scriptsize, inner sep=2.2pt, minimum height=5.2mm},
  emb/.style  = {draw=cC!65, fill=cC!10, rounded corners=2pt, line width=0.4pt,
                 align=center, font=\scriptsize, inner sep=3pt, minimum height=6.5mm},
  ada/.style  = {draw=cA!70, fill=cA!12, rounded corners=2pt, line width=0.4pt,
                 align=center, font=\scriptsize, inner sep=3pt, minimum height=6.5mm},
  opC/.style  = {circle, draw=cC!70, fill=cC!10, line width=0.4pt, inner sep=0pt,
                 minimum size=4mm, font=\tiny},
  opS/.style  = {circle, draw=cS!70, fill=cS!8, line width=0.4pt, inner sep=0pt,
                 minimum size=3.0mm, font=\tiny},
  a/.style    = {-{Stealth[length=1.7mm]}, line width=0.4pt, draw=cN!80},
  ac/.style   = {-{Stealth[length=1.7mm]}, line width=0.4pt, draw=cC!75},
  as/.style   = {-{Stealth[length=1.7mm]}, line width=0.4pt, draw=cS!70},
  ta/.style   = {-{Stealth[length=1.7mm]}, line width=0.4pt, draw=cA!85},
  lab/.style  = {font=\scriptsize, text=cN, inner sep=1.5pt},
  capt/.style = {font=\small, inner sep=2pt}
]


\node[io, minimum width=30mm] (a1) at (1.9,0.00)
  {coefficients $z\in\R^{T\times F}$};

\node[emb, minimum width=30mm] (a2) at (1.9,-1.05)
  {token construction (b)};

\node[mod, minimum width=30mm] (a3) at (1.9,-2.10)
  {transformer block (c)};

\node[ada, minimum width=30mm] (a4) at (1.9,-3.15)
  {adaptive read-out};

\node[emb, minimum width=30mm] (a5) at (1.9,-4.20)
  {per-level heads $\mathbf{H}_j$};

\node[io, minimum width=30mm] (a6) at (1.9,-5.25)
  {velocity $\vth(z,t)\in\R^{T\times F}$};

\draw[a] (a1) -- (a2);
\draw[a] (a2) -- (a3);
\draw[a] (a3) -- (a4);
\draw[a] (a4) -- (a5);
\draw[a] (a5.south) -- (a6.north);

\draw[as]
  (a3.south east)
  .. controls +(0.43,-0.07) and +(0.43,0.07) ..
  (a3.north east);

\node[lab, anchor=west] at (3.70,-2.10) {$\times B$};

\draw[ta] (0.10,-2.10) -- (0.35,-2.10);
\draw[ta] (0.10,-3.15) -- (0.35,-3.15);

\node[lab, anchor=east, text=cA] at (0.05,-2.10) {$t$};
\node[lab, anchor=east, text=cA] at (0.05,-3.15) {$t$};

\node[capt] at (1.9,-6.15) {(a) velocity network};


\foreach \x/\h/\lb in {
  5.6/0.26/{$z_{I_{J+1},f}$},
  7.0/0.26/{$z_{I_J,f}$},
  9.2/0.50/{$z_{I_2,f}$},
  10.6/1.00/{$z_{I_1,f}$}
}{
  \draw[draw=cC!32, fill=cC!16, line width=0.30pt]
       (\x-0.08,-0.76) rectangle (\x+0.36,-0.76+\h);

  \draw[draw=cC!50, fill=cC!30, line width=0.32pt]
       (\x-0.15,-0.83) rectangle (\x+0.29,-0.83+\h);

  \draw[draw=cC!70, fill=cC!45, line width=0.35pt]
       (\x-0.22,-0.90) rectangle (\x+0.22,-0.90+\h);

  \node[lab] at (\x,-1.17) {\lb};
}

\node[lab, text=cN] at (8.1,-0.75) {$\cdots$};

\draw[
  decorate,
  decoration={brace, amplitude=3pt, raise=1pt},
  line width=0.35pt,
  draw=cN!85
]
  (8.95,-0.48) -- (9.17,-0.22);

\node[font=\tiny, text=cN, anchor=south east, inner sep=1pt]
  at (9.00,-0.18) {$F$ channels};

\node[lab, anchor=south] at (5.6,-0.30)
  {heights $\propto L_j$};

\foreach \x/\nm/\j in {
  5.6/EA/{J+1},
  7.0/EB/{J},
  9.2/EC/{2},
  10.6/ED/{1}
}{
  \node[emb, minimum width=13mm] (\nm) at (\x,-1.85)
       {$\mathbf{W}^{\mathrm{emb}}_{\j}$};

  \draw[ac] (\x,-1.33) -- (\x,-1.54);
}

\node[lab, text=cN] at (8.1,-1.85) {$\cdots$};

\node[opC] (sum) at (8.1,-2.85) {$+$};

\foreach \x in {5.6,7.0,9.2,10.6}{
  \draw[ac] (\x,-2.17) -- (sum);
}

\node[emb, minimum width=20mm] (ef) at (10.8,-2.85)
  {channel id $e_f$};

\draw[ac] (ef) -- (sum);

\node[emb, minimum width=32mm] (tok) at (8.1,-4.00)
  {token $u^{(0)}_f\in\R^{d_{\mathrm{m}}}$};

\draw[ac] (sum) -- (tok);

\node[emb, minimum width=44mm] (stk) at (8.1,-5.25)
  {$u^{(0)}=\bigl(u^{(0)}_1,\dots,u^{(0)}_F\bigr)^{\!\top}
   \in\R^{F\times d_{\mathrm{m}}}$};

\draw[ac]
  (tok) --
  node[lab,right,inner sep=3pt] {stack over $f=1,\dots,F$}
  (stk);

\node[capt] at (8.1,-6.15) {(b) token construction};


\def\xc{14.1}

\node[modC, minimum width=24mm] (b0) at (\xc,0.00)
  {$u^{(i)}\in\R^{F\times d_{\mathrm{m}}}$};

\node[modC, minimum width=24mm] (b1) at (\xc,-0.82)
  {AdaLN-Zero};

\node[modC, minimum width=24mm] (b2) at (\xc,-1.68)
  {self-attention\\over $F$ tokens};

\node[opS] (b3) at (\xc,-2.46) {$+$};

\node[modC, minimum width=24mm] (b4) at (\xc,-3.22)
  {AdaLN-Zero};

\node[modC, minimum width=24mm] (b5) at (\xc,-4.02)
  {MLP};

\node[opS] (b6) at (\xc,-4.76) {$+$};

\node[modC, minimum width=24mm] (b7) at (\xc,-5.45)
  {$u^{(i+1)}$};

\draw[as] (b0) -- (b1);
\draw[as] (b1) -- (b2);
\draw[as] (b2) -- node[lab,right] {$g_1$} (b3);
\draw[as] (b3) -- (b4);
\draw[as] (b4) -- (b5);
\draw[as] (b5) -- node[lab,right] {$g_2$} (b6);
\draw[as] (b6) -- (b7);

\def\xskip{15.70}

\draw[as]
  (b0.east)
  -- (\xskip,0.00)
  -- (\xskip,-2.46)
  -- (b3.east);

\draw[as]
  (b3.east)
  -- (\xskip,-2.46)
  -- (\xskip,-4.76)
  -- (b6.east);

\draw[ta] ($(b1.west)+(-0.36,0)$) -- (b1.west);
\node[lab, anchor=east, text=cA]
  at ($(b1.west)+(-0.41,0)$) {$t$};

\draw[ta] ($(b4.west)+(-0.36,0)$) -- (b4.west);
\node[lab, anchor=east, text=cA]
  at ($(b4.west)+(-0.41,0)$) {$t$};

\node[capt] at (\xc,-6.15)
  {(c) transformer block};

\end{tikzpicture}